\documentclass{article}

\usepackage{arxiv}

\usepackage[utf8]{inputenc} % allow utf-8 input
\usepackage[T1]{fontenc}    % use 8-bit T1 fonts
\usepackage[colorlinks=true]{hyperref}       % hyperlinks
\usepackage{url}            % simple URL typesetting
\usepackage{booktabs}       % professional-quality tables
\usepackage{amsfonts}       % blackboard math symbols
\usepackage{nicefrac}       % compact symbols for 1/2, etc.
\usepackage{microtype}      % microtypography
\usepackage{lipsum}
\usepackage{graphicx}
\graphicspath{ {./images/} }

\usepackage{amsthm}
\newtheorem{theorem}{Theorem}
\newtheorem{lemma}{Lemma}
\newtheorem{definition}{Definition}
\newtheorem{proposition}{Proposition}

%% my packages %%
\bibliographystyle{arxiv}

\usepackage{microtype}
\usepackage{graphicx}
\usepackage{booktabs} % for professional tables
\usepackage{amsfonts,amsmath,amssymb,amsthm,boxedminipage,color,url}
\usepackage[ruled, vlined,linesnumbered]{algorithm2e} % For algorithms
\usepackage{algpseudocode}
\usepackage[titletoc,title]{appendix}

\usepackage{amsmath}
\usepackage{amsthm}

\DeclareMathOperator*{\argmin}{\arg\min}

\newcommand{\secref}[1]{Section \ref{#1}}

\newcommand{\sazulay}[1]{\textcolor{cyan}{\bfseries\{SA: #1\}}}
\newcommand{\Xomit}[1]{}
\newcommand\remove[1]{}

% Mor's commands
\newcommand{\norm}[1]{\left\Vert#1\right\Vert}
\newcommand{\cL}{\mathcal{L}}
\newcommand{\rb}[1]{\left(#1\right)} % round brackets
 % box brackets
 % curly brackets
\newcommand{\vect}[1]{\mathbf{#1}} % curly brackets
\newcommand{\vw}{\mathbf{w}}

\newcommand{\R}{\mathbb{R}}

\newcommand{\bw}[0]{\mathbf{w}}
\newcommand{\commentedout}[1]{}

\title{On the Implicit Bias of Initialization Shape: \\
Beyond Infinitesimal Mirror Descent}

\author{
 Shahar Azulay \\
  Tel Aviv University \\
   \And
 Edward Moroshko \\
  Technion\\
   \And
 Mor Shpigel Nacson \\
  Technion\\
   \And
 Blake Woodworth \\
  Toyota Technological\\
  Institute at Chicago\\
   \And
 Nathan Srebro \\
  Toyota Technological\\
  Institute at Chicago\\
  \And
 Amir Globerson \\
  Tel Aviv University \\
  \And
Daniel Soudry \\
  Technion\\
}

\begin{document}
\hypersetup{%
    ,citecolor=[rgb]{0,0.08,0.45}
    ,linkcolor=[rgb]{0,0.08,0.45}
    }
\maketitle
\begin{abstract}
Recent work has highlighted the role of initialization scale in determining the 
structure of the solutions that gradient methods converge to. In particular, it was shown that large initialization leads to the neural tangent kernel regime solution, whereas small initialization leads to so called ``rich regimes''. However, the initialization structure is richer than the overall scale alone and involves relative magnitudes of different weights and layers in the network. Here we show that these relative scales, which we refer to as initialization shape, play an important role in determining the learned model. We develop a novel technique for deriving the inductive bias of gradient-flow and use it to obtain closed-form implicit regularizers for multiple cases of interest.

% From  perspective, our results underscore the importance of balanced initialization for facilitating faster convergence to rich regime solutions. \sazulay{not sure this last sentence really holds from the results. (on general models).}
%[...]

\end{abstract}

\section{Introduction}

Gradient descent (GD) is the main optimization tool used in deep learning. A wealth of recent work has highlighted the key role of this specific algorithm in the generalization performance of the learned model, when it is over-parameterized. Namely, the solutions that gradient descent converges to do not merely minimize the training error, but rather reflect the specific implicit biases of the optimization algorithm. 

In light of this role for GD, many works have attempted to precisely characterize the implicit bias of GD in over-parameterized models. Technically, these exact characterizations amount to identifying a function $Q(\vw)$ of the model parameters $\vw$ such that GD converges to a minimizer (or, more generally, a stationary point) of $Q(\vw)$ under the constraint of having zero training error. The form of $Q(\vw)$ can depend on various hyper-parameters (e.g., initialization, architecture, depth) and its dependence sheds light on how these hyper-parameters affect the final solution. This approach worked very well in several regimes. 

The first regime is the "Neural Tangent Kernel" (NTK) regime, which arises in networks that have an unrealistically large width \cite{du2018gradient, jacot2018neural, nguyen2021proof} or initialization scale  \cite{chizat2019lazy}. In this regime, networks converge to a linear predictor where the features are not learned, but determined by the initialization (via the so-called ``Tangent Kernel''), and in this case $Q(\vw)$ is just the RKHS norm for the linear predictor. Therefore, it is not surprising that models trained in this regime typically do not achieve state-of-the-art empirical performance in challenging datasets where deep networks perform well. Accordingly, this regime is typically considered to be less useful for explaining the success of deep learning.

The second regime is the diametrically opposed ``rich'' regime, which was analyzed specifically for classification problems with vanishing loss \cite{lyu2020gradient,chizat2020implicit}. In this regime, the parameters converge to a stationary point (or sometimes a global minimum) of the optimization problem for minimizing $Q(\vw)=||\vw||^2$ subject to margin constraints. This has been shown, under various assumptions, for linear neural networks \cite{gunasekar2018implicit, ji2019gradient} and non-linear neural networks \cite{nacson2019lexicographic, Lyu2020GradientDM, chizat2020implicit}. This regime is arguably more closely related to the performance of practical neural networks but, as \citet{Moroshko2020ImplicitBI} show, reaching  this regime requires unrealistically small loss values, even in toy problems.

Understanding the implicit bias in more realistic and practically relevant regimes remains challenging in models with more than one weight layer. Current results are restricted to very simple models such as diagonal linear neural networks with shared weights in regression \cite{Woodworth2020KernelAR} and classification \cite{Moroshko2020ImplicitBI}, as well as generalized tensor formulations of networks \cite{yun2021a}. These results show exactly how the initialization scale determines the implicit bias of the model. However, these models are quite limited. For example, when the weights in different layers are shared, we cannot understand how the relative scale between layers affects the implicit bias.

Extending these exact results to a more realistic architectures is a considerable technical challenge. In fact, recent work has provided negative results with the square loss, for ReLU networks (even with a single neuron) \cite{Vardi2020ImplicitRI} and for matrix factorization \cite{razin2020implicit, li2021towards}. Thus, finding scenarios where such a characterization of the implicit bias is possible and deriving its exact form is an open question, which we address here, making progress towards more realistic models.  

% Intuitively, in order to obtain implicit bias results, one needs to solve the gradient dynamics and then show that the iterate $\vw$ at convergence is a stationary point of some $Q(\vw)$. This however is hard to do, since the gradient dynamics is hard to solve, even for linear networks \amirg{give Bach result on linear nets that is limited to specific sub-cases}. 

%In this work, we propose a novel approach to overcome the challenge of integrating the dynamics via a trick that reduces the inductive bias problem to that of ... Hessian map .... We show that this approach is feasible where previous integration approaches were not \amirg{complete}

Previous work \citep{Woodworth2020KernelAR,gunasekar2018characterizing,yun2021a,Vaskevicius2019Optimal,AmidWinnowing,AmidReparameterizing} that analyzes the exact implicit bias in such scenarios mostly focuses on least squares regression. All these analyses can be shown to be equivalent to expressing the dynamics of the predictor (which is induced by gradient flow on the model parameters) as Infinitesimal Mirror Descent (IMD), where the implicit bias then follows from \citet{gunasekar2018characterizing}. This approach severely limits the model class that we can analyze because it is not always clear how to express the predictor dynamics as infinitesimal mirror descent. In fact, we can verify this is impossible to do even for basic models such as linear fully connected networks. 

\paragraph{Our Contributions:} In this work, we sidestep the above difficulty by developing a new method for characterizing the implicit bias and we apply it to obtain several new results:
\begin{itemize}
    \item We identify degrees of freedom that allow us to modify the dynamics of the model so that it can be understood as infinitesimal mirror descent, without changing its implicit bias.  In some cases, we show that this modification is equivalent to a non-linear ``time-warping'' (see Section \ref{sec:warping}).
    \item  Our approach facilitates the analysis of a strictly more general model class. This allows us to investigate the exact implicit bias for models that could not be analyzed using previous techniques. Specific examples include diagonal networks with untied weights, fully connected two-layer\footnote{By "two-layers" we mean two weight layers.} linear networks with vanishing initialization, and a two-layer single leaky ReLU neuron (see Sections \ref{section: two-layer diagonal linear networks}, \ref{sec:multi_neuron}, and \ref{sec: Non Linear Fully Connected Networks} respectively).
\end{itemize}

Our improved methodology is another step in the path toward analyzing the implicit bias in more realistic and complex models.  Also, by being able to handle models with additional complexities, it already allows us to extend the scope of phenomena we can understand, shedding light on the importance of the initialization structure to implicit bias. For example,
\begin{itemize}
    \item We show that the ratio between weights in different layers at initialization (the initialization ``shape'') has a marked effect on the learned model. We find how this property affects the final implicit bias (see Section \ref{sec:shape}).
    \item We prove that balanced initialization in diagonal linear nets improves convergence to the ``rich regime'', when the scale of the initialization vanishes (see Section \ref{sec:diagonal_shape}). 
    \item  For fully connected linear networks, we prove that vanishing initialization results in a simple $\ell_2$-norm implicit bias for the equivalent linear predictor. %However, under certain conditions, the weight matrix shows ``rich'' behavior in this limit, related to minimizing a $\ell_{1.5}$-norm (see Section \ref{sec:full_connected_shape}), which in turn suggests why neurons tend to ``condense'' in this regime.
\end{itemize}
Taken together, our analysis and results show the potential of our approach for discovering new implicit biases, and the insights these can provide about the effect of initialization on learned models. 

In what follows, Sections \ref{section: two-layer diagonal linear networks}-\ref{sec:multi_neuron} present derivations of implicit biases for several models of interest, and Section \ref{sec:shape} uses these results to study the effect of initialization shape and scale on the learned models. 
%\amirg{say something about empirical results. Ideally also on some form of ``real'' data}

\remove{
1. We are motivated by the question of how different hyperparameters affect implicit bias.

2. Previous works provide unsatisfactory answer - only for extreme (kernel, rich) regimes.

3. The works between kernel and rich regimes discuss only effect of initialization scale and only for simple diagonal linear networks.

4. We develop a new method that can be used to derive closed form implicit bias for different architectures.

5. We apply the method to diagonal (and convolutional?) and fully connected networks and calculate the implicit bias as a function of different hyperparameters (initialization scale, shape, LR).

6. Simulations demonstrate our results ?
}

\section{Preliminaries and Setup}

Given a dataset of $N$ samples $\mathbf{X} = \rb{\vect{x}^{(1)},\cdots,\vect{x}^{(N)}}\in\mathbb{R}^{d\times N}$ with $N$ corresponding scalar labels $\mathbf{y}=\rb{y^{(1)},\cdots,y^{(N)}}^\top\in\R^N$ and a parametric model $f\left(\mathbf{x}; \mathbf{\theta}\right)$ with parameters $\theta$,  we consider the problem of minimizing the square loss\footnote{The analysis in this paper can be extended to classification with the exp-loss along the lines of \citet{Moroshko2020ImplicitBI}.} 
\[
\mathcal{L}\rb{\theta}\triangleq\frac{1}{2N}\sum_{n=1}^{N}\left(y^{\left(n\right)}-f(\mathbf{x}^{(n)};\theta)\right)^2\,,
\]
using gradient descent with infinitesimally
small stepsize (i.e., gradient flow)
%for each of the learned parameter.
\[
\frac{d \theta}{dt}=-\nabla \mathcal{L}(\theta(t))~.
\]
We focus on overparameterized models, where there are many solutions that achieve zero training loss, and assume that the loss is indeed (globally) minimized by gradient flow.
%In this setting, characterizing the implicit regularization of different model architectures is an important milestone towards understanding deep learning.  A key contribution of this work is proposing a new method to derive a closed form expression of the implicit bias for different settings.
%In the next section, we introduce our method. 

\textbf{Notation} For vectors $\vect{u}, \vect{v}$, we denote by $\vect{u}\circ\vect{v}$ the element-wise multiplication. In addition, $\left\Vert \cdot\right\Vert$ is the $\ell_2$-norm.

\remove{
We follow \cite{Dutta2013ApproximateKP} in the definition of Karush-Kuhn-Tucker (KKT) conditions for non-smooth optimization problem.
\begin{definition} (KKT point)
\label{def: KKT point}
Consider the following optimization problem (P) for $\mathbf{x} \in \mathbb{R}^d$
\begin{align*}
&\min f(\mathbf{x})    \\
&\, \text{s.t.}\,\, g_n(\mathbf{x}) \leq 0\,\,\, \forall n \in [N]
\end{align*}
where $f, g_n: \mathbb{R}^d \xrightarrow{} \mathbb{R}$ are locally Lipschitz functions. We say $\mathbf{x} \in \mathbb{R}^d$ is a feasible point of (P) if $g_n(\mathbf{x}) \leq 0\,\,\, \forall n \in [N]$.

Further, a feasible point $\mathbf{x} \in \mathbb{R}^d$ is a KKT point if $\mathbf{x}$ satisfies the KKT
Current file
Overview
21
 conditions:
\begin{align*}
    \exists \lambda_1,..., \lambda_N &\geq 0 \,\,\, \text{s.t.}\\
    &1.\,\, 0 \in \partial^o f(\mathbf{x}) + \sum_{n \in [N]}\lambda_n\partial^o g_n(\mathbf{x}) \\
    &2.\,\, \forall n \in [N]: \,\, \lambda_n g_n(\mathbf{x}) = 0
\end{align*}
\end{definition}
}

\section{Background: Deriving the Implicit Bias Using Infinitesimal Mirror Descent}
\label{section: general approach for deriving implicit bias}

%\mnote{We switch from $\nu$ to $f$ in this section and we also use $\nu$ in a different context here. Might be confusing.} \sazulay{I agree. I think we should remove $\nu$ from the model output definitions.}
% Characterizing the implicit regularization of different model architectures is an important milestone towards understanding deep learning. A key contribution of this work is a new method to derive a closed form expression of the implicit bias for different settings.

% We focus on linear models that can be written as $f(\mathbf{x};\theta)=\tilde{\mathbf{w}}^{\top}\mathbf{x}$ where $\tilde{\mathbf{w}}$ is the equivalent linear predictor. Note that the model is linear in the input $\mathbf{x}$ but not in the parameters $\theta$.

% \mnote{I changed the next paragraph a bit (the old one is commented). I think it looks better like this and it's the same number of lines).}
We begin by describing the crux of current approaches to implicit bias analysis, and in \secref{sec:warping} describe our ``warping'' approach that significantly extends these.

We focus on linear models that can be written as \[f(\mathbf{x};\theta)=\tilde{\mathbf{w}}^{\top}\mathbf{x}\,,\] where $\tilde{\mathbf{w}}=\tilde{\mathbf{w}}(\theta)$ is the equivalent linear predictor. Note that the model is linear in the input $\mathbf{x}$ but \emph{not} in the parameters $\theta$. In Section~\ref{sec: Non Linear Fully Connected Networks}, we show that our method can also be extended to non-linear models.

% For overparameterized models, where there are many solutions that achieve zero training loss, our goal is to find a strictly convex function $Q(\tilde{\mathbf{w}})$ that captures the implicit regularization in a sense that the limit point of gradient flow $\tilde{\mathbf{w}}(\infty)$ is the solution (or a stationary point) of the following optimization problem
Our goal is to find a strictly convex function $Q(\tilde{\mathbf{w}})$ that captures the implicit regularization in the sense that the limit point of the gradient flow $\tilde{\mathbf{w}}(\infty)$ is the solution 
%(or a stationary point) \mnote{not necassary for linear models. Should mention this in section 7}
to the following optimization problem
% Our goal is to characterize the implicit regularization governing the limit point of the gradient flow $\tilde{\mathbf{w}}(\infty)$ in terms of a strictly convex potential $Q(\tilde{\mathbf{w}})$ as the solution of the following optimization problem
\begin{align}
\tilde{\mathbf{w}}(\infty)=\argmin_{\mathbf{w}} 
 Q(\mathbf{w}) ~~~ \text{s.t.} ~~~\mathbf{X}^{\top}\mathbf{w}=\mathbf{y}~.
\label{min_q_problem}
\end{align}
We now describe a method used in \citet{Moroshko2020ImplicitBI,Woodworth2020KernelAR,gunasekar2017implicit,AmidReparameterizing} for obtaining $Q$ (below, we explain that these use essentially the same approach), and in Section \ref{sec:warping} we present our novel approach.
%a method for deriving this $Q$ based on the gradient flow dynamics on $\tilde{\mathbf{w}}$.
The KKT optimality conditions for %\mnote{for?} 
% this optimization problem 
Eq.~\eqref{min_q_problem}
are that there exists $\boldsymbol{\nu}\in \mathbb{R}^{N}$ such that 
\begin{align}
\nabla Q(\tilde{\mathbf{w}}(\infty)) = \mathbf{X}\boldsymbol{\nu}  ~~~ \text{and} ~~~ \mathbf{X}^{\top}\tilde{\mathbf{w}}(\infty)=\mathbf{y}~.
\label{KKT_cond}
\end{align}
Note that if $Q$ is strictly convex, Eq.~\eqref{KKT_cond} is sufficient to ensure that $\tilde{\mathbf{w}}(\infty)$ is the global minimum of Eq.~\eqref{min_q_problem}. Therefore, our goal is to find a $Q$-function and $\boldsymbol{\nu}\in \mathbb{R}^{N}$ such that the limit point of gradient flow $\tilde{\mathbf{w}}(\infty)$ satisfies \eqref{KKT_cond}. Since we assumed that gradient flow converges to a zero-loss solution, we are only concerned with the stationarity condition 
\[\nabla Q(\tilde{\mathbf{w}}(\infty)) = \mathbf{X}\boldsymbol{\nu}\,.\]
%
% If $\tilde{\mathbf{w}}(\infty)$ satisfies eq. \eqref{KKT_cond} and $Q$ is strictly \edward{strongly?}\mnote{I think that strictly is sufficient here} convex, then $\tilde{\mathbf{w}}(\infty)$ is the global minimum of eq. \eqref{min_q_problem}. Therefore, our goal is to find a $Q$-function and $\boldsymbol{\nu}\in \mathbb{R}^{N}$ such that the limit point of gradient flow $\tilde{\mathbf{w}}(\infty)$ satisfies \eqref{KKT_cond}. Since we assume that gradient flow converges to a zero-loss solution \edward{say it at the beginning}, we are left with the stationarity condition $\nabla Q(\tilde{\mathbf{w}}(\infty)) = \mathbf{X}\boldsymbol{\nu}$.
%
% Next, assume that for all $t$ it holds that
For the models we consider, the dynamics on $\tilde{\mathbf{w}}(t)$ can be written as
\begin{align}
\frac{d\tilde{\mathbf{w}}(t)}{dt}=\mathbf{H}^{-1}(\tilde{\mathbf{w}}(t))\mathbf{X}\mathbf{r}(t)
\label{w_tilde_dynamics}    
\end{align}
for some $\mathbf{r}(t)\in\mathbb{R}^N$ and ``metric tensor'' $\mathbf{H}: \mathbb{R}^d \to \mathbb{R}^{d\times d}$, which is a positive definite matrix-valued function. 
%We discuss the validity of this assumption below.
%
% \textbf{Remark} This assumption holds, for example, for linear diagonal networks. \citet{Woodworth2020KernelAR,Moroshko2020ImplicitBI} showed that in this case gradient flow over the parameters $\theta(t)$ is equivalent to the natural (Riemannian) gradient flow \cite{gunasekar2018characterizing} over the equivalent linear predictor $\tilde{\mathbf{w}}(t)$, i.e.,
% \begin{align}
% \frac{d\tilde{\mathbf{w}}(t)}{dt}=-\nabla^2\psi(\tilde{\mathbf{w}}(t))^{-1}\nabla\mathcal{L}(\tilde{\mathbf{w}}(t))~.
% \label{natural}
% \end{align}
% Here $\psi(\mathbf{w})$ is a strongly convex potential function and the dynamics in eq. \eqref{natural} is also equivalent to mirror descent dynamics w.r.t $\psi$. In this case we can set $\mathbf{H}(\tilde{\mathbf{w}}(t))=\nabla^2(\psi(\tilde{\mathbf{w}}(t)))$ and since $-\nabla\mathcal{L}(\tilde{\mathbf{w}}(t))=\mathbf{X}\mathbf{r}(t)$, where $\mathbf{r}(t)=\mathbf{y}-\mathbf{X}^{\top}\tilde{\mathbf{w}}(t)$,  we get eq. \eqref{w_tilde_dynamics}. \edward{do we have factor $1/2$ in the loss definition?}\mnote{Yes. Maybe we should explicitly write $\ell$ to avoid confusion.}
%
% However, for more complex models, like fully-connected networks, we show below that it is impossible to write the dynamics over $\tilde{\mathbf{w}}(t)$ in the form  \eqref{natural} for any strictly convex potential function $\psi$.
%
% Given the general form \eqref{w_tilde_dynamics}, our method proceeds as follows. We write
In this case, we can write
\begin{align}
\mathbf{H}(\tilde{\mathbf{w}}(t))\frac{d\tilde{\mathbf{w}}(t)}{dt}=\mathbf{X}\mathbf{r}(t)
\label{Hw}
\end{align}
and if $\mathbf{H}(\tilde{\mathbf{w}}(t))=\nabla^2 Q(\tilde{\mathbf{w}}(t))$ for some $Q$, we get that 
% \begin{align*}
%     \frac{d}{dt}(\nabla Q(\tilde{\mathbf{w}}(t)))=\mathbf{X}\mathbf{r}(t) = -\nabla \mathcal{L}(\tilde{\mathbf{w}}).
% \end{align*}
\begin{align*}
    \frac{d}{dt}(\nabla Q(\tilde{\mathbf{w}}(t)))=\mathbf{X}\mathbf{r}(t).
\end{align*}
% \edward{do we need the "$= -\nabla \mathcal{L}(\tilde{\mathbf{w}})$" here ? Here the discussion is for "some" $\mathbf{r}(t)\in\mathbb{R}^N$}\mnote{I agree it's unnecessary. I'm removing it.}
Therefore,
\begin{align*}
\nabla Q(\tilde{\mathbf{w}}(t))-\nabla Q(\tilde{\mathbf{w}}(0))=\int_0^t\mathbf{X}\mathbf{r}(t')dt'.
\end{align*}
Denoting $\boldsymbol{\nu}=\int_0^{\infty}\mathbf{r}(t')dt'$, if $\nabla Q(\tilde{\mathbf{w}}(0))=0$ then
\begin{align*}
\nabla Q(\tilde{\mathbf{w}}(\infty))=\mathbf{X}\boldsymbol{\nu}~,
\end{align*}
which is the KKT stationarity condition. Thus, in this case, it is  possible to find the $Q$-function by solving the differential equation
\begin{align}
\mathbf{H}(\tilde{\mathbf{w}}(t))=\nabla^2 Q(\tilde{\mathbf{w}}(t)).
\label{Q_ode}
\end{align}
The aforementioned papers now proceed to solve the differential equation $\mathbf{H} = \nabla^2 Q$ for $Q$.  However, this proof strategy fundamentally relies on this differential equation having a solution, i.e., on $\mathbf{H}$ being a Hessian map. We emphasize that $\mathbf{H}$ being a Hessian map is a very special property, which does not hold for general positive definite matrix-valued functions.\footnote{Indeed, \citet{gunasekar2020mirrorless} show that the innocent-looking $\bw \mapsto I + \bw\bw^\top$ is provably not the Hessian of any function, which can be confirmed by checking the condition Eq.~\eqref{eq: Hessian-map condition}.} Indeed, Eq.~\eqref{Q_ode} only has a solution if $\mathbf{H}$ satisfies the Hessian-map condition \citep[e.g., see][]{gunasekar2020mirrorless}
\begin{align} \label{eq: Hessian-map condition}
    \forall_{i,j,k} : \frac{\partial\mathbf{H}_{i,j}(\mathbf{w})}{\partial \mathbf{w}_k}=\frac{\partial\mathbf{H}_{i,k}(\mathbf{w})}{\partial \mathbf{w}_j}~.
\end{align}
As we discuss in Section \ref{sec:multi_neuron}, this condition is not met for natural models like fully connected linear neural networks, and therefore a new approach is needed.

\subsection{Relation to Infinitesimal Mirror Descent}
%\amirg{This requires some editing to explain relation to natural gradient flow. We don't need to spend too much text on the various methods. Can just say they use the same idea in different guises}\dnote{this comment is obsolete now, right?}
%\amirg{Not sure we need this here, as it mostly consolidates previous approaches, but not needed for our presentation}\amirg{In any case, we need to explain the connection to natural gradient since we are using this terminology}
%\amirg{I think this can go to the appendix. We can just refer to the app when saying the above approach was used in several papers} 
The approach described above is a different presentation of the equivalent view of \citet{gunasekar2018characterizing}. They show that when the dynamics on $\tilde{\mathbf{w}}$ can be expressed as ``Infinitesimal Mirror Descent'' (IMD) with respect to a strongly convex potential $\psi$
\begin{align}
\frac{d\tilde{\mathbf{w}}(t)}{dt}=-\nabla^2\psi(\tilde{\mathbf{w}}(t))^{-1}\nabla\mathcal{L}(\tilde{\mathbf{w}}(t))~,
\label{natural}
\end{align}
then the limit point $\tilde{\mathbf{w}}(\infty)$ is described by 
\[
\tilde{\mathbf{w}}(\infty)=\argmin_{\mathbf{w}} D_{\psi}(\mathbf{w},\mathbf{w}(0)) ~~~ \text{s.t.} ~~~\mathbf{X}^{\top}\mathbf{w}=\mathbf{y}~,
\]
where $D_{\psi}(\mathbf{w},\mathbf{w}')=\psi(\mathbf{w})-\psi(\mathbf{w}')-\langle\nabla \psi(\mathbf{w}'),\mathbf{w}-\mathbf{w}'\rangle$ is the Bregman divergence associated with $\psi$. Furthermore, when $\tilde{\mathbf{w}}$ is initialized with $\nabla \psi(\tilde{\mathbf{w}}(0)) = 0$, then 
\[
\tilde{\mathbf{w}}(\infty)=\argmin_{\mathbf{w}} \psi(\mathbf{w}) ~~~ \text{s.t.} ~~~\mathbf{X}^{\top}\mathbf{w}=\mathbf{y}~.
\]
Comparing Eqs.~\eqref{w_tilde_dynamics} and \eqref{natural}, we see that the infinitesimal mirror descent view is equivalent to the approach we have described, with $\psi$ corresponding exactly to $Q$.

Although it may have been presented in different ways, these analysis techniques have formed the basis for all of the existing exact\footnote{There are some statistical (i.e. non-exact) results for matrix factorization with vanishing initialization under certain data assumptions \cite{li2018algorithmic}.} characterizations of implicit bias for linear models with square loss (outside of the NTK regime) that we are aware of (e.g. \citet{gunasekar2017implicit,Woodworth2020KernelAR,AmidWinnowing,Moroshko2020ImplicitBI}). In \secref{sec:warping} we show how to extend this analysis to cases where $\mathbf{H}$ is not a Hessian map.

\remove{For example, \citet{gunasekar2017implicit} used an analogous approach to analyze matrix factorization in the special case of commutative measurements; \citet{Woodworth2020KernelAR} used a related technique to study ``diagonal linear networks'' with tied layers (see Section \ref{section: two-layer diagonal linear networks});  \citep{AmidWinnowing} also used this approach to connect these models to the Exponentiated Gradient algorithm; and \citet{Moroshko2020ImplicitBI} extended this analysis beyond to analyze classification problems (with exponential, not square loss) showing "standard rich" regimes require unrealistically small loss. }

\section{Diagonal Linear Networks}
\label{section: two-layer diagonal linear networks}
%A direct application of our approach provides a simple and clean analysis of a class of diagonal linear networks, defined as
All previous analyses of the exact implicit bias for linear models with square loss (outside of the NTK regime) are limited to cases where the different layers share weights. In this section, we will remove this assumption, which allows us to analyze the effect of the relative scales of initialization between different layers in Section \ref{sec:diagonal_shape}. To begin, we examine a two-layer ``diagonal linear network'' with untied weights
\begin{align}
f(\mathbf{x};\mathbf{u}_{+},\mathbf{u}_{-},\mathbf{v}_{+},\mathbf{v}_{-})&=\left(\mathbf{u}_{+}\circ \mathbf{v}_{+}-\mathbf{u}_{-}\circ \mathbf{v}_{-}\right)^\top\mathbf{x} 
=\tilde{\mathbf{w}}^{\top}\mathbf{x}~,
\label{diagonal_linear_model}
\end{align}
where $\tilde{\mathbf{w}}=\mathbf{u}_{+}\circ \mathbf{v}_{+}-\mathbf{u}_{-}\circ \mathbf{v}_{-}$. 

\textbf{Previous Results:}
\citet{Woodworth2020KernelAR, Moroshko2020ImplicitBI}
analyzed these models for the special case of shared weights where $\mathbf{u}_{+}=\mathbf{v}_{+}$ and $\mathbf{u}_{-}=\mathbf{v}_{-}$, corresponding to the model \[f(\mathbf{x};\mathbf{u}_{+},\mathbf{u}_{-})=\left(\mathbf{u}_{+}^2-\mathbf{u}_{-}^2\right)^\top\mathbf{x}\,.\]
Both of these works focused on unbiased initialization, i.e., $\mathbf{u}_{+}(0)=\mathbf{u}_{-}(0)=\alpha\mathbf{u}$ (for some fixed $\mathbf{u}$). In \citet{yun2021a} these results were generalized to a tensor formulation, yet one which does not allow untied weights (as in Eq. \eqref{diagonal_linear_model}).

\remove{
\textbf{Previous Results:}
\citet{Woodworth2020KernelAR, Moroshko2020ImplicitBI}
analyzed these models for the special case of shared weights where $\mathbf{u}_{+}=\mathbf{v}_{+}$ and $\mathbf{u}_{-}=\mathbf{v}_{-}$, corresponding to the model \[f(\mathbf{x};\mathbf{u}_{+},\mathbf{u}_{-})=\left(\mathbf{u}_{+}^2-\mathbf{u}_{-}^2\right)^\top\mathbf{x}\,.\]
Both of these works focused on unbiased initialization, i.e., $\mathbf{u}_{+}(0)=\mathbf{u}_{-}(0)=\alpha\mathbf{u}$ (for some fixed $\mathbf{u}$). In \citet{yun2021a} these results were generalized to a tensor formulation, yet one which does not allow untied weights (as in Eq. \eqref{diagonal_linear_model}).
}

For regression with the square loss, \citet{Woodworth2020KernelAR} showed how the scale of initialization $\alpha$ controls the limit point of gradient flow between two extreme regimes. When $\alpha$ is large, gradient flow is biased towards the minimum $\ell_2$-norm solution \cite{chizat2019lazy}, corresponding to the kernel regime; when $\alpha$ is small, gradient flow is biased towards the minimum $\ell_1$-norm solution, corresponding to the rich regime; and intermediate $\alpha$ leads to some combination of these biases. For classification with the exponential loss, \citet{Moroshko2020ImplicitBI} showed how both the scale of initialization and the optimization accuracy control the implicit bias between the NTK and rich regimes.

% A previous work of \citet{Woodworth2020KernelAR} analyzed these models for the special case where $\mathbf{u}_{+}=\mathbf{v}_{+}$ and $\mathbf{u}_{-}=\mathbf{v}_{-}$, corresponding to the squared model $f(\mathbf{x};\mathbf{u}_{+},\mathbf{u}_{-})=\left(\mathbf{u}_{+}^2-\mathbf{u}_{-}^2\right)^\top\mathbf{x}$. Focusing on unbiased initialization, $\mathbf{u}_{+}(0)=\mathbf{u}_{-}(0)=\alpha\mathbf{u}$ (for some fixed $\mathbf{u}$), for regression with the square loss they showed how the scale of initialization $\alpha$ controls the limit point of gradient flow in between two extreme regimes. On the one hand, when $\alpha$ is large, gradient flow is biased towards the minimum $\ell_2$-norm solution \cite{chizat2019lazy}, corresponding to the kernel regime. On the other hand, when $\alpha$ is small, gradient flow is biased towards minimum $\ell_1$-norm solution, corresponding to the rich regime.

% Then, \citet{Moroshko2020ImplicitBI} analyzed the same squared model yet for classification with the exponential loss, showing how in this case both the scale of initialization and the optimization accuracy control the implicit bias between the kernel and rich regimes.

\textbf{Our Results:} In this work, we analyze the model \eqref{diagonal_linear_model} for the square loss and show how both the initialization scale and the initialization shape (see Section \ref{sec:diagonal_shape}) affect the implicit bias. To find the implicit bias of this model, we show how to express the training dynamics of this model in the form Eq.~\eqref{w_tilde_dynamics}, which enables the use of the IMD approach (Sec.~\ref{section: general approach for deriving implicit bias}).

To simplify the presentation, we focus on unbiased initialization, where $\mathbf{u}_{+}(0)=\mathbf{u}_{-}(0)$ and $\mathbf{v}_{+}(0)=\mathbf{v}_{-}(0)$, which allows scaling the initialization without scaling the output \cite{chizat2019lazy}. See Appendix~\ref{appendix: proof of Q for unbiased u-v model} for a more general result with any initialization.

\remove{
We examine the following unbiased model:
\[
\nu^{(n)}=\left(\mathbf{u}_{+}\circ \mathbf{v}_{+}-\mathbf{u}_{-}\circ \mathbf{v}_{-}\right)^\top\mathbf{x}^{(n)}
\]
Where representative linear model is given by:
\[
\nu^{(n)} =\mathbf{w}^\top\mathbf{x}^{(n)}=\sum_{i=1}^{d}\mathbf{x}^{(n)}_{i}{w}_{i}
\]
\begin{equation}
\label{eq: def of linear w for unbiased u-v }
\mathbf{w} = \mathbf{u}_{+}\circ \mathbf{v}_{+}-\mathbf{u}_{-}\circ \mathbf{v}_{-}    
\end{equation}

The gradient flow dynamics of parameters is given by:
\[
\frac{\partial \mathcal{L}}{\partial u_{+, i}} = - v_{+, i}(t)\left(\sum_{n=1}^{N}\mathbf{x}_i^{\left(n\right)}r^{\left(n\right)}(t)\right)
\]
\[
\frac{\partial \mathcal{L}}{\partial u_{-, i}} = + v_{-, i}(t)\left(\sum_{n=1}^{N}\mathbf{x}_i^{\left(n\right)}r^{\left(n\right)}(t)\right)
\]
\[
\frac{\partial \mathcal{L}}{\partial v_{+, i}} = - u_{+, i}(t)\left(\sum_{n=1}^{N}\mathbf{x}_i^{\left(n\right)}r^{\left(n\right)}(t)\right)
\]
\[
\frac{\partial \mathcal{L}}{\partial v_{-, i}} = + u_{-, i}(t)\left(\sum_{n=1}^{N}\mathbf{x}_i^{\left(n\right)}r^{\left(n\right)}(t)\right)
\]

We assume $u_{+,i}\left(0\right)=u_{-,i}\left(0\right),v_{+,i}\left(0\right)=v_{-,i}\left(0\right)$, which ensures unbiased initialization $(w_{i}\left(0\right)=0)$.
}

\begin{theorem}
\label{theorem: Q for unbiased u-v model}
For unbiased initialization, if the gradient flow solution $\tilde{\mathbf{w}}(\infty)$  satisfies $\mathbf{X}^\top\tilde{\mathbf{w}}(\infty) = \mathbf{y}$, then:
\[
\tilde{\mathbf{w}}(\infty) = \argmin_\mathbf{w} Q_{\boldsymbol{k}}(\mathbf{w}) \quad \mathrm{s.t.\,\,}\mathbf{X}^\top\mathbf{w} = \mathbf{y}
\]
where
\begin{align}
Q_{\boldsymbol{k}}\left(\mathbf{w}\right)=\sum_{i=1}^{d}q_{k_i}\left(w_{i}\right)~,
\label{q_func_uv}
\end{align}
\begin{align*}
q_k\left(x\right)
&=\frac{1}{2}\int_{0}^{x}\mathrm{arcsinh}\left(\frac{2z}{\sqrt{k}}\right)dz 
=\frac{\sqrt{k}}{4}\left[1-\sqrt{1+\frac{4x^{2}}{k}}+\frac{2x}{\sqrt{k}}\mathrm{arcsinh}\left(\frac{2x}{\sqrt{k}}\right)\right]
\end{align*}
and
$\sqrt{k_i}=2\left(u_{+,i}^{2}\left(0\right)+v_{+,i}^{2}\left(0\right)\right)$.
\end{theorem}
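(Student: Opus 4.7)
The plan is to bring the dynamics of $\tilde{\mathbf{w}}$ into the IMD form $d\tilde{\mathbf{w}}/dt = \mathbf{H}^{-1}(\tilde{\mathbf{w}})\mathbf{X}\mathbf{r}$ of Section~\ref{section: general approach for deriving implicit bias}, identify the (diagonal) metric $\mathbf{H}$, and then integrate $\nabla^2 Q_{\boldsymbol{k}} = \mathbf{H}$ twice. Because the parametrization acts coordinate-wise, the computation decouples across the $d$ indices, so I would fix a coordinate $i$ and suppress it throughout.

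The parameter ODEs read $\dot u_+ = v_+ g$, $\dot v_+ = u_+ g$, $\dot u_- = -v_- g$, $\dot v_- = -u_- g$, where $g(t)$ stands for $(\mathbf{X}\mathbf{r}(t))_i$ up to a fixed scalar. From these I would first extract the conservation law $u_\pm^2(t) - v_\pm^2(t) \equiv u_\pm^2(0) - v_\pm^2(0)$, and then diagonalize each sign sector via $\frac{d}{dt}(u_+\pm v_+) = \pm g(u_+\pm v_+)$, giving $(u_+\pm v_+)(t) = (u_+\pm v_+)(0)e^{\pm G(t)}$ with $G(t) := \int_0^t g(s)\,ds$, and the analogous formula with $G\mapsto -G$ in the minus sector. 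Unbiased initialization collapses the four constants to two and, after combining, yields the hyperbolic closed forms
\begin{align*}
\tilde w &= u_+v_+ - u_-v_- = \bigl(u_+^2(0)+v_+^2(0)\bigr)\sinh(2G),\\
u_+^2 + v_+^2 + u_-^2 + v_-^2 &= 2\bigl(u_+^2(0)+v_+^2(0)\bigr)\cosh(2G).
\end{align*}
Using $\cosh^2-\sinh^2 = 1$ to eliminate $G$, and writing $\sqrt{k}/2 = u_+^2(0)+v_+^2(0)$ as in the theorem, the second line becomes $\sqrt{k+4\tilde w^2}$, a function of $\tilde w$ alone. This collapse is the crucial step.

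Differentiating $\tilde w = u_+v_+ - u_-v_-$ and substituting the four parameter ODEs gives $\dot{\tilde w} = (u_+^2+v_+^2+u_-^2+v_-^2)\,g = \sqrt{k+4\tilde w^2}\,(\mathbf{X}\mathbf{r})_i$, which is exactly the IMD form with diagonal metric $\mathbf{H}_{ii}(\tilde{\mathbf{w}}) = 1/\sqrt{k_i + 4\tilde w_i^2}$. Because this $\mathbf{H}$ is diagonal with each entry depending only on the matching coordinate, the Hessian-map condition Eq.~\eqref{eq: Hessian-map condition} holds trivially and $\nabla^2 Q_{\boldsymbol{k}} = \mathbf{H}$ admits the separable solution $Q_{\boldsymbol{k}}(\mathbf{w}) = \sum_i q_{k_i}(w_i)$ with $q''_{k_i}(w) = 1/\sqrt{k_i + 4w^2}$. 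A single integration, choosing the constant so that $q'_{k_i}(0)=0$, yields $q'_{k_i}(w) = \tfrac{1}{2}\mathrm{arcsinh}(2w/\sqrt{k_i})$, which matches the stated integral; a second integration by parts produces the displayed closed form.

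To finish, unbiased initialization forces $\tilde{\mathbf{w}}(0)=\mathbf{0}$, so $\nabla Q_{\boldsymbol{k}}(\tilde{\mathbf{w}}(0)) = \mathbf{0}$, and the IMD argument of Section~\ref{section: general approach for deriving implicit bias} yields the KKT stationarity condition $\nabla Q_{\boldsymbol{k}}(\tilde{\mathbf{w}}(\infty)) = \mathbf{X}\boldsymbol{\nu}$ with $\boldsymbol{\nu} = \int_0^\infty \mathbf{r}(t')dt'$; strict convexity of each $q_{k_i}$ (immediate from $q''_{k_i}>0$) promotes this to the claimed constrained minimization, given the assumed zero-loss convergence $\mathbf{X}^\top\tilde{\mathbf{w}}(\infty)=\mathbf{y}$. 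The only genuinely delicate step is the earlier ``collapse'' — writing $u_+^2+v_+^2+u_-^2+v_-^2$ as a function of $\tilde w$ alone — which is precisely what the unbiased-initialization symmetry buys; without it, one must carry an extra conserved quantity per coordinate, yielding the more general $Q$ stated in the appendix.
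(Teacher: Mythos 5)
Your proof is correct and reaches the same intermediate target as the paper — namely the collapse
\[
\frac{d\tilde w_i}{dt}=\bigl(u_{+,i}^2+v_{+,i}^2+u_{-,i}^2+v_{-,i}^2\bigr)\bigl(\mathbf{X}\mathbf{r}\bigr)_i=\sqrt{k_i+4\tilde w_i^2}\,\bigl(\mathbf{X}\mathbf{r}\bigr)_i\,,
\]
followed by the (now diagonal, hence trivially Hessian) IMD integration — but you arrive there by a genuinely different route. You integrate the coupled linear ODEs explicitly in the variables $u_\pm\pm v_\pm$, obtaining a fully parametric trajectory $\tilde w=(u_+^2(0)+v_+^2(0))\sinh(2G)$, $\sum(\cdot)^2 = 2(u_+^2(0)+v_+^2(0))\cosh(2G)$ and eliminating $G$ via $\cosh^2-\sinh^2=1$. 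The paper never solves the ODE; instead it identifies the algebraic invariants $v_\pm^2-u_\pm^2$ and $u_+u_-+v_+v_-$, forms the combination $\tilde w_i^2$ versus $c_i^2$, and solves a quadratic to express $u_{+,i}^2+v_{+,i}^2+u_{-,i}^2+v_{-,i}^2$ as a function of $\tilde w_i$ for arbitrary initialization, only afterwards specializing $k_i$ to the unbiased case. Your exponential diagonalization is cleaner and more transparent under the unbiased assumption (the four initial constants collapse to two, and the hyperbolic identity does all the work), but the paper's invariant-based algebra is what lets Appendix~\ref{appendix: proof of Q for unbiased u-v model} state the more general $k_i=(\delta_{+,i}-\delta_{-,i})^2+4c_i^2$ in one pass; to recover that generality your way you would have to carry the additional conserved quantity you yourself flag at the end. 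Both approaches are valid, and the downstream steps (strict convexity of $q_{k_i}$, KKT stationarity from $\nabla Q(\tilde{\mathbf{w}}(0))=0$) match the paper exactly.
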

The proof appears in Appendix~\ref{appendix: proof of Q for unbiased u-v model}. %and the full characterization of $Q_k$ is given in Section~\ref{sec:diagonal_shape}.

\remove{
\begin{wrapfigure}{r}{0.25\textwidth}
\begin{center}
\vspace{-8mm}
{\includegraphics[width=\linewidth]{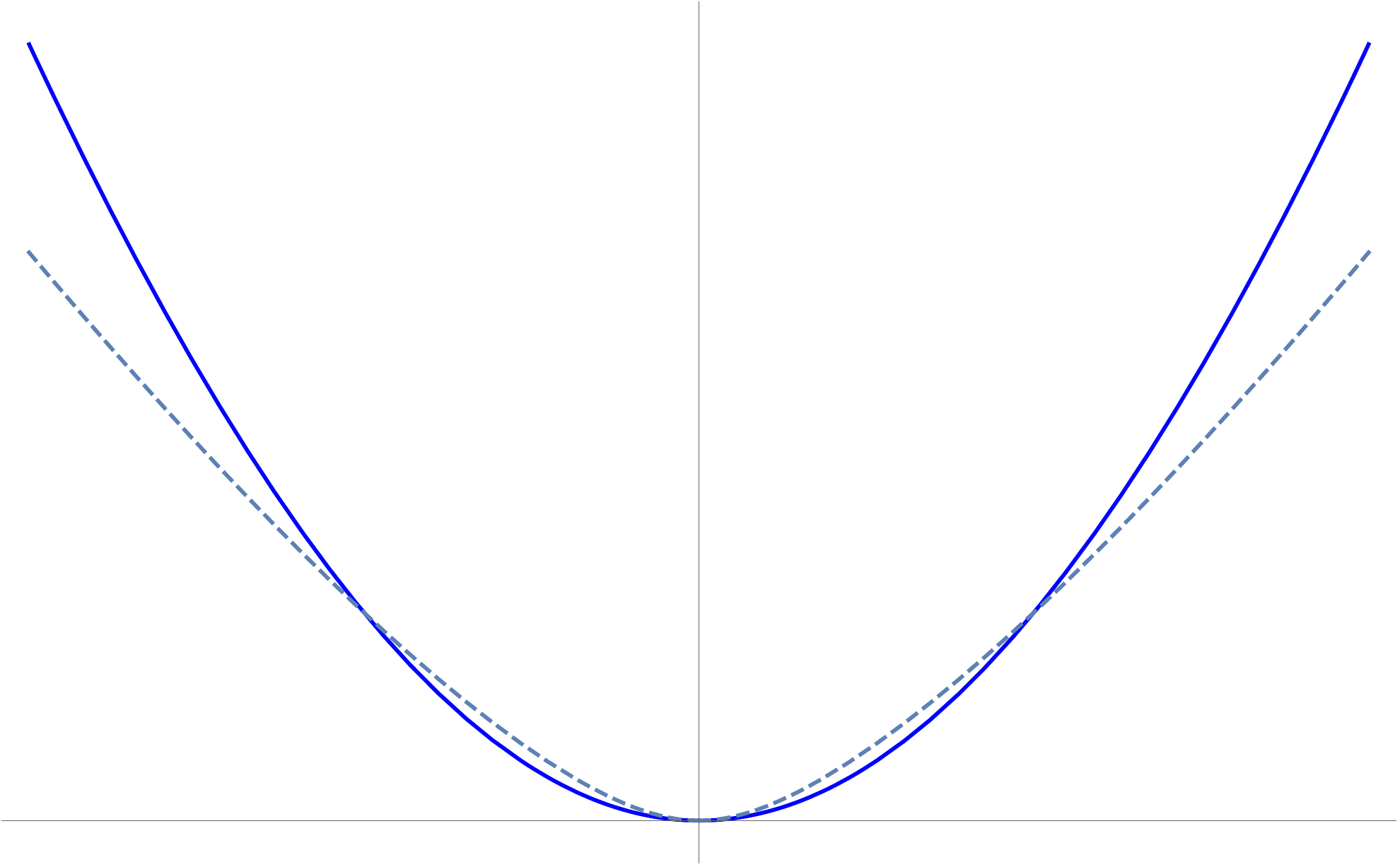}}
\small $q_k$ for $k = 0.1$ (dashed) and $k=10$ (solid).
\end{center}
\vspace{-7mm}
\end{wrapfigure}
}

The function $Q_{\boldsymbol{k}}(\mathbf{w})$ in \eqref{q_func_uv} generalizes the implicit regularizer found by \citet{Woodworth2020KernelAR}
to two layers with untied parameters. As expected, Eq.~\eqref{q_func_uv} reduces to \citet{Woodworth2020KernelAR} when $\mathbf{u}_{+}(0)=\mathbf{v}_{+}(0)$ and $\mathbf{u}_{-}(0)=\mathbf{v}_{-}(0)$. Unlike the previous result, $Q_{\boldsymbol{k}}(\mathbf{w})$ can be used to study how the relative magnitude of $\mathbf{u}$ versus $\mathbf{v}$ at initialization affects the implicit bias. We present this analysis in Section \ref{sec:diagonal_shape}, and highlight how initialization scale and shape have separate effects on the resulting model.
\remove{
The function $Q_k(\mathbf{w})$ in \eqref{q_func_uv} generalizes the implicit regularizer found by \citet{Woodworth2020KernelAR}. This is not surprising since for the special initialization case when $\mathbf{u}_{+}(0)=\mathbf{v}_{+}(0)$ and $\mathbf{u}_{-}(0)=\mathbf{v}_{-}(0)$ we obtain the squared model $f(\mathbf{x};\mathbf{u}_{+},\mathbf{u}_{-})=\left(\mathbf{u}_{+}^2-\mathbf{u}_{-}^2\right)^\top\mathbf{x}$ (see Appendix C in \citet{Woodworth2020KernelAR}). This function interpolates between the $\ell_1$-norm for $k\rightarrow 0$ and the Mahalanobis norm $\sqrt{\mathbf{w}^{\top}\mathbf{M}\mathbf{w}}$, where $\mathbf{M}^{-1}$ is the (diagonal) Gram matrix of the tangent kernel at initialization, for $k\rightarrow \infty$. We defer the full characterization of $Q_k$ to Section \ref{sec:diagonal_shape}..}

\remove{
We can notice the optimization problem described in Theorem~\ref{theorem: Q for unbiased u-v model} is a generalized version of the result of Theorem 1 from \citet{Woodworth2020KernelAR}, describing the implicit bias of a squared diagonal model - a private case of the diagonal two-layered network presented here.
}

\section{Warping Infinitesimal Mirror Descent \label{sec:warping}}
%A new technique for deriving the implicit bs:}
\label{sec: A new technique for deriving the implicit bias}
Our next goal is to go beyond the simplistic ``diagonal'' architecture to a fully connected one. However, deriving the implicit bias for non-diagonal models using the IMD approach (Section \ref{section: general approach for deriving implicit bias}) is not always possible since the $\mathbf{H}$ in Eq.~\eqref{w_tilde_dynamics}  might not be a Hessian map. Indeed, this condition does not hold for linear fully connected neural networks. To sidestep this issue, we next present our new technique for finding the implicit bias when $\mathbf{H}$ is not a Hessian map. %, including ones for which Eq.~\eqref{Q_ode} has no solution. 
We begin by multiplying both sides of Eq.~\eqref{Hw} by a smooth, positive function $g:\mathbb{R}^d\rightarrow(0,\infty)$ to get
\begin{align*}
g(\tilde{\mathbf{w}}(t))\mathbf{H}(\tilde{\mathbf{w}}(t))\frac{d\tilde{\mathbf{w}}(t)}{dt}=g(\tilde{\mathbf{w}}(t))\mathbf{X}\mathbf{r}(t)\,.
\end{align*}
Perhaps surprisingly, for the right choice of $g$, the differential equation $g(\bw) \mathbf{H}(\bw) = \nabla^2 Q(\bw)$ can have a solution even when $\mathbf{H}(\bw) = \nabla^2 Q(\bw)$ does not! When such a $g$ can be found, we can continue the analysis just as before,
% It turns out that in some cases $g(\tilde{\mathbf{w}}(t))\mathbf{H}(\tilde{\mathbf{w}}(t))$ \textit{is} a Hessian-map, so in these cases there exists $Q$ such that
\begin{align}
g(\tilde{\mathbf{w}}(t))\mathbf{H}(\tilde{\mathbf{w}}(t))=\nabla^2 Q(\tilde{\mathbf{w}}(t))~.
\label{gH}    
\end{align}
% Then we have that
We see that
\begin{align}\label{eq:implicit-time-warp}
\frac{d}{dt}(\nabla Q(\tilde{\mathbf{w}}(t)))
=g(\tilde{\mathbf{w}}(t))\mathbf{X}\mathbf{r}(t)~, 
\end{align}
and we conclude
\begin{align*}
\nabla Q(\tilde{\mathbf{w}}(t))-\nabla Q(\tilde{\mathbf{w}}(0))=\int_0^tg(\tilde{\mathbf{w}}(t'))\mathbf{X}\mathbf{r}(t')dt'.
\end{align*}
We require that for our chosen $g$ function $\int_0^{\infty}g(\tilde{\mathbf{w}}(t'))\mathbf{r}(t')dt'$ exists and is finite, in which case, as before, we denote $\boldsymbol{\nu}=\int_0^{\infty}g(\tilde{\mathbf{w}}(t'))\mathbf{r}(t')dt'$
% \edward{we need $\boldsymbol{\nu}$ to be finite so $g$ should be bounded ?}
so $\tilde{\bw}(\infty)$ satisfies the stationarity condition when $\nabla Q(\tilde{\mathbf{w}}(0))=0$:
\begin{align*}
\nabla Q(\tilde{\mathbf{w}}(\infty))=\mathbf{X}\boldsymbol{\nu}~.
\end{align*}
This establishes that $Q$ captures 
%describes 
the implicit bias, and all that remains is to describe how to find a $g$ such that Eq.~\eqref{gH} has a solution. For example, for a two-layer linear fully connected network with single neuron, we begin from the Ansatz that $Q\left(\mathbf{\tilde{w}}(t)\right)$ can be written as
\begin{equation}
    Q\left(\mathbf{\tilde{w}}(t)\right)=\hat{q}\left(\left\Vert \mathbf{\tilde{w}}(t)\right\Vert \right)+\mathbf{z}^{\top}\mathbf{\tilde{w}}(t) \label{eq: general q form}
\end{equation}
for some scalar function $\hat{q}$ and a fixed vector $\mathbf{z}\in\mathbb{R}^d$.
% \begin{align}
% \nabla Q&\left(\mathbf{\tilde{w}}(t)\right)=\hat{q}'\left(\left\Vert \mathbf{\tilde{w}}(t)\right\Vert \right)\frac{\mathbf{\tilde{w}}(t)}{\left\Vert \mathbf{\tilde{w}}(t)\right\Vert }+\mathbf{z} \nonumber\\
% \nabla^{2}Q&\left(\mathbf{\tilde{w}}(t)\right)=\frac{\hat{q}'\left(\left\Vert \mathbf{\tilde{w}}(t)\right\Vert \right)}{\left\Vert \mathbf{\tilde{w}}(t)\right\Vert } \\
% &\!\!\!\!\!\! \cdot\left[\mathbf{I}-\left[1-\left\Vert \mathbf{\tilde{w}}(t)\right\Vert \frac{\hat{q}''\left(\left\Vert \mathbf{\tilde{w}}(t)\right\Vert \right)}{\hat{q}'\left(\left\Vert \mathbf{\tilde{w}}(t)\right\Vert \right)}\right]\frac{\mathbf{\tilde{w}}(t)\mathbf{\tilde{w}}^{\top}(t)}{\left\Vert \mathbf{\tilde{w}}(t)\right\Vert ^{2}}\right]~.\nonumber
% \end{align}

By comparing Eq.~\eqref{gH} with the Hessian of Eq. \eqref{eq: general q form}, we solve for $\hat{q}$ and $g$, and use the condition $\nabla Q(\tilde{\mathbf{w}}(0))=0$ to determine $\mathbf{z}$. For more than one neuron, the analysis becomes more complicated because we will choose different $g$ functions for each neuron.

\paragraph{The $g$ Function as a ``Time Warping''.}
The above approach can also be interpreted as a non-linear warping of the time axis. The key idea is that rescaling ``time'' for an ODE affects neither the set of points visited by the solution nor the eventual limit point. Our approach essentially finds a rescaling that yields dynamics that allow solving for $Q$.

Specifically, if $\bw(t)\in\mathbb{R}^d$ is a solution to the ODE
\begin{equation}\label{eq:generic-un-warped}
\frac{d}{dt} \bw(t) = f(\bw(t))
\end{equation}
for any ``time warping'' $\tau: \mathbb{R} \to \mathbb{R}$ such that $\tau(0) = 0$, $\lim_{t\to\infty} \tau(t) = \infty$, and $\exists c>0:  c < \tau'(t) < \infty$, then $\bw(\tau(t))$ is a solution to the ODE
%\edward{can you add the proof for this claim in the appendix}
%\mnote{Isn't this immediate?}
\begin{equation}\label{eq:generic-warped}
\frac{d}{dt} \bw(\tau(t)) = \tau'(t)f(\bw(\tau(t)))~.
\end{equation}
Therefore, the set of points visited by $\bw(t)$ and $\bw\rb{\tau(t)}$ 
% \edward{we use $\circ$ for elementwise product}
are the same, and so are their limit points $\bw(\infty) = \bw(\tau(\infty))$. All that changes is the time at which these points are reached. Furthermore, since $\tau' > 0$, $\tau$ is invertible so, conversely, a solution for Eq.~\eqref{eq:generic-warped} can also be converted into a solution for Eq.~\eqref{eq:generic-un-warped} via the warping $\tau^{-1}$. In this way, we can interpret $g$ as a time warping function which transforms the ODE
\begin{equation}\label{eq:un-warped-Q-ode}
\frac{d}{d\tau}\nabla Q(\tilde{\bw}(\tau)) = \mathbf{X}\mathbf{r}(\tau)
\end{equation}
%\mnote{Do we need all the parentheses here?}
into Eq.~\eqref{eq:implicit-time-warp}, which is equivalent in the sense that it does not affect the set of models visited by gradient flow (it only affects the time they are visited). 
In particular, let $\tilde{\bw}(\tau)$ be a solution to Eq.~\eqref{eq:un-warped-Q-ode}, then $\tilde{\bw}(\tau(t))$ is a solution for Eq.~\eqref{eq:implicit-time-warp}
for $\tau(t) = \int_0^t g(\tilde{\bw}(t')) dt'$. So long as $\tau(\infty) = \int_0^\infty g(\tilde{\bw}(t')) dt' = \infty$ so that $\tilde{\bw}(\tau(t))$ does not ``stall out,'' we conclude that the limit points of Eqs.~\eqref{eq:implicit-time-warp} and \eqref{eq:un-warped-Q-ode} are the same.

\remove{
Given a general non-linear model of the form:
\[
\nu^{(n)} = f(\mathbf{x}^{(n)}; \mathbf{\theta})
\]
And assuming we can write the equivalent linear model as:
\[
v^{\left(n\right)} = \tilde{\mathbf{w}}(t)^\top \mathbf{x}^{(n)}
\]
Then if we can express the gradient flow dynamics of the linear models as:
\[
\frac{d}{dt}\tilde{\mathbf{w}}(t)=\mathbf{M}(t)\left(\sum_{n=1}^{N}\mathbf{x}^{\left(n\right)}r^{\left(n\right)}\right)
\]
where $M$ is an invertible matrix, for all $t \geq 0$.
We can also write:
\[
\left(\mathbf{M}(t)^{-1}\right)\frac{d}{dt}\tilde{\mathbf{w}}(t)=\left(\sum_{n=1}^{N}\mathbf{x}^{\left(n\right)}r^{\left(n\right)}\right)
\]

Suppose we find $q\left(\mathbf{\tilde{w}}(t)\right)=\hat{q}\left(\left\Vert \mathbf{\tilde{w}}(t)\right\Vert \right)+\mathbf{z}^{\top}\mathbf{\tilde{w}}(t)$ and $g\left(\mathbf{\tilde{w}}(t)\right)$ such that 
\[
\nabla^{2}q\left(\mathbf{\tilde{w}}(t)\right)=g\left(\mathbf{\tilde{w}}(t)\right)\mathbf{M}(t)^{-1}
\]
Then,
\[
\nabla^{2}q\left(\mathbf{\tilde{w}}(t)\right)\frac{d}{dt}\tilde{\mathbf{w}}(t)=\sum_{n=1}^{N}\mathbf{x}^{\left(n\right)}(t)g\left(\mathbf{\tilde{w}}(t)\right)r^{\left(n\right)}(t)
\]
\[
\frac{d}{dt}\left(\nabla q\left(\mathbf{\tilde{w}}(t)\right)\right)=\sum_{n=1}^{N}\mathbf{x}^{\left(n\right)}(t)g\left(\mathbf{\tilde{w}}(t)\right)r^{\left(n\right)}(t)
\]
\[
\nabla q\left(\mathbf{\tilde{w}}(t)\right)-\nabla q\left(\mathbf{\tilde{w}}(0)\right)=\sum_{n=1}^{N}\mathbf{x}^{\left(n\right)}\int_{0}^{t}g\left(\mathbf{\tilde{w}}(u)\right)r^{\left(n\right)}(u)du
\]
Requiring $\nabla q\left(\mathbf{\tilde{w}}(0)\right) = 0$, and denoting $\nu^{(n)} = \int_{0}^{\infty}g\left(\mathbf{\tilde{w}}(u)\right)r^{\left(n\right)}(u)du$, we get the KKT condition:
\[
\nabla q\left(\mathbf{\tilde{w}}(\infty)\right)=\sum_{n=1}^{N}\mathbf{x}^{\left(n\right)}\nu^{\left(n\right)}
\]

And so $\mathbf{\tilde{w}}(\infty)$ is the stationary point of the following problem:
\[
\tilde{\mathbf{w}}(\infty) = \argmin_\mathbf{w} q(\mathbf{w}) \quad \mathrm{s.t.\,\,}\forall n:\mathbf{{w}}^{\top}\mathbf{x}^{\left(n\right)}=y^{\left(n\right)}
\]

To find $q$ we note that:
\[
\nabla q\left(\mathbf{\tilde{w}}(t)\right)=\hat{q}'\left(\left\Vert \mathbf{\tilde{w}}(t)\right\Vert \right)\frac{\mathbf{\tilde{w}}(t)}{\left\Vert \mathbf{\tilde{w}}(t)\right\Vert }+\mathbf{z}
\]
\begin{align}
\label{eq: general q Hessian form}
\nabla^{2}q&\left(\mathbf{\tilde{w}}(t)\right)=\frac{\hat{q}'\left(\left\Vert \mathbf{\tilde{w}}(t)\right\Vert \right)}{\left\Vert \mathbf{\tilde{w}}(t)\right\Vert }\cdot\left. \nonumber\\
\right.& \cdot\left[\mathbf{I}-\left[1-\left\Vert \mathbf{\tilde{w}}(t)\right\Vert \frac{\hat{q}''\left(\left\Vert \mathbf{\tilde{w}}(t)\right\Vert \right)}{\hat{q}'\left(\left\Vert \mathbf{\tilde{w}}(t)\right\Vert \right)}\right]\frac{\mathbf{\tilde{w}}(t)\mathbf{\tilde{w}}^{\top}(t)}{\left\Vert \mathbf{\tilde{w}}(t)\right\Vert ^{2}}\right]
\end{align}

Comparing the above form to $g\left(\mathbf{\tilde{w}}(t)\right)\mathbf{M}(t)^{-1}$ will provide requirements over the values of $g\left(x\right)$ and $\hat{q}(x)$.
}

%\subsection{Limitations}
%\sazulay{Example of cases where $M$ is not invertible, like for two-layer model}
\remove{
\subsection{Comparison with the approach of \citet{Woodworth2020KernelAR}}
Recently, \citet{Woodworth2020KernelAR} proposed an approach for deriving the implicit bias by analyzing the gradient flow dynamics. Assuming convergence to zero-error solution, $\mathbf{X}^{\top}\tilde{\mathbf{w}}(\infty)=\mathbf{y}$, in some simple cases it is possible to find a function $b$ such that $\tilde{\mathbf{w}}(\infty)=b\left(\mathbf{X}\boldsymbol{\nu}\right)$.
If $b$ is invertible, then combining with the KKT optimality condition $\nabla Q(\tilde{\mathbf{w}}(\infty))=\mathbf{X}\boldsymbol{\nu}$, it is possible to find $Q$ by solving the differential equation $\nabla Q=b^{-1}$.

However, this approach is limited to simple cases where it is possible to identify the function $b$ and invert it. For example, for linear diagonal networks analyzed by \citet{Woodworth2020KernelAR} and \citet{Moroshko2020ImplicitBI}, $b$ is a scalar function, corresponding to diagonal $\nabla^2 Q$. In this case, both our approach and the approach of \citet{Woodworth2020KernelAR} can be applied to derive a $Q$ function.
Nevertheless, in more complex models, like fully-connected networks that we analyze in Section \ref{sec:multi_neuron}, it is not clear how to find the function $b$. We show cases where by our approach we know that $\nabla Q(\mathbf{w})=h(\|\mathbf{w}\|)\mathbf{w}/\|\mathbf{w}\|+\mathbf{z}=\mathbf{X}\boldsymbol{\nu}$ for some constant $\mathbf{z}$ and a scalar function $h$. In these cases, it is not clear if $\mathbf{w}$ could be written in the form $\mathbf{w}=b\left(\mathbf{X}\boldsymbol{\nu}\right)$. 

Therefore, our approach is more general and can be applied to derive implicit bias in more complex models, like fully connected networks.
}

\section{Fully Connected Linear Networks}\label{sec:multi_neuron}
In this section we examine the class of fully connected linear networks of depth $2$, defined as
\[
f(\mathbf{x};\{a_i\},\{\vw_i\})	=\sum_{i=1}^m a_{i}\mathbf{w}_{i}^{\top}\mathbf{x} = \tilde{\mathbf{w}}^\top\mathbf{x}~,
\]
where $\tilde{\mathbf{w}}\triangleq \sum_{i=1}^m\tilde{\mathbf{w}}_{i}$, and $\tilde{\mathbf{w}}_{i}\triangleq a_{i}\mathbf{w}_{i}$.

\remove{
The parameter dynamics are:
\[\dot{a}_{i}=-\partial_{a_{i}}\mathcal{L}=\mathbf{w}_{i}^{\top}\left(\sum_{n=1}^{N}\mathbf{x}^{\left(n\right)}r^{\left(n\right)}\right)
\]
\begin{equation}
\dot{\mathbf{w}}_{i}=-\partial_{\mathbf{w}_{i}}\mathcal{L}=a_{i}\left(\sum_{n=1}^{N}\mathbf{x}^{\left(n\right)}r^{\left(n\right)}\right)
\label{eq: paramter dynamic fully connected}
\end{equation}

Defining
\[
\tilde{\mathbf{w}_{i}}\triangleq a_{i}\mathbf{w}_{i}
\]
And:
\[
\tilde{\mathbf{w}}(t)\triangleq \sum_{i=1}^m\tilde{\mathbf{w}_{i}}(t)
\]
We can write the linear model as:
\[
v^{\left(n\right)} = \tilde{\mathbf{w}}(t)^\top \mathbf{x}^{(n)}
\]
And the dynamics:
\[
\frac{d}{dt}\tilde{\mathbf{w}_{i}}=\dot{a}_{i}\mathbf{w}_{i}+a_{i}\dot{\mathbf{w}}_{i}=\left(a_{i}^{2}\boldsymbol{I}+\mathbf{w}_{i}\mathbf{w}_{i}^{\top}\right)\left(\sum_{n=1}^{N}\mathbf{x}^{\left(n\right)}r^{\left(n\right)}\right)
\]
}
For this model, the Hessian-map condition (Eq.~\eqref{eq: Hessian-map condition}) does not hold and thus our analysis uses the ``warped IMD'' technique described in Section \ref{sec: A new technique for deriving the implicit bias}. In addition,
our analysis of the implicit bias employs the following \textit{balancedness} properties for gradient flow shown by \citet{Du2018AlgorithmicRI}:
%We repeat them here to set clear grounds for the following theoretical claims.

Theorem 2.1 of \citet{Du2018AlgorithmicRI} states that
\[
\forall t:~~a_{i}^{2}(t)-\left\Vert \mathbf{w}_{i}(t)\right\Vert ^{2}=a_{i}^{2}(0)-\left\Vert \mathbf{w}_{i}(0)\right\Vert ^{2}\triangleq \delta_{i}~.
\]

%\[
%a_{i}^{2}(0)=\delta_{i}+\left\Vert \mathbf{w}_{i}(0)\right\Vert ^{2}
%\]
%Then:
%\[
%a_{i}^{2}(t)=\delta_{i}+\left\Vert \mathbf{w}_{i}(t)\right\Vert ^{2}
%\]
\remove{
So we can write:
\begin{align}
\label{eq: w dynamics for fully connected linear networks}
&\frac{d}{dt}\tilde{\mathbf{w}_{i}}(t)=\nonumber\\
&=\left(\left(\delta_{i}+\left\Vert \mathbf{w}_{i}(t)\right\Vert ^{2}\right)\mathbf{I}+\mathbf{w}_{i}(t)\mathbf{w}_{i}^{\top}(t)\right)\left(\sum_{n=1}^{N}\mathbf{x}^{\left(n\right)}r^{\left(n\right)}\right)
\end{align}
}

In addition, Theorem 2.2 (a stronger balancedness property for linear activations) of \citet{Du2018AlgorithmicRI} states that
\begin{align*}
\forall t:~~ &\mathbf{a}(t)\mathbf{a}(t)^T -\mathbf{W}(t)^\top\mathbf{W}(t)  =\mathbf{a}(0)\mathbf{a}(0)^T -\mathbf{W}(0)^\top\mathbf{W}(0)\triangleq \mathbf{\Delta}~,
\end{align*}
%\[
%\mathbf{a}(0)\mathbf{a}(0)^T =\mathbf{W}(0)^\top\mathbf{W}(0) + \mathbf{\Delta}
%\]
%Then:
%\[
%\mathbf{a}(t)\mathbf{a}(t)^T =\mathbf{W}(t)^\top\mathbf{W}(t) + \mathbf{\Delta}
%\]
where $\mathbf{\Delta} \in \mathbb{R}^{m \times m}$, $\mathbf{a}=\rb{a_1,...,a_m}^{\top}$ and $\mathbf{W}=\rb{\mathbf{w}_1,...,\mathbf{w}_m}\in\R^{d\times m}$.

First, we derive the implicit bias for a fully connected single-neuron assuming $\delta_i \ge 0$ (which ensures that we can write the dynamics in the form \eqref{Hw} for invertible $\mathbf{H}$), and then expand our results to multi-neuron networks under more specific settings.
%\mnote{We discuss $\delta<0$ in... (need to say something on this assumption)}\edward{we can't solve when $\delta<0.$}\mnote{Didn't we have an empirical results that showed that for $\delta<0$ this doesn't work?}\edward{yes}
\begin{theorem}
\label{theorem: Q for single-neuron linear network}
For a depth $2$ fully connected network with a single hidden neuron ($m=1$), any $\delta \ge 0$, and initialization $\tilde{\vw}(0)=a(0)\vw(0)\neq\mathbf{0}$, if the gradient flow solution $\tilde{\mathbf{w}}(\infty)$  satisfies $\mathbf{X}^\top\tilde{\mathbf{w}}(\infty) = \mathbf{y}$, then:
\[
\tilde{\mathbf{w}}(\infty) = \argmin_{\mathbf{w}} q_{\delta, \tilde{\vw}(0)}(\mathbf{w}) \quad \mathrm{s.t.\,\,\,\,}\mathbf{X}^\top\mathbf{w} = \mathbf{y}
%\forall n:\tilde{\mathbf{w}}^{\top}\mathbf{x}^{\left(n\right)}=y^{\left(n\right)}
\]
where $q_{\delta, \tilde{\vw}(0)}(\mathbf{w}) =\hat{q}_\delta\left(\left\Vert \mathbf{w}\right\Vert \right)+\mathbf{z}^{\top}\mathbf{w}$
for
\[
\hat{q}_\delta(x) = \frac{\left(x^2 -\frac{\delta}{2}\left(\frac{\delta}{2} + \sqrt{x^2 + \frac{\delta^2}{4}}\right)\right)\sqrt{\sqrt{x^2 + \frac{\delta^2}{4}} - \frac{\delta}{2}}}{x}
\]
\[
\mathbf{z} = -\frac{3}{2}\sqrt{\sqrt{\left\Vert \mathbf{\tilde{w}}(0)\right\Vert ^{2}+\frac{\delta^{2}}{4}}-\frac{\delta}{2}}\frac{\mathbf{\tilde{w}}(0)}{\left\Vert \mathbf{\tilde{w}}(0)\right\Vert}~.
\]
\end{theorem}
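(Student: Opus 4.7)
The plan is to apply the warped infinitesimal mirror descent framework of Section~\ref{sec:warping} with the spherically symmetric Ansatz $Q(\tilde{\mathbf{w}}) = \hat{q}_\delta(\|\tilde{\mathbf{w}}\|) + \mathbf{z}^\top\tilde{\mathbf{w}}$. First I would derive the predictor dynamics in the form of Eq.~\eqref{Hw}. Combining the parameter flows $\dot a = \mathbf{w}^\top \mathbf{X}\mathbf{r}$ and $\dot{\mathbf{w}} = a\,\mathbf{X}\mathbf{r}$ with $\tilde{\mathbf{w}} = a\mathbf{w}$ gives $\dot{\tilde{\mathbf{w}}} = (a^2 \mathbf{I} + \mathbf{w}\mathbf{w}^\top)\mathbf{X}\mathbf{r}$. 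Invoking the balancedness identity $a^2 - \|\mathbf{w}\|^2 = \delta$ and solving for $a^2$ in terms of $r := \|\tilde{\mathbf{w}}\|$ yields $a^2 = \tfrac{\delta}{2} + \sqrt{r^2 + \delta^2/4}$, together with $\mathbf{w}\mathbf{w}^\top = a^{-2}\tilde{\mathbf{w}}\tilde{\mathbf{w}}^\top$. Hence $\mathbf{H}^{-1}(\tilde{\mathbf{w}}) = a^2 \mathbf{I} + a^{-2}\tilde{\mathbf{w}}\tilde{\mathbf{w}}^\top$, which is positive definite since $\delta\ge 0$ guarantees $a^2>0$; Sherman--Morrison then produces the explicit metric $\mathbf{H}(\tilde{\mathbf{w}})$. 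Checking Eq.~\eqref{eq: Hessian-map condition} shows that this $\mathbf{H}$ is not the Hessian of any function, which is exactly what makes the warped approach necessary.

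The Hessian of the Ansatz decomposes as $\nabla^2 Q = \tfrac{\hat{q}_\delta'(r)}{r}\mathbf{I} + \bigl(\hat{q}_\delta''(r) - \tfrac{\hat{q}_\delta'(r)}{r}\bigr)\tfrac{\tilde{\mathbf{w}}\tilde{\mathbf{w}}^\top}{r^2}$. Equating $g(\tilde{\mathbf{w}})\mathbf{H}(\tilde{\mathbf{w}}) = \nabla^2 Q$ and matching the coefficients of $\mathbf{I}$ and of $\tilde{\mathbf{w}}\tilde{\mathbf{w}}^\top$ separately determines $g = a^2 \hat{q}_\delta'(r)/r$ (a purely radial function, which is consistent) and a first-order ODE for $\hat{q}_\delta'$. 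Using the algebraic identity $a^4 + r^2 = 2 a^2 \sqrt{r^2 + \delta^2/4}$, the ODE reduces to $\tfrac{d \log \hat{q}_\delta'}{dr} = \tfrac{a^2}{2 r \sqrt{r^2 + \delta^2/4}}$, and the hyperbolic substitution $r = (\delta/2)\sinh\theta$ further simplifies it to $\tfrac{d\log\hat{q}_\delta'}{d\theta} = \tfrac{1}{2}\coth(\theta/2)$, integrating to $\hat{q}_\delta'(r) = A\sqrt{\sqrt{r^2 + \delta^2/4} - \delta/2}$. A second integration, again with the same substitution and using $\cosh\theta = 1 + 2\sinh^2(\theta/2)$, produces the closed form $\hat{q}_\delta$ stated in the theorem; the multiplicative constant $A$ is free (any positive value yields the same minimizer) and the specific choice $A = 3/2$ matches the normalization chosen in the statement.

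The linear offset $\mathbf{z}$ is then pinned down by imposing the mirror-descent initialization condition $\nabla Q(\tilde{\mathbf{w}}(0)) = 0$, which yields $\mathbf{z} = -\hat{q}_\delta'(\|\tilde{\mathbf{w}}(0)\|)\,\tilde{\mathbf{w}}(0)/\|\tilde{\mathbf{w}}(0)\|$; plugging in $\hat{q}_\delta'$ with $A=3/2$ reproduces exactly the $\mathbf{z}$ in the theorem. At this point the machinery of Section~\ref{sec:warping} gives $\nabla Q(\tilde{\mathbf{w}}(\infty)) = \mathbf{X}\boldsymbol{\nu}$ for $\boldsymbol{\nu} = \int_0^\infty g(\tilde{\mathbf{w}}(t))\mathbf{r}(t)\,dt$. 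Strict convexity of $Q$, which follows because $\nabla^2 Q = g\cdot \mathbf{H}$ with $g>0$ and $\mathbf{H}\succ 0$, upgrades this stationarity condition to full KKT optimality, so $\tilde{\mathbf{w}}(\infty)$ is the unique solution to the constrained minimization of $Q$.

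The main technical obstacle I expect is verifying the time-warping validity: that $\tau(t):=\int_0^t g(\tilde{\mathbf{w}}(t'))\,dt'$ satisfies $\tau(\infty)=\infty$ so the warping does not ``stall,'' and that the warped integral $\boldsymbol{\nu}$ is finite. Positivity of $g$ is immediate from the explicit formula, so the essential content is uniform boundedness of $g$ along the trajectory, which one can argue by combining the hypothesis that gradient flow reaches $\mathbf{X}^\top\mathbf{w}=\mathbf{y}$ with decay of $\mathbf{r}(t)$ on the interpolation manifold. A secondary subtlety is the role of $\delta\ge 0$: this hypothesis guarantees both that $\sqrt{r^2+\delta^2/4}-\delta/2 \ge 0$ (so $\hat{q}_\delta'$ is real-valued) and that $\mathbf{H}^{-1}$ stays uniformly positive definite, both of which are needed just to write the dynamics in the form Eq.~\eqref{w_tilde_dynamics}.
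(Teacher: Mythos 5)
Your proposal follows the same route as the paper's proof in Appendix~\ref{appendix: proof of Q for single-neuron linear network}: derive the predictor dynamics via the balancedness invariant $a^2 - \|\mathbf{w}\|^2 = \delta$, invert the resulting metric with Sherman--Morrison, posit the spherically symmetric Ansatz $Q = \hat{q}(\|\tilde{\mathbf{w}}\|) + \mathbf{z}^\top\tilde{\mathbf{w}}$, match the isotropic and rank-one coefficients to extract $g$ and a scalar ODE for $\hat{q}'$, integrate, and pin down $\mathbf{z}$ from $\nabla Q(\tilde{\mathbf{w}}(0))=0$. The only cosmetic differences are that you evaluate the ODE by a hyperbolic substitution where the paper integrates directly, and you make explicit the strict-convexity argument ($\nabla^2 Q = g\mathbf{H} \succ 0$) that the paper leaves implicit when upgrading the KKT stationarity condition to the argmin conclusion.
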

The proof appears in Appendix~\ref{appendix: proof of Q for single-neuron linear network}.
%$q_\delta(\tilde{\mathbf{w}})$ interpolates between the rich $\ell_2$-norm regime and and the Mahalanobis norm $\sqrt{\mathbf{w}^{\top}\mathbf{M}\mathbf{w}}$, where $\mathbf{M}^{-1}$ is the (diagonal) Gram matrix of the tangent kernel at initialization.

The function $q_{\delta, \tilde{\vw}(0)}(\mathbf{w})$ above again reveals interesting tradeoffs between initialization scale and shape, which we discuss in Section~\ref{sec:full_connected_shape}.
%We defer the full characterization of $q_{\delta,\tilde{\vw}(0)}(\tilde{\mathbf{w}})$ to .

\remove{
\begin{wrapfigure}{r}{0.25\textwidth}
\begin{center}
\vspace{-8mm}
{\includegraphics[width=\linewidth]{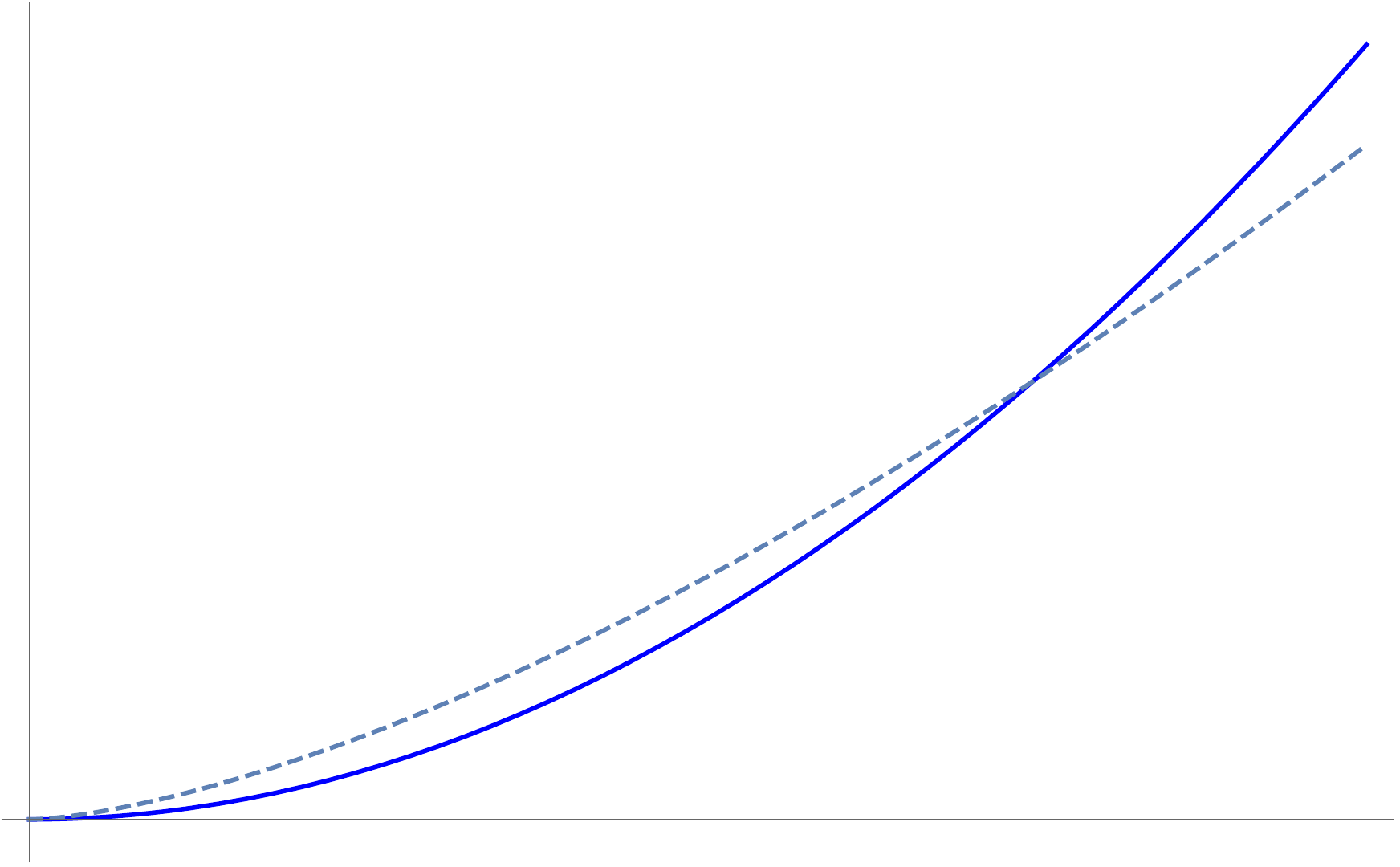}}
\small $q_\delta$ for $\delta = 0.1$ (dashed) and $\delta=10$ (solid).
\end{center}
\vspace{-7mm}
\end{wrapfigure}
}

In order to extend this result beyond a single neuron we require additional conditions to be met. For a multi-neuron network, in contrast to the single neuron case, we cannot use globally the ``time warping'' technique since it requires multiplying each neuron by a different $g$ function. However, for the special case of strictly balanced initialization, $\mathbf{\Delta} = 0$, we can extend this result to $m>1$.
% \edward{This corollary has no context (why write it here, why is it important). If we can't find a reason to talk about it, then refer to it in the appendix ("we can extend this result to $m>1$ with $\Delta=0$, see appendix...").}
\begin{proposition}
\label{corollary: Q for strictly balanced multi-neuron linear network}
For a multi-neuron network ($m>1$) with strictly balanced initialization ($\mathbf{\Delta} = 0$), assume $\tilde{\vw}(0)\neq\mathbf{0}$. If the gradient flow solution $\tilde{\mathbf{w}}(\infty)$  satisfies $\mathbf{X}^\top\tilde{\mathbf{w}}(\infty) = \mathbf{y}$, then:
\[
\tilde{\mathbf{w}}(\infty) = \argmin_{\mathbf{w}}\left[\left\Vert \mathbf{w}\right\Vert ^{3/2}-\frac{3}{2}\left\Vert \mathbf{\tilde{w}}(0)\right\Vert ^{-1/2}\mathbf{\tilde{w}}(0)^{\top}\mathbf{w}\right] \quad \mathrm{s.t.\,\,\,\,}\mathbf{X}^\top\mathbf{w} = \mathbf{y}
\]

\end{proposition}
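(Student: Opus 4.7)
The plan is to use strict balancedness ($\mathbf{\Delta}=\mathbf{0}$) to show that the closed-form dynamics on $\tilde{\mathbf{w}}(t)$ in the multi-neuron case coincide exactly with those of a single hidden neuron at $\delta=0$, and then to read off the regularizer by specializing Theorem~\ref{theorem: Q for single-neuron linear network} at $\delta=0$.

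First I would invoke Theorem~2.2 of \citet{Du2018AlgorithmicRI} with $\mathbf{\Delta}=\mathbf{0}$ to obtain $\mathbf{W}(t)^{\top}\mathbf{W}(t)=\mathbf{a}(t)\mathbf{a}(t)^{\top}$ for all $t\geq 0$. Since the right-hand side has rank at most one, $\mathbf{W}(t)$ itself has rank at most one, so $\mathbf{W}(t)=\hat{\mathbf{u}}(t)\mathbf{b}(t)^{\top}$ for a unit vector $\hat{\mathbf{u}}(t)\in\mathbb{R}^d$ and some $\mathbf{b}(t)\in\mathbb{R}^m$. Matching Gram matrices forces $\mathbf{b}(t)=\epsilon(t)\,\mathbf{a}(t)$ with $\epsilon(t)\in\{\pm 1\}$, and hence $\mathbf{w}_i(t)=\epsilon(t)\,a_i(t)\,\hat{\mathbf{u}}(t)$ for every $i$. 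Summing then gives $\tilde{\mathbf{w}}(t)=\epsilon(t)\|\mathbf{a}(t)\|^{2}\hat{\mathbf{u}}(t)$, and in particular $\|\tilde{\mathbf{w}}(t)\|=\|\mathbf{a}(t)\|^{2}$ and $\hat{\mathbf{u}}(t)\hat{\mathbf{u}}(t)^{\top}=\tilde{\mathbf{w}}(t)\tilde{\mathbf{w}}(t)^{\top}/\|\tilde{\mathbf{w}}(t)\|^{2}$.

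Next, summing the per-neuron dynamics $\tfrac{d}{dt}\tilde{\mathbf{w}}_i=(a_i^{2}\mathbf{I}+\mathbf{w}_i\mathbf{w}_i^{\top})\mathbf{X}\mathbf{r}(t)$ and using $\mathbf{W}\mathbf{W}^{\top}=\|\mathbf{a}\|^{2}\hat{\mathbf{u}}\hat{\mathbf{u}}^{\top}=\|\tilde{\mathbf{w}}\|\,\tilde{\mathbf{w}}\tilde{\mathbf{w}}^{\top}/\|\tilde{\mathbf{w}}\|^{2}$, I would obtain
\[
\frac{d}{dt}\tilde{\mathbf{w}}(t)=\|\tilde{\mathbf{w}}(t)\|\left(\mathbf{I}+\frac{\tilde{\mathbf{w}}(t)\tilde{\mathbf{w}}(t)^{\top}}{\|\tilde{\mathbf{w}}(t)\|^{2}}\right)\mathbf{X}\mathbf{r}(t).
\]
A direct check shows this is exactly the $\mathbf{H}^{-1}(\tilde{\mathbf{w}})\mathbf{X}\mathbf{r}$ form derived in the single-neuron analysis of Theorem~\ref{theorem: Q for single-neuron linear network} at $\delta=0$, where $a^{2}=\|\mathbf{w}\|^{2}=\|\tilde{\mathbf{w}}\|$ and $\mathbf{w}\mathbf{w}^{\top}=\|\tilde{\mathbf{w}}\|\,\tilde{\mathbf{w}}\tilde{\mathbf{w}}^{\top}/\|\tilde{\mathbf{w}}\|^{2}$. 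Therefore the warped-IMD argument behind Theorem~\ref{theorem: Q for single-neuron linear network} applies verbatim; specializing its formulas at $\delta=0$ yields $\hat{q}_{0}(x)=x^{3/2}$ and $\mathbf{z}=-\tfrac{3}{2}\|\tilde{\mathbf{w}}(0)\|^{-1/2}\tilde{\mathbf{w}}(0)$, which is precisely the objective stated in the proposition.

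The main obstacle I anticipate is justifying the rank-one factorization cleanly along the whole trajectory, in particular handling times at which $\mathbf{a}(t)$ (equivalently $\tilde{\mathbf{w}}(t)$) might vanish or the global sign $\epsilon(t)$ might flip. The identity $\|\tilde{\mathbf{w}}(t)\|=\|\mathbf{a}(t)\|^{2}$ together with $\tilde{\mathbf{w}}(0)\neq\mathbf{0}$ and the assumed convergence to an interpolator should keep the trajectory away from the degenerate stratum in a neighborhood of initialization, after which $\hat{\mathbf{u}}(t)$ and $\epsilon(t)$ are uniquely pinned down by continuity. Everything downstream is a direct specialization of the already-proved single-neuron theorem, so no new analytical machinery is required.
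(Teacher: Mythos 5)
Your proposal is correct, and it takes a genuinely different (and cleaner) route than the paper's proof. The paper sums the per-neuron dynamics, inverts the resulting $d\times d$ matrix $\sum_i a_i^2\mathbf{I}+\sum_i\mathbf{w}_i\mathbf{w}_i^\top$ via the Woodbury identity, uses $\mathbf{\Delta}=\mathbf{0}$ and Sherman--Morrison to collapse the inner $m\times m$ inverse, writes $\tilde{\mathbf{w}}_i=c_i\tilde{\mathbf{w}}$ and carries a (time-dependent) factor $\sum_i |c_i(t)|$ through the warped-IMD machinery, and then re-solves the ODE $\hat{q}''/\hat{q}' = 1/(2x)$ from scratch to obtain $\hat{q}(x)\propto x^{3/2}$. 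You instead factor $\mathbf{W}(t)=\hat{\mathbf{u}}(t)\mathbf{b}(t)^\top$ from $\mathbf{W}^\top\mathbf{W}=\mathbf{a}\mathbf{a}^\top$, deduce $\mathbf{w}_i=\epsilon a_i\hat{\mathbf{u}}$ and $\tilde{\mathbf{w}}=\epsilon\|\mathbf{a}\|^2\hat{\mathbf{u}}$, and observe that the forward dynamics $\tfrac{d}{dt}\tilde{\mathbf{w}}=\|\tilde{\mathbf{w}}\|\bigl(\mathbf{I}+\tilde{\mathbf{w}}\tilde{\mathbf{w}}^\top/\|\tilde{\mathbf{w}}\|^2\bigr)\mathbf{X}\mathbf{r}$ are \emph{identical} to the single-neuron dynamics at $\delta=0$ (where $a^2=\|\mathbf{w}\|^2=\|\tilde{\mathbf{w}}\|$). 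Since the warped-IMD argument in Theorem~\ref{theorem: Q for single-neuron linear network} depends only on the induced dynamics of $\tilde{\mathbf{w}}$ and not on the parameterization, you can read off $\hat{q}_0(x)=x^{3/2}$ and $\mathbf{z}=-\tfrac{3}{2}\|\tilde{\mathbf{w}}(0)\|^{-1/2}\tilde{\mathbf{w}}(0)$ directly. This is more economical: no Woodbury/Sherman--Morrison, no re-derivation of $\hat{q}$, and your route makes visible that $c_i=a_i^2/\|\mathbf{a}\|^2\geq0$, hence $\sum_i |c_i|=\sum_i c_i=1$, so the extra factor the paper keeps around is in fact the constant $1$ — a small sharpening the paper does not explicitly note.

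On the "anticipated obstacle" regarding the rank-one factorization: the degenerate case is handled exactly as you suspect. If $\tilde{\mathbf{w}}(t^*)=0$ then $\|\mathbf{a}(t^*)\|^2=\|\tilde{\mathbf{w}}(t^*)\|=0$, so $\mathbf{a}(t^*)=\mathbf{0}$, $\mathbf{W}(t^*)=\mathbf{0}$, and $\tfrac{d}{dt}\tilde{\mathbf{w}}(t^*)=\mathbf{0}$; by uniqueness of the ODE solution, $\tilde{\mathbf{w}}$ would be identically zero, contradicting $\tilde{\mathbf{w}}(0)\neq\mathbf{0}$. Hence the trajectory never hits the degenerate stratum. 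The sign $\epsilon(t)$ is also benign because the only facts you use downstream are $\|\tilde{\mathbf{w}}\|=\|\mathbf{a}\|^2$ and $\hat{\mathbf{u}}\hat{\mathbf{u}}^\top=\tilde{\mathbf{w}}\tilde{\mathbf{w}}^\top/\|\tilde{\mathbf{w}}\|^2$, both sign-invariant, so any ambiguity in $\epsilon$ or $\hat{\mathbf{u}}$ can be absorbed into the other without affecting the derivation.
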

The proof appears in Appendix~\ref{appendix: proof of Q for strictly balanced multi-neuron linear network}.

Next, we show that for infinitesimal nonzero initialization, the equivalent linear predictor of the multi-neuron linear network is biased towards the minimum $\ell_2$-norm.
%the implicit bias of a multi-neuron linear network for a small initialization scale is the $\ell_2$-norm of the equivalent linear predictor, matching previous results given for the classification setting \cite{lyu2020gradient} \sazulay{Daniel/Amir/Edward - what should I put here?}.  
\begin{theorem}
\label{theorem: Q for small initialization multi-neuron linear network}
For a multi-neuron network, and for nonzero infinitesimal initialization, i.e. $\forall i: \mathbf{0}\neq\|\tilde{\mathbf{w}}_i(0)\| \rightarrow 0$, if the gradient flow solution $\tilde{\mathbf{w}}(\infty)$  satisfies $\mathbf{X}^\top\tilde{\mathbf{w}}(\infty) = \mathbf{y}$, then:
\[
\mathbf{\tilde{w}}(\infty)=\mathrm{argmin}_{\mathbf{w}}\left\Vert \mathbf{w}\right\Vert \,\,\,\mathrm{s.t.}\,\,\mathbf{X}^\top\mathbf{w} = \mathbf{y} \,.
\]
\end{theorem}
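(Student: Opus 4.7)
The plan is to sidestep the warped IMD machinery entirely by observing that the KKT conditions for $\argmin_{\mathbf{w}}\|\mathbf{w}\|$ subject to $\mathbf{X}^\top\mathbf{w}=\mathbf{y}$ are equivalent to requiring both interpolation and $\tilde{\mathbf{w}}(\infty)\in\mathrm{Col}(\mathbf{X})$. Since interpolation is assumed, the entire task reduces to showing that the limit point of gradient flow lies in the column span of the data.

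The first step exploits the fact that the dynamics of each hidden weight $\mathbf{w}_i$ are driven purely by the shared residual against $\mathbf{X}$, namely $\dot{\mathbf{w}}_i(t)=a_i(t)\mathbf{X}\mathbf{r}(t)$ with $\mathbf{r}(t)=(\mathbf{y}-\mathbf{X}^\top\tilde{\mathbf{w}}(t))/N$. Integrating in time yields
\[
\mathbf{w}_i(\infty)=\mathbf{w}_i(0)+\mathbf{X}\boldsymbol{\mu}_i,\qquad \boldsymbol{\mu}_i\triangleq\int_0^\infty a_i(t)\mathbf{r}(t)\,dt,
\]
so any deviation of $\mathbf{w}_i$ from its initialization is forced to live in $\mathrm{Col}(\mathbf{X})$. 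Multiplying by $a_i(\infty)$ and summing over neurons,
\[
\tilde{\mathbf{w}}(\infty)=\sum_i a_i(\infty)\mathbf{w}_i(0)+\mathbf{X}\Big(\sum_i a_i(\infty)\boldsymbol{\mu}_i\Big),
\]
and the only obstruction to $\tilde{\mathbf{w}}(\infty)\in\mathrm{Col}(\mathbf{X})$ is the ``drift'' term $\sum_i a_i(\infty)\mathbf{w}_i(0)$ inherited from the initial hidden-layer weights.

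The second step is to show this drift vanishes as $\|\tilde{\mathbf{w}}_i(0)\|\to 0$. For a vanishing initialization in which both $\mathbf{w}_i(0)\to\mathbf{0}$ and $a_i(0)\to 0$, I would use the balancedness identity $a_i^2(t)-\|\mathbf{w}_i(t)\|^2=\delta_i=a_i^2(0)-\|\mathbf{w}_i(0)\|^2\to 0$ to control $|a_i(\infty)|$. Specifically, since $\tilde{\mathbf{w}}(\infty)$ is bounded (it interpolates a fixed $\mathbf{y}$) and $\|\tilde{\mathbf{w}}_i(\infty)\|=|a_i(\infty)|\|\mathbf{w}_i(\infty)\|$ with $\|\mathbf{w}_i(\infty)\|^2=a_i^2(\infty)-\delta_i$, one obtains a bound on $|a_i(\infty)|$ that is essentially independent of the initialization scale, so that $a_i(\infty)\mathbf{w}_i(0)\to\mathbf{0}$. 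Once the drift vanishes, $\tilde{\mathbf{w}}(\infty)\in\mathrm{Col}(\mathbf{X})$, and together with the interpolation assumption this is exactly the KKT condition for $\argmin\|\mathbf{w}\|^2$ subject to $\mathbf{X}^\top\mathbf{w}=\mathbf{y}$ (which coincides on the affine feasible set with the minimizer of $\|\mathbf{w}\|$).

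The main obstacle is precisely this limiting argument: a priori $|a_i(\infty)|$ could diverge faster than $\|\mathbf{w}_i(0)\|$ vanishes, leaving a non-trivial component outside $\mathrm{Col}(\mathbf{X})$. The balancedness law of \citet{Du2018AlgorithmicRI}, together with the boundedness of the total predictor, is the crucial tool that quantitatively rules this out. Turning this observation into a uniform bound across the sequence of shrinking initializations --- and thereby justifying the exchange of limits between init $\to 0$ and $t\to\infty$ --- is the technical heart of the proof.
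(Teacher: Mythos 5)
Your route is genuinely different from the paper's and, I would argue, cleaner. The paper re-uses the per-neuron warped IMD analysis from Theorem~\ref{theorem: Q for single-neuron linear network}: it writes the per-neuron stationarity relation $\hat{q}'\left(\|\tilde{\mathbf{w}}_i(\infty)\|\right)\frac{\tilde{\mathbf{w}}_i(\infty)}{\|\tilde{\mathbf{w}}_i(\infty)\|} + \mathbf{z}_i = \mathbf{X}\boldsymbol{\nu}_i$, observes that $\|\mathbf{z}_i\|\to 0$ as $\|\tilde{\mathbf{w}}_i(0)\|\to 0$, rescales $\boldsymbol{\nu}_i$ to absorb the scalar prefactor, and concludes $\tilde{\mathbf{w}}_i(\infty)\in\mathrm{Col}(\mathbf{X})$ for each $i$ before summing. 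You instead observe that integrating $\dot{\mathbf{w}}_i = a_i\mathbf{X}\mathbf{r}$ already gives $\mathbf{w}_i(\infty)-\mathbf{w}_i(0)\in\mathrm{Col}(\mathbf{X})$ directly, with no mirror-descent potential or time warp whatsoever; the theorem then reduces to killing the drift $\sum_i a_i(\infty)\mathbf{w}_i(0)$. This is more elementary and exposes that the $\ell_2$ bias at vanishing initialization has nothing to do with whether $\mathbf{H}$ is a Hessian map --- it is a consequence of the column-space geometry of the first-layer gradients alone.

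Both proofs rest on the same unproven regularity. The paper writes ``assuming we converge to a finite-norm weight vector $\tilde{\mathbf{w}}_i(\infty)$, which is correct for the square loss,'' and your argument likewise needs $|a_i(\infty)|$ to stay bounded along the shrinking-initialization family. Your stated justification, ``$\tilde{\mathbf{w}}(\infty)$ is bounded (it interpolates a fixed $\mathbf{y}$),'' is not quite right: interpolation alone determines only an unbounded affine set, so the boundedness must come from the assumed convergence of gradient flow, not from the constraint. Moreover, boundedness of the sum $\tilde{\mathbf{w}}(\infty)=\sum_i \tilde{\mathbf{w}}_i(\infty)$ does not by itself bound each $\tilde{\mathbf{w}}_i(\infty)$ (the terms could cancel), yet a per-neuron bound is what the chain $\|\tilde{\mathbf{w}}_i(\infty)\| = |a_i(\infty)|\sqrt{a_i^2(\infty)-\delta_i}$ actually requires. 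You flag this uniform-in-initialization bound as the technical heart, which is accurate; the same gap is present, just less visibly, in the paper's proof.
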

The proof appears in Appendix~\ref{appendix: rich regime for fully connected multi-neuron network}.

Note that for infinitesimal initialization, as above, the training dynamics of fully connected linear networks is not captured by the neural tangent kernel \cite{jacot2018neural}, i.e., the tangent kernel is not fixed during training, so that we are not in the NTK regime \citep{chizat2019lazy,Woodworth2020KernelAR}. Yet, the implicit bias is towards a solution that can be captured by a kernel ($\ell_2$-norm). Though in other models, this limit coincides with the "rich" regime \citep{Woodworth2020KernelAR}, in these cases the $Q$ function is not an RKHS. Since in our case the $Q$ function is an RKHS, calling this regime "rich" is problematic. Therefore, we propose to call this vanishing initialization regime as the \textit{Anti-NTK} regime --- since this limit is diametrically opposed to the NTK regime, which is reached at the limit of infinite initialization \citep{chizat2019lazy,Woodworth2020KernelAR}. This regime coincides with the "rich" regimes in models where the $Q$ function is not an RKHS norm in that limit.

To the best of our knowledge such an  $\ell_2$ minimization result as in Theorem \ref{theorem: Q for small initialization multi-neuron linear network} was not proven for fully connected linear nets in a regression setting, even under vanishing initialization. However, for classification problems (e.g. with exponential or logistic loss) it was proven that the predictor of fully connected linear nets converges to the max-margin solution with the minimum $\ell_2$ norm \cite{ji2019gradient}, in the regime where the loss vanishes. This regime is closely related to the Anti-NTK regime since in a classification setting, vanishing loss and vanishing initialization can yield similar $Q$ function \cite{Moroshko2020ImplicitBI}.

%\mnote{Daniel, do you want to add here something about this being the first result for linear FCN in the anti-kernel regime?}\edward{Mor, it will be overselling, we invented a new regime and then say it is the first result in this regime...}\amirg{this seems like a confusing name, because we say that the implicit bias is capture by a kernel, yet we call it an anti-kernel.}
%\edward{We actually talked about it with Nati, and he asked if Anti-Kernel can be a kernel. Daniel said that yes and gave some examples from physics, e.g. Antiparticle can be (or is) a particle }\amirg{I'm still not sure what this terminology buys us. Seems like $\|\mathbf{w}\|$ can just be presented as what it is.}\edward{just wanted to define some "regime", it is not a kernel nor rich. But I agree that it is not so important to define a new "regime" here. Let's see what Daniel says...}

%In the next section, we analyze the multi-neuron case further and observe that it may be biased towards a more ``rich'' behavior, i.e., one that cannot be captured by a kernel.

\remove{
\subsection{An Observation for High-Dimensional Datasets}
In a practical setting, a fully connected neural network is initialized such that the norms of the weight vectors entering each neuron in each hidden layer are close to one another, while their direction is random.
A common example is the Xavier initialization \cite{Glorot2010UnderstandingTD}
that draws each weight from a distribution determined only by the width of the layers.

Analyzing the dynamics of gradient flow (described in Appendix~\ref{appendix: proof of Q for single-neuron linear network}) we can notice that for $\delta_i \ge 0$, 
%\edward{why not $\delta_i \ge 0$ ?}\mnote{I don't see a reason}:
\begin{align*}
&\frac{d}{dt}\left\Vert \mathbf{w}_{i}\left(t\right)\right\Vert =\frac{\mathbf{w}_{i}\left(t\right)^{\top}}{\left\Vert \mathbf{w}_{i}\left(t\right)\right\Vert }\cdot\dot{\mathbf{w}}_{i}\left(t\right)\\
&=\mathrm{sign}\left(a_{i}\left(0\right)\right)\sqrt{\delta_{i}+\left\Vert \mathbf{w}_{i}\left(t\right)\right\Vert ^{2}}\left(\frac{\mathbf{w}_{i}\left(t\right)^{\top}}{\left\Vert \mathbf{w}_{i}\left(t\right)\right\Vert }\sum_{n=1}^{N}\mathbf{x}^{\left(n\right)}r^{\left(n\right)}\right)~,
\end{align*}
where
$r^{(n)}=\frac{1}{N} \rb{y^{(n)}- f(\vect{x}^{(n)};\theta)}$.
The above suggests that the evolution of the norm of the incoming weight vector to neuron $i$ is governed only by $\delta_i$, the norm at initialization $\|\mathbf{w}_i(0)\|$ and the relation between the direction of the weight vector $\mathbf{w}_i(0)$ and the inputs $\mathbf{x}^{(n)}$. In the ``high-dimensional'' setting where $d > N$, the inner products of the weight vectors and data will typically be similar (i.e.~small), which suggests that the norms of the weight vectors across different neurons will usually evolve closely to one another if initialized with equal norms.

In this case, the following observation provides a deeper insight into the behavior of the implicit bias across multiple neurons.
\begin{observation}
\label{theorem: Q for muli-neuron linear network under norm evolution assumption}
For a multi-neuron fully connected network, and for any $\delta_i = \delta \ge 0\,\, \forall i \in [m]$, if for all $t$ it holds that $\|\mathbf{w}_i(t)\| = \|\mathbf{w}_j(t)\|\,\,\, \forall i,j \in [m]$, and if the gradient flow solution $\{\tilde{\mathbf{w}}_1(\infty),..., \tilde{\mathbf{w}}_m(\infty)\}$  satisfies $\mathbf{X}^\top\sum_{i=1}^m\tilde{\mathbf{w}}_i(\infty)= \mathbf{y}$, then:
\[
\{\tilde{\mathbf{w}}_1(\infty),..., \tilde{\mathbf{w}}_m(\infty)\} = \argmin_{\mathbf{w}_1,...,\mathbf{w}_m} \sum_{i}q_{\delta,\tilde{\vw}(0)}\left(\mathbf{w}_{i}\right) \quad 
\]
\[
\mathrm{s.t.\,\,}
\mathbf{X}^\top\sum_{i=1}^m\mathbf{w}_i= \mathbf{y}
%\forall n: \sum_{i=1}^m\tilde{\mathbf{w}}_i^{\top}{\mathbf{x}}^{\left(n\right)}=y^{\left(n\right)}
\]
\remove{
\begin{align*}
    \{\tilde{\mathbf{w}}_i(\infty)\}_{i=1}^{m} = \argmin_{\mathbf{w}_1,...,\mathbf{w}_m} \sum_{i}q_{\delta_i}\left(\mathbf{w}_{i}\right)
\mathrm{s.t.\,\,}
\mathbf{X}^\top\sum_{i=1}^m\mathbf{w}_i= \mathbf{y}
\end{align*}
}
where $q_{\delta,\tilde{\vw}(0)}$ is defined in Theorem~\ref{theorem: Q for single-neuron linear network}.
\end{observation}

The proof appears in Appendix~\ref{appendix: proof of Q for muli-neuron linear network under norm evolution assumption}.
To test this observation numerically we follow the sparse regression problem suggested by \citet{Woodworth2020KernelAR}.
Figure~\ref{figure: norm evolution for real fully connected case} shows that the above observation indeed captures the behavior of gradient flow in this sparse setting. Full implementation details are given in Section~\ref{section: numerical simuations}. In Section \ref{sec:full_connected_shape}, we show empirically that this observation might explain the quantization effect found in practice for fully connected linear network. 
}
%While Observation~\ref{theorem: Q for muli-neuron linear network under norm evolution assumption} is valid in a highly limited setting, it sheds light on the inner working of the implicit bias in wide real-life networks. \mnote{Can we give justification for this statement somehow?}\edward{need simulations with real-life data...}\mnote{So maybe remove this sentence?}

\remove{
\begin{figure}[t]
\counterwithin{figure}{section}
\vskip 0.2in
\begin{center}
\centerline{\includegraphics[width=.8\columnwidth]{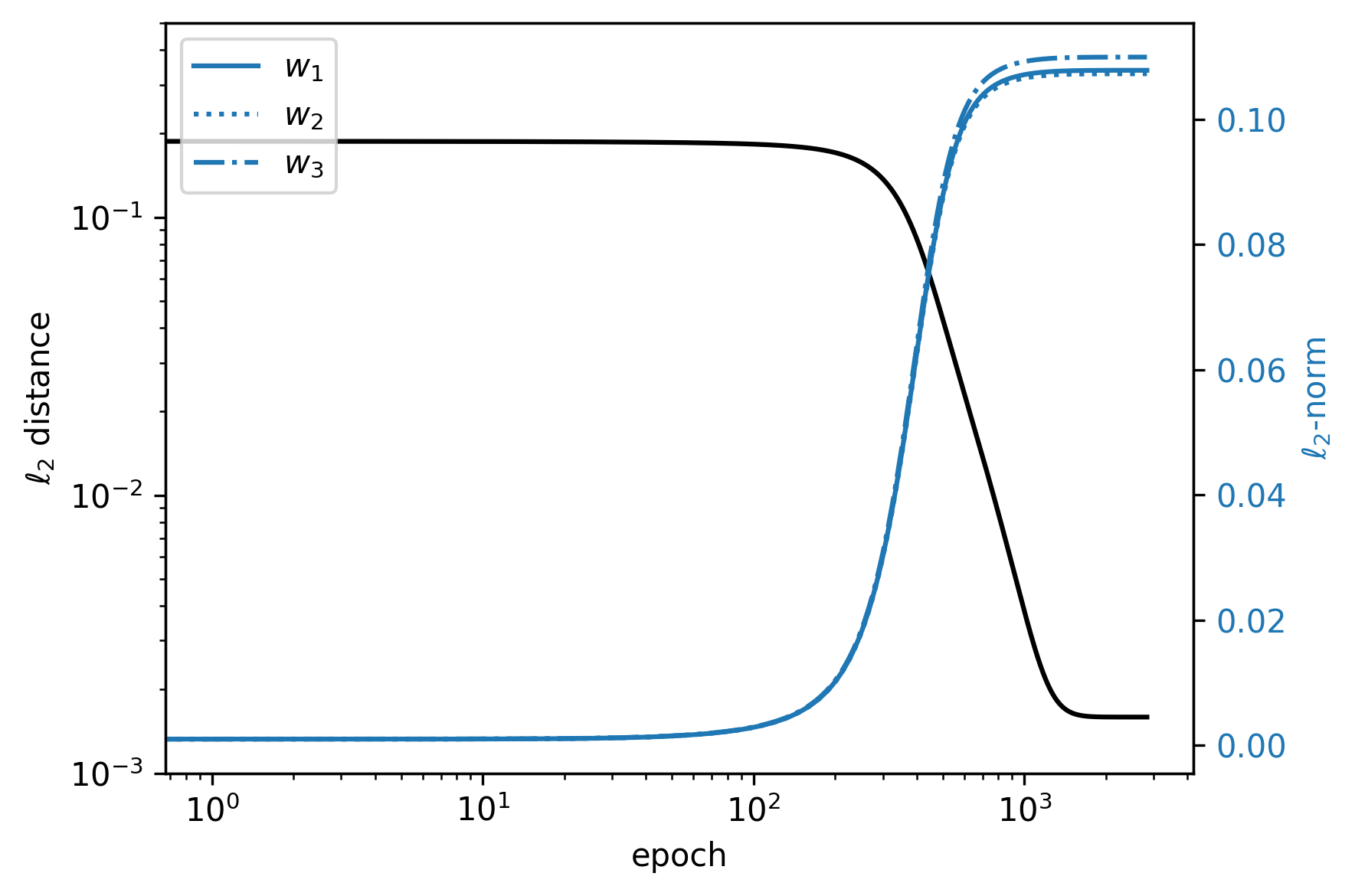}}
\caption{Training results for a multi-neuron linear network over the sparse regression problem described in Section~\ref{section: numerical simuations}. In blue, the norms $\|\mathbf{w}_i\|$ for the different neurons which evolve closely together. In black the $\ell_2$ distance between the gradient flow solution and implicit bias described in Observation~\ref{theorem: Q for muli-neuron linear network under norm evolution assumption}, showing a good agreement. Setting: $\delta = 0.003$; $\alpha = 0.001$; $m=3$.}\label{figure: norm evolution for real fully connected case}
\end{center}
\vskip -0.2in
\end{figure}
}

\section{The Effect of Initialization Shape and Scale}\label{sec:shape}
% \mnote{Note to self: need to mention somewhere that the technical details on the experiments are in appendix section \ref{section: numerical simuations}}

\citet{chizat2019lazy} identified the scale of the initialization as the crucial parameter for entering
the NTK regime, and \citet{Woodworth2020KernelAR} further characterized the transition between the NTK and rich regimes as a function of the initialization scale, and how this affects the generalization properties of the model. Both showed the close relation between the initialization scale and the model width.

However, we identify another hyper-parameter that controls this transition between NTK and rich regimes for two-layer models, the\textit{\textbf{ shape of the initialization}}, which describes the relative scale between different layers.

We first demonstrate this by using the example of two-layer diagonal linear networks described in Section~\ref{section: two-layer diagonal linear networks}.

\begin{figure}[t]
\counterwithin{figure}{section}
\vskip 0.2in
\begin{center}
\centerline{\includegraphics[width=.75\columnwidth]{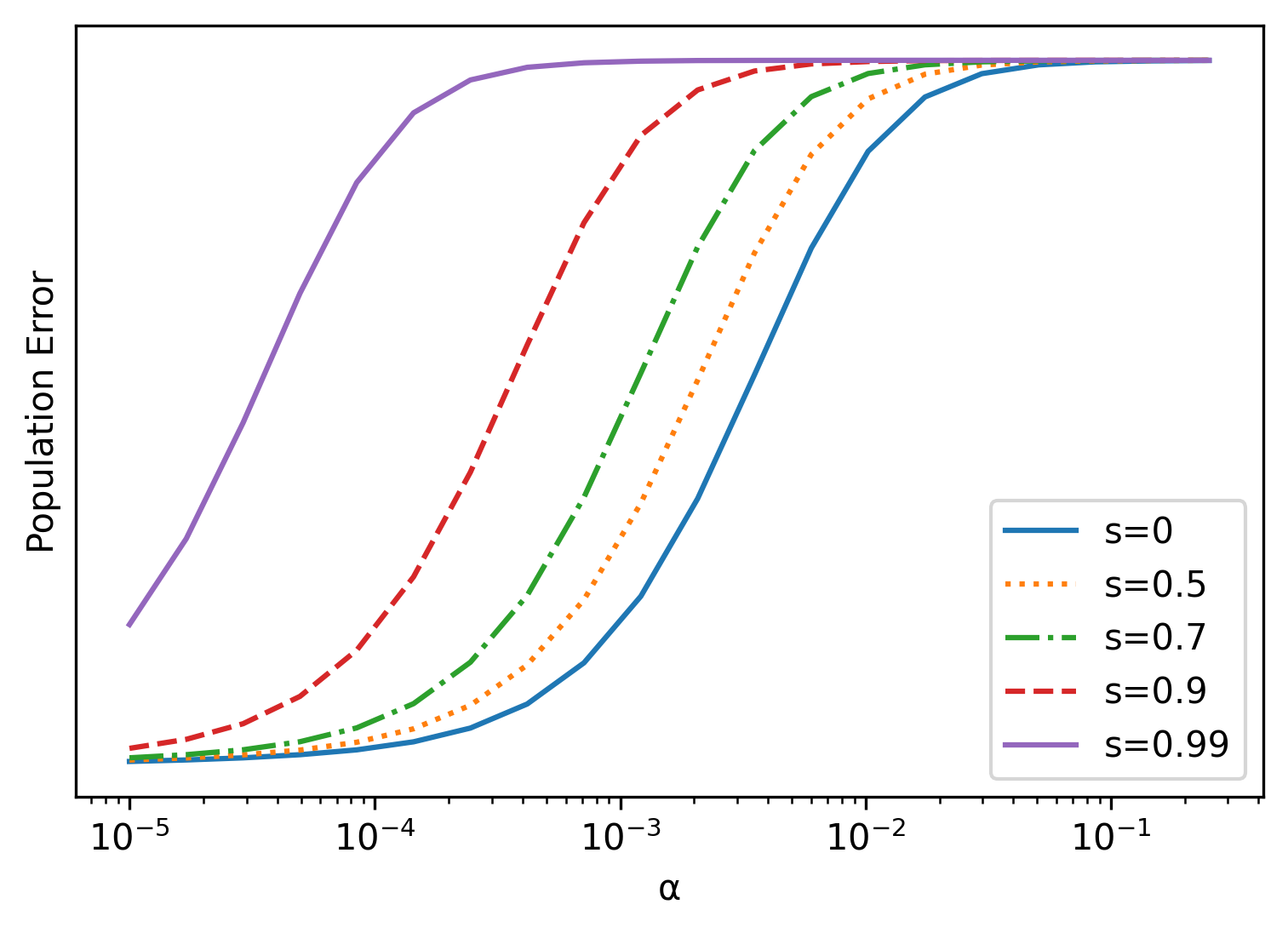}}
\caption{The population error of the gradient flow solution for a diagonal linear network as a function of initialization scale $\alpha$ and shape $s$, in the sparse regression problem described in Section~\ref{section: numerical simuations}.}\label{figure: the effect of shape in unbiased u-v model}
\end{center}
\vskip -0.2in
\end{figure}

\subsection{Diagonal Linear Networks}\label{sec:diagonal_shape}
We denote the per-neuron initialization shape $s_i$ and scale $\alpha_i$ as
\begin{align*}
s_{i}=\frac{\frac{\left|v_{+,i}\left(0\right)\right|}{\left|u_{+,i}\left(0\right)\right|}-1}{\frac{\left|v_{+,i}\left(0\right)\right|}{\left|u_{+,i}\left(0\right)\right|}+1} ~~~,~~~
\alpha_{i}=\left|u_{+,i}\left(0\right)\right|\left|v_{+,i}\left(0\right)\right|~.
\end{align*}

We can notice from Theorem~\ref{theorem: Q for unbiased u-v model} that $\sqrt{k_i}=2\left(u_{+,i}^{2}\left(0\right)+v_{+,i}^{2}\left(0\right)\right)$ controls the transition between the NTK and rich regimes.
Using the definitions of the initialization shape and scale we write \[\sqrt{k_i}  =4\alpha_{i}\frac{1+s_{i}^{2}}{1-s_{i}^{2}}\,.\]
Since $-1 < s_i < 1$, we can more accurately say that $\hat{k}_i = \frac{\alpha_i}{1 - s_i^2}$ is the factor controlling the transition.

For simplicity, we next assume that $\alpha_i = \alpha, \,\,\ s_i = s\,\,\, \forall i \in [d]$.
We can notice that for $k \rightarrow\infty$, i.e. $\frac{\alpha}{1 - s^2} \rightarrow\infty$ we get that
\begin{align*}
Q\left(w\right)=\sum_{i=1}^{d}q\left(w_{i}\right)=\sum_{i=1}^{d}\frac{1}{2\left(u_{+,i}^{2}\left(0\right)+v_{+,i}^{2}\left(0\right)\right)}w_{i}^{2}~,
\end{align*}
which is exactly the minimum RKHS norm with respect to the NTK at initialization.
Therefore, $k \rightarrow\infty$ leads to the NTK regime.
However, for $k \rightarrow 0$, i.e. $\frac{\alpha}{1 - s^2} \rightarrow 0$ we get that 
\[
Q\left(\mathbf{w}\right)=\sum_{i=1}^{d}\left|w_{i}\right|=\left\Vert \mathbf{w}\right\Vert _{1}~,
\]
which describes the rich regime.
The proof for the above two claims appears in Appendix~\ref{appendix: charachtersation of q for unbiased u-v model}.

Therefore, both the initialization scale $\alpha$ and the initialization shape $s$ affect the transition between NTK and rich regimes.
%We can notice that both there is a balance between the effect of the initiazliation scale $\alpha$, and the shape of initialization $s$.
While $\alpha \rightarrow 0$ pushes to the rich regime, $|s| \rightarrow 1$ pushes towards the NTK regime.
Since both limits can take place simultaneously, the regime we will converge to in this case is captured by the joint limit
\[
\hat{k}^{\star} = \lim_{\alpha \rightarrow 0, |s| \rightarrow 1} \frac{\alpha}{1 - s^2}~.
\]
Intuitively, when $\alpha\rightarrow 0$ faster than $s\rightarrow 1$ we will be in the rich regime, corresponding to $\hat{k}^{\star}=0$. However, when $s\rightarrow 1$ faster than $\alpha\rightarrow 0$ we will be in the NTK regime, corresponding to $\hat{k}^{\star}=\infty$. For any $0<\hat{k}^{\star}<\infty$ the $Q$-function in Eq. \eqref{q_func_uv} captures the implicit bias.

Figure~\ref{figure: the effect of shape in unbiased u-v model} demonstrates the interplay between the scale and the shape of initialization. See Section \ref{section: numerical simuations} for details.
%\amirg{if we end up dropping that section remove this ref too}  
The figure shows the population error (i.e., test error) of the learned model for different choices of scale $\alpha$ and shape $s$. Since in this case the ground truth is a sparse regressor, low error corresponds to the rich regime whereas high error corresponds to the NTK regime.
It can be seen that as the shape $s$ approaches 1, the model tends to converge to a solution in the NTK regime, or an intermediate regime even for very small initialization scales. These results give further credence to the idea that the learned model will perform best when trained with balanced initialization ($s=0$).
%Figure~\ref{figure: the effect of shape in unbiased u-v model} demonstrates the interplay between the scale and the shape of initialization, showing that as the shape $s$ approaches 1, the model tends to converge to a solution in the kernel regime, or an intermediate regime even for very small initialization scales. The figure shows the population error (i.e., test error) of the learned model. Since in this case the ground truth is a sparse regressor, low error corresponds to the rich regime whereas high error corresponds to the kernel regime.
%As we explain in Appendix~\ref{section: numerical simuations}, the transition between the rich $\ell_1$ regime and the kernel $\ell_2$ regime also relates directly to the population error of the learned model. This gives further credence to the idea that the learned model will perform best when trained with balanced initialization ($s\approx 0$).

% \textbf{Connection to previous results:} Note that for $s = 0$ the two-layered diagonal model is reduced into the squared diagonal model described in \cite{Woodworth2020KernelAR}.
% For the squared model, indeed only the initialization scale $\alpha_i$ controls the transition between kernel and rich regimes. \mnote{Do we need this paragraph?}\edward{No} \mnote{So I'm removing it.}
%We also notice that the behavior for $s = 0$ matches the results of the diagonal squared model described in \citet{Woodworth2020KernelAR}, since under unbiased and balanced initialization the two-layered diagonal model behaves exactly like the squared model.

\subsection{Fully Connected Linear Networks}\label{sec:full_connected_shape}
We begin by characterizing the effect of the initialization scale and shape for a single linear neuron with two layers, analyzed in Section \ref{sec:multi_neuron}.  Our characterization is based on the $q_{\delta,\tilde{\vw}(0)}(\mathbf{w})$ function in Theorem \ref{theorem: Q for single-neuron linear network}.
Due to the lack of space we defer the detailed analysis to Appendix \ref{appendix: charachtersation of q for fully connected single-neuron network} and provide here a summary of the results.

Similarly to the diagonal model, we again define the initialization shape parameter $s$ and scale parameter $\alpha$ as
\begin{align*}
s=\frac{\frac{\left|a\left(0\right)\right|}{\|\mathbf{w}(0)\|}-1}{\frac{\left|a\left(0\right)\right|}{\|\mathbf{w}(0)\|}+1}~~~~~,~~~~~\alpha=\left|a\left(0\right)\right|\|\mathbf{w}(0)\|~.
\end{align*}
Note that Theorem \ref{theorem: Q for single-neuron linear network} is correct for $0\leq s<1$ and any $\alpha>0$. We also employ the initialization orientation, defined as $\mathbf{u}=\frac{\vw(0)}{\|\vw(0)\|}$. Given $\alpha,s,\mathbf{u}$ we identify a few limit cases.

First, consider some fixed shape $0\leq s<1$. When $\alpha\rightarrow 0$ we will be in the Anti-NTK regime, where we obtain the minimum $\ell_2$-norm predictor. However, when $\alpha\rightarrow\infty$ we will be in the NTK regime, where the tangent kernel is fixed during training, and the implicit bias is given by the minimum RKHS norm predictor. Indeed, in this case we show in Appendix \ref{appendix: charachtersation of q for fully connected single-neuron network} that
\[
q(\tilde{\vw})\propto\left(\mathbf{\tilde{w}}-\mathbf{\tilde{w}}\left(0\right)\right)^{\top}\mathbf{B}\left(\mathbf{\tilde{w}}-\mathbf{\tilde{w}}\left(0\right)\right)~,
\]
where
\[
\mathbf{B}=\mathbf{I}-\frac{\left(1-s\right)^{2}}{2\left(1+s^{2}\right)}\mathbf{u}\mathbf{u}^{\top}
\]
and it is easy to verify that the tangent kernel is given by $K(\mathbf{x},\mathbf{x}')=\mathbf{x}^{\top}\mathbf{B}^{-1}\mathbf{x}'$.

Therefore, for any fixed shape, taking $\alpha$ from $0$ to $\infty$ we move from the Anti-NTK regime (with $\ell_2$ implicit bias) to the NTK regime where the bias is given by a Mahalanobis norm that depends on the shape and initialization orientation. Note that when $s\approx 1$, we have $\mathbf{B}\approx \mathbf{I}$, and thus we obtain the $\ell_2$ bias about the initialization, namely $\argmin_{\tilde{\vw}}\|\tilde{\vw}-\tilde{\vw}(0)\|$.
In the bottom row of Figure \ref{figure: q_delta contour for fully connected} we illustrate the $q$ function for $s=0.1$ and different values of $\alpha$.
Note that for intermediate $\alpha$ we obtain non-kernel implicit bias.

On the other hand, for any fixed scale $\alpha$, taking $s\rightarrow 1$ we will be in the NTK regime. This is because in this case the gradients of $a$ are much smaller that the gradients of $\vw$, and thus effectively,
only the $\vw$ parameters will optimize. Therefore, in this case we obtain a linear model (linear in the parameters) and the $\ell_2$ bias about the initialization, $\argmin_{\tilde{\vw}}\|\tilde{\vw}-\tilde{\vw}(0)\|$. This phenomenon is illustrated in the top row of Figure \ref{figure: q_delta contour for fully connected}.

To sum-up, in order to achieve non-kernel bias for fully connected networks we prefer balanced initialization ($s\approx 0$). This observation is in line with our observation for diagonal models in Section \ref{sec:diagonal_shape}.

\remove{
However, we can notice from Theorem~\ref{theorem: Q for single-neuron linear network} that the imbalance of the initialization ($\delta$) plays an important role in the implicit bias of the gradient flow for a single-neuron network.
If we again denote the shape of the initialization as:
\[
s=\frac{\frac{\left|a\left(0\right)\right|}{\|\mathbf{w}(0)\|}-1}{\frac{\left|a\left(0\right)\right|}{\|\mathbf{w}(0)\|}+1}
\]
And the initialization scale:
\[
\alpha=\left|a\left(0\right)\right|\|\mathbf{w}(0)\|
\]
Using the relation $\delta = \frac{4\alpha s}{1 - s^2}$ (proven in Lemma~\ref{lemma: delta_i relation}) we notice a few different regimes captured by the implicit bias, and governed by the ratio $\frac{\alpha}{1 - s}$. We defer the full details to Appendix~\ref{appendix: charachtersation of q for fully connected single-neuron network}.

When $\frac{\alpha}{1-s} \xrightarrow{} \infty$ we get that:
\[
q_\delta(\tilde{\mathbf{w}}) \propto (\tilde{\mathbf{w}} - \tilde{\mathbf{w}}(0))^\top\left(a^2(0)\mathbf{I} - \tilde{\mathbf{w}}(0)\tilde{\mathbf{w}}(0)^\top\right)^{-1}(\tilde{\mathbf{w}} - \tilde{\mathbf{w}}(0))
\]
Which is exactly the min RKHS norm with respect to the NTK at initialization. 

When $\frac{\alpha}{1-s} \xrightarrow{} 0$ we get that:
\[
q_\delta(\tilde{\mathbf{w}}) = \|\tilde{\mathbf{w}}\|^{\frac32}
\]
which describes the rich regime.

Therefore, once again we encounter the interplay between initialization scale and shape define by the joint limit $\lim_{\alpha \xrightarrow{} 0, |s| \xrightarrow{} 1} \frac{\alpha}{1 - s}$.
While small initialization push the fully connected model towards the rich regime, an imbalanced initialization where $s \rightarrow 1$ push it towards the NTK regime, even for very small initialisation scales.
We visualize the landscape of $q_\delta(\tilde{\mathbf{w}})$ under the different regimes in Figure~\ref{figure: q_delta contour for fully connected}, and show $q_\delta(\tilde{\mathbf{w}})$ indeed captures the implicit bias the gradient flow converges to in Figure~\ref{figure: Q convergence single-neuron linear network}.
}

\begin{figure}[t]
 \counterwithin{figure}{section}
\vskip 0.2in
\begin{center}
\counterwithin{figure}{section}
\centerline{\includegraphics[width=0.75\columnwidth]{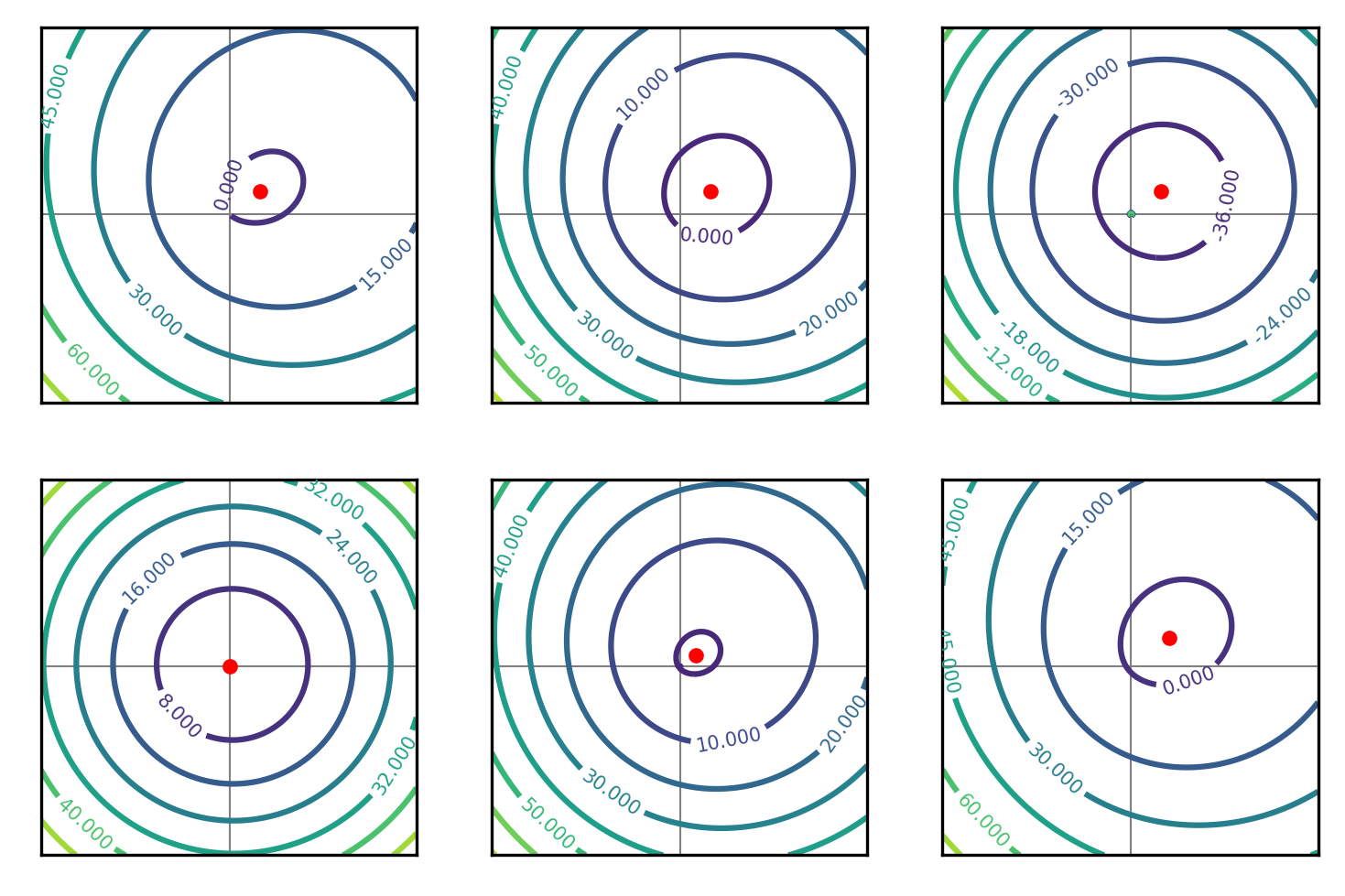}}
\caption{Contour plots of $q_{\delta,\tilde{\vw}(0)}(\tilde{\mathbf{w}})$ presented in Theorem~\ref{theorem: Q for single-neuron linear network} for the case of $d=2$, $\tilde{\mathbf{w}}(0) = \alpha \cdot [0.6, 0.8]$. Top row: $\alpha = 2$ and $s = [0, 0.2, 0.8]$ (left to right in order). Bottom row: $s=0.1$ and $\alpha = [0.01, 1, 2.5]$ (left to right in order). The red dot marks the vector $\tilde{\mathbf{w}}(0)$.
}\label{figure: q_delta contour for fully connected}
\end{center}
\vskip -0.2in
\end{figure}

\remove{
\begin{figure}[t]
\counterwithin{figure}{section}
\vskip 0.2in
\begin{center}
\centerline{\includegraphics[width=\columnwidth]{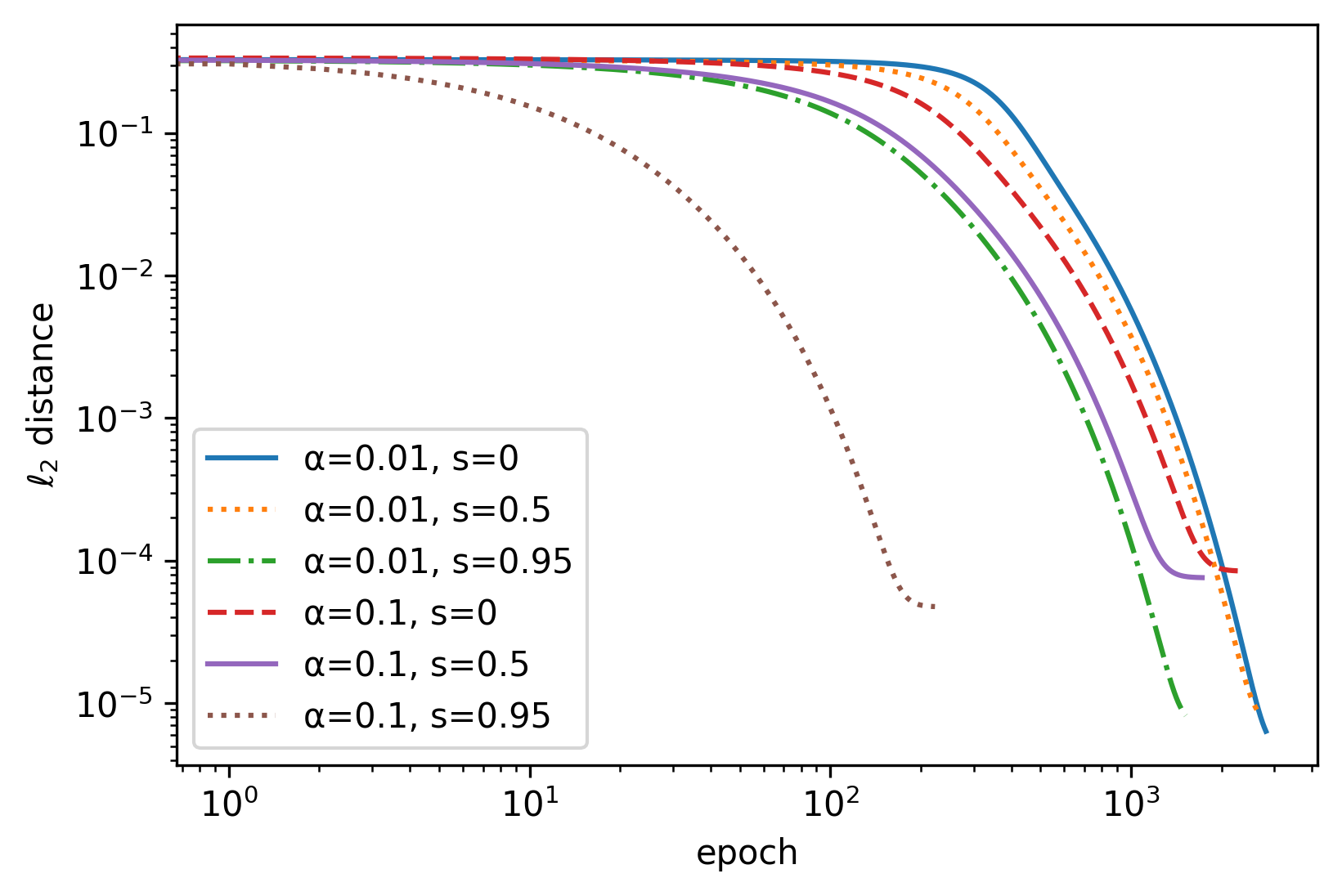}}
\caption{The $\ell_2$ distance between gradient flow solution (along the training process) and the implicit bias solution described in Theorem~\ref{theorem: Q for single-neuron linear network} for a fully connected single-neuron linear network for different initialization scales $\alpha$ and shapes $s$, in the sparse regression problem described in section~\ref{section: numerical simuations}.}\label{figure: Q convergence single-neuron linear network}
\end{center}
\vskip -0.2in
\end{figure}
}

\remove{
\paragraph{Implicit Bias and Condensation:}
\citet{Luo2020PhaseDF} identified the \textit{condensed} regime for infinite-width two-layer ReLU networks, where for infinitesimal initialization
%the rich regime (which they refer to as the \textit{condensed} regime) is characterised by
neurons are condensed at several discrete orientations. 
A similar observation was made by \citet{Maennel2018GradientDQ,Chizat2018Transport}, referred to as the quantization effect, suggesting the same empirical insight where the weight vectors tend to concentrate at a small number of directions determined by the input data.
}

\remove{
Observing the implicit bias form described in Observation~\ref{theorem: Q for muli-neuron linear network under norm evolution assumption} for a multi-neuron network (under the assumption of similar evolution of the norms across all neurons) we get that for infinitesimal initialization
the implicit bias for multiple neurons takes the form $\argmin_{\tilde{\mathbf{w}}_1,...,\tilde{\mathbf{w}}_m} \sum_{i=1}^m\|\tilde{\mathbf{w}}_i\|^{\frac32}$.
This $\ell_{1.5}$-norm
over the norms of weight vectors of different neurons
%The following form takes a behavior closer to $\ell_1$ of the different norms,
pushes towards a clustered solution for the different neurons, which we suggest might explain the quantization effect found in practice.
Figure~\ref{figure: sprase rich regime fully connected} demonstrates this empirically.

\begin{figure}[t]
\counterwithin{figure}{section}
\vskip 0.2in
\begin{center}
\centerline{\includegraphics[width=\columnwidth]{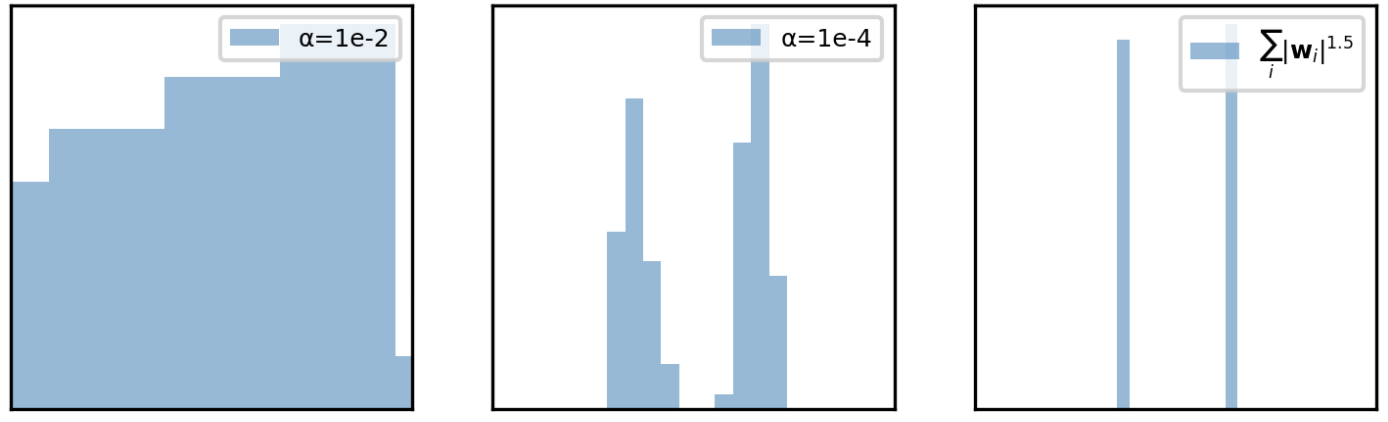}}
\caption{Quantization effect shown by the distribution of the relative directions of the weight vectors of a fully connected linear network ($m=100$) for different initialization scales $\alpha$.
%compared to the rich for a single-neuron when extended using Observation~\ref{theorem: Q for muli-neuron linear network under norm evolution assumption}.
Left to right: gradient flow solution for $\alpha=0.01$ showing no quantization effect, $\alpha=0.0001$ showing a clear quantization effect, the minimization of $\argmin_{\tilde{\mathbf{w}}_1,...,\tilde{\mathbf{w}}_m} \sum_{i=1}^m\|\tilde{\mathbf{w}}_i\|^{\frac32}$ under the constraint of zero training loss. Setting: $s = 0.9$, and the sparse regression problem ($d=100,N=10$) described in Section~\ref{section: numerical simuations}.}\label{figure: sprase rich regime fully connected}
\end{center}
\vskip -0.2in
\end{figure}
}

\section{Two-Layer Single Leaky ReLU Neuron} \label{sec: Non Linear Fully Connected Networks}
We further extend our analysis to the class of fully connected two-layer single neuron with Leaky ReLU activations, $\sigma(x)=\max(x,\rho x)$ for $\rho>0$. This is a first step in analyzing the implicit bias of practical non-linear fully connected models for regression with the square loss.

We follow \citet{Dutta2013ApproximateKP} in the definition of KKT conditions for non-smooth optimization problem (see the definition in Appendix \ref{appendix: proof of Q for single-neuron with leaky relu activation}).

\begin{theorem}
\label{theorem: Q for single-neuron with leaky relu activation}
For a single-neuron network with Leaky ReLU activation $\sigma$ of any slope $\rho > 0$, and for any $\delta \ge 0$, assume $a(0)\vw(0)\neq\mathbf{0}$. If the gradient flow solution $(a(\infty),\mathbf{w}(\infty))$ satisfies $a(\infty)\sigma(\mathbf{X}^{\top}\mathbf{w}(\infty)) = \mathbf{y}$, then $(a(\infty),\mathbf{w}(\infty))$ satisfies the KKT conditions (according to definition~\ref{def: KKT point}) of the following optimization problem:
\[
 (a(\infty),\mathbf{w}(\infty)) = \argmin_{a,\mathbf{w}} q_{\delta}(a\mathbf{w}) \quad \mathrm{s.t.\,\,}
 a\sigma(\mathbf{X}^{\top}\mathbf{w}) = \mathbf{y}
\]
and $q_\delta(\mathbf{w})$ is identical to the definition given in Theorem~\ref{theorem: Q for single-neuron linear network}.
\end{theorem}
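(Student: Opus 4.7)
My plan is to leverage Theorem~\ref{theorem: Q for single-neuron linear network} by treating the Leaky ReLU as a time-varying positive rescaling of each input. Writing $\sigma(x) = s(x)\, x$ with $s(x) \in \{1,\rho\}$, I set $s_n(t) := s(\mathbf{w}(t)^{\top}\mathbf{x}^{(n)})$. A direct computation of the (Clarke) gradient flow gives
\[
\dot{a} = \mathbf{w}^{\top}\sum_n s_n(t)\mathbf{x}^{(n)} r^{(n)}(t), \qquad \dot{\mathbf{w}} = a\sum_n s_n(t)\mathbf{x}^{(n)} r^{(n)}(t).
\]
The same symmetric cancellation as in the linear case preserves balancedness $a^{2}-\|\mathbf{w}\|^{2}=\delta$, so the dynamics of $\tilde{\mathbf{w}}=a\mathbf{w}$ have the same metric tensor $(\delta+\|\mathbf{w}\|^{2})\mathbf{I}+\mathbf{w}\mathbf{w}^{\top}$ as in the linear single-neuron case, with each input merely rescaled by the positive factor $s_n(t)$.

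I then reuse the same warping $g$, potential $\hat{q}_{\delta}$, and offset $\mathbf{z}$ constructed in Theorem~\ref{theorem: Q for single-neuron linear network}. Because $\mathbf{z}$ was chosen so that $\nabla q_{\delta}(\tilde{\mathbf{w}}(0)) = 0$, the warped-IMD derivation of Section~\ref{sec:warping} integrates to
\[
\nabla q_{\delta}(\tilde{\mathbf{w}}(\infty)) = \sum_{n=1}^{N} c_n\, \mathbf{x}^{(n)}, \qquad c_n := \int_{0}^{\infty} g(\tilde{\mathbf{w}}(t))\, s_n(t)\, r^{(n)}(t)\, dt,
\]
the only change being that $s_n(t)$ is absorbed into the integral defining $c_n$. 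Convergence of this integral is inherited from the linear case since $s_n(t)\in[\rho,1]$ is bounded.

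Finally, I verify the non-smooth KKT conditions of Definition~\ref{def: KKT point} for the Lagrangian $L = q_{\delta}(a\mathbf{w}) - \sum_n \lambda_n\bigl(a\sigma(\mathbf{w}^{\top}\mathbf{x}^{(n)}) - y^{(n)}\bigr)$. The Clarke chain rule gives stationarity in $(a,\mathbf{w})$ iff there exist $\tilde{s}_n \in \partial^{o}\sigma(\mathbf{w}(\infty)^{\top}\mathbf{x}^{(n)})$ and $\lambda_n \in \mathbb{R}$ with
\[
a(\infty)\nabla q_{\delta}(\tilde{\mathbf{w}}(\infty)) = -a(\infty)\sum_n \lambda_n \tilde{s}_n \mathbf{x}^{(n)}, \quad \mathbf{w}(\infty)^{\top}\nabla q_{\delta}(\tilde{\mathbf{w}}(\infty)) = -\sum_n \lambda_n \sigma(\mathbf{w}(\infty)^{\top}\mathbf{x}^{(n)}).
\]
Since $\tilde{\mathbf{w}}(\infty)\neq\mathbf{0}$ forces $a(\infty)\neq 0$, I compare with the integrated formula from Step~2 and, for each $n$, select $\tilde{s}_n\in\partial^{o}\sigma(\mathbf{w}(\infty)^{\top}\mathbf{x}^{(n)})$ together with $\lambda_n := -c_n/\tilde{s}_n$; this is always feasible because $\tilde{s}_n\geq\rho>0$, with freedom to choose any value in $[\rho,1]$ at points where $\mathbf{w}(\infty)^{\top}\mathbf{x}^{(n)} = 0$. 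The second stationarity equation follows from the first by taking its inner product with $\mathbf{w}(\infty)$ and using the identity $\tilde{s}\, x = \sigma(x)$ valid for every $\tilde{s}\in\partial^{o}\sigma(x)$ (both sides vanish at $x=0$). I expect the main obstacle to be exactly this bookkeeping of the non-smooth KKT conditions at kinks, where the interval-valued Clarke subdifferential $[\rho,1]$ provides the flexibility needed to match the integrated coefficients $c_n$.
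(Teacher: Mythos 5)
Your proposal is correct and follows essentially the same route as the paper's proof: write the gradient inclusion with the Clarke subgradient $s_n(t)\in\partial^o\sigma$, observe that balancedness $a^2-\|\mathbf{w}\|^2=\delta$ is unchanged so the linear-case metric tensor and warping $g$ carry over verbatim, integrate to obtain $\nabla q_{\delta}(\tilde{\mathbf{w}}(\infty))=\sum_n c_n\mathbf{x}^{(n)}$, and then verify the nonsmooth KKT conditions over $(a,\mathbf{w})$ via the chain rule. The only cosmetic difference is bookkeeping: the paper pulls $c^{(n)}(\infty)$ out of the integral and packages it into modified inputs $\tilde{\mathbf{x}}^{(n)}$, whereas you keep $c_n$ whole and fold the choice of $\tilde{s}_n\in\partial^o\sigma(\mathbf{w}(\infty)^\top\mathbf{x}^{(n)})$ into the Lagrange multiplier $\lambda_n=-c_n/\tilde{s}_n$; your observation that the $a$-stationarity follows from the $\mathbf{w}$-stationarity by inner product with $\mathbf{w}(\infty)$ and the identity $\tilde{s}x=\sigma(x)$ is a valid (and slightly cleaner) way to see the same thing.
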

The proof appears in Appendix~\ref{appendix: proof of Q for single-neuron with leaky relu activation}.

Recently, \citet{Vardi2020ImplicitRI} proved a negative result for depth 2 single ReLU neuron with the square loss. They showed that it is impossible to 
characterize the implicit regularization by any explicit function of the model parameters. We note that Theorem \ref{theorem: Q for single-neuron with leaky relu activation} does not contradict the result of \citet{Vardi2020ImplicitRI} since it does not include the ReLU case ($\rho=0$).

\section{Numerical Simulations Details}
\label{section: numerical simuations}

In order to study the effect of initialization over the implicit bias of gradient flow, we follow the sparse regression problem suggested by \citet{Woodworth2020KernelAR}, where $\mathbf{x}^{(1)}, . . . , \mathbf{x}^{(N)} \sim \mathcal{N}(0, I)$ and $y^{(n)} \sim \mathcal{N}(\langle \beta^*, \mathbf{x}^{(n)}\rangle, 0.01)$ and $\beta^*$ is $r^*$-sparse, 
with non-zero entries equal to $1/\sqrt{r^*}$.
For every $N \leq d$, gradient flow will generally reach a zero training error solution, however not all of these solutions will be the same, allowing us to explore the effect of initialization over the implicit bias.

This setting was also shown by \citet{Woodworth2020KernelAR} to be tightly linked to generalization in certain settings, since the minimal $\ell_1$ solution has a sample complexity of $N = \Omega(r^*\log d)$, while the minimal $\ell_2$ solution has a much higher sample complexity of $N = \Omega(d)$. Throughout all the simulations, unless stated otherwise, we have used $N=100$, $d=1000$, $r^*=5$. 

See Figure \ref{figure: the effect of shape in unbiased u-v model} for results, and Section \ref{sec:diagonal_shape} for discussion.

\section{Conclusion}
Understanding generalization in deep learning requires understanding the implicit biases of gradient methods. Much remains to be understood about these, and even a complete understanding of linear networks is yet to be attained. Here we make progress in this direction by developing a new technique, which we apply to derive biases for diagonal and fully connected networks with independently trained layers (i.e., without shared weights). This allows us to study the effect of the initialization shape on implicit bias.

From a practical perspective it has been previously observed that balance plays an important role in initialization. For example, Xavier initialization \cite{Glorot2010UnderstandingTD} is roughly balanced by construction, and our results now provide additional theoretical support for the practical utility of this commonly used approach. We believe it is likely that further theoretical results like those presented here, can lead to improved initialization methods that lead to more effective convergence to rich regime solutions.

\remove{
\begin{figure}[t]
\counterwithin{figure}{section}
\vskip 0.2in
\begin{center}
\centerline{\includegraphics[width=\columnwidth]{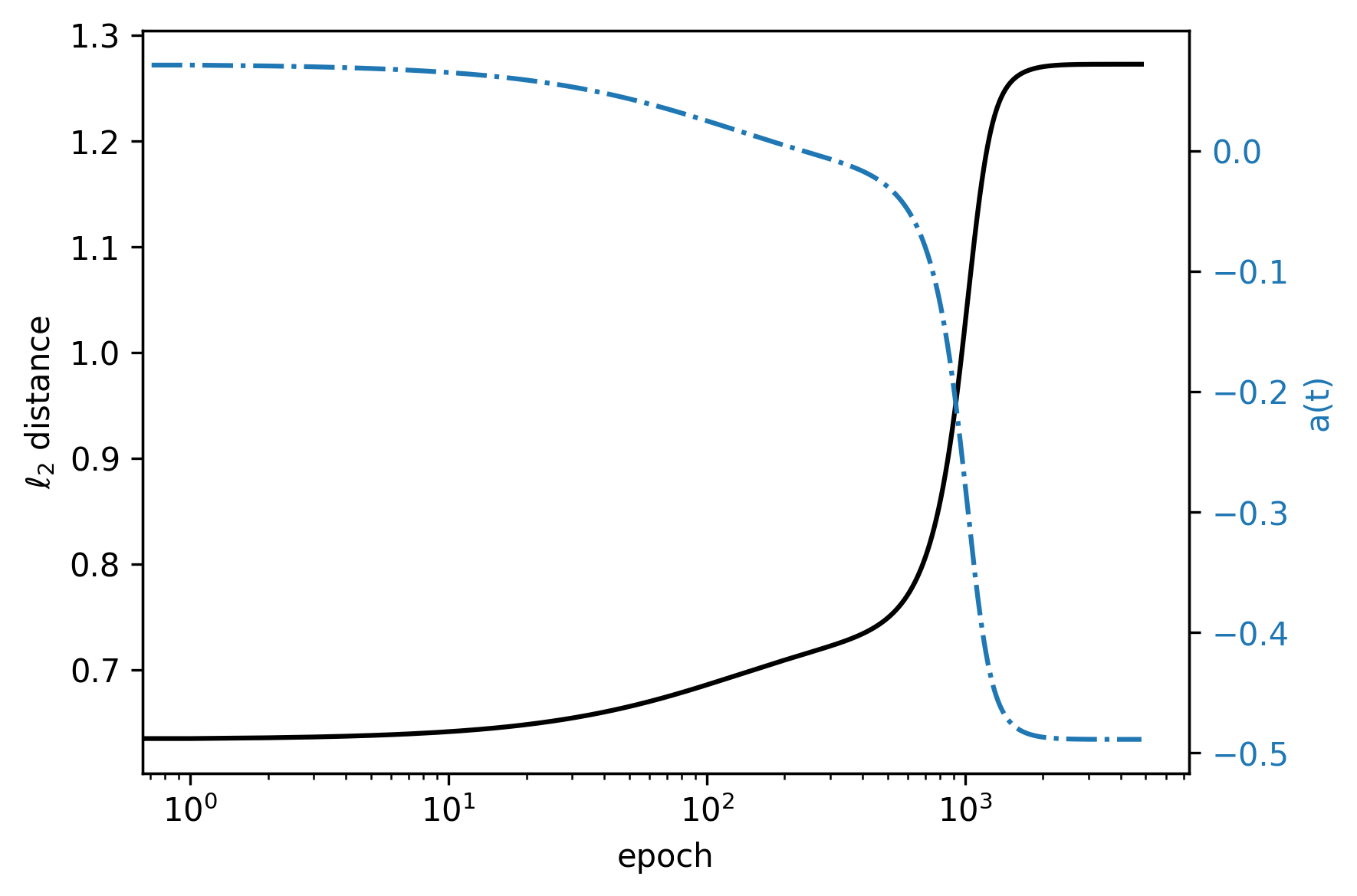}}
\caption{The $\ell_2$ distance between the gradient flow solution (along the training process) and the implicit bias solution described in Theorem~\ref{theorem: Q for single-neuron linear network} for the case of $\delta < 0$. During training the second layer weight (in blue) switches signs pushing the gradient flow away from the expected implicit biased (their L2-norm distance in black). Setting: $s=-0.9$; $\alpha=0.1$.}\label{figure: Q convergence single-neuron sign switch}
\end{center}
\vskip -0.2in
\end{figure}

\begin{figure}[t]
\counterwithin{figure}{section}
\vskip 0.2in
\begin{center}
\centerline{\includegraphics[width=\columnwidth]{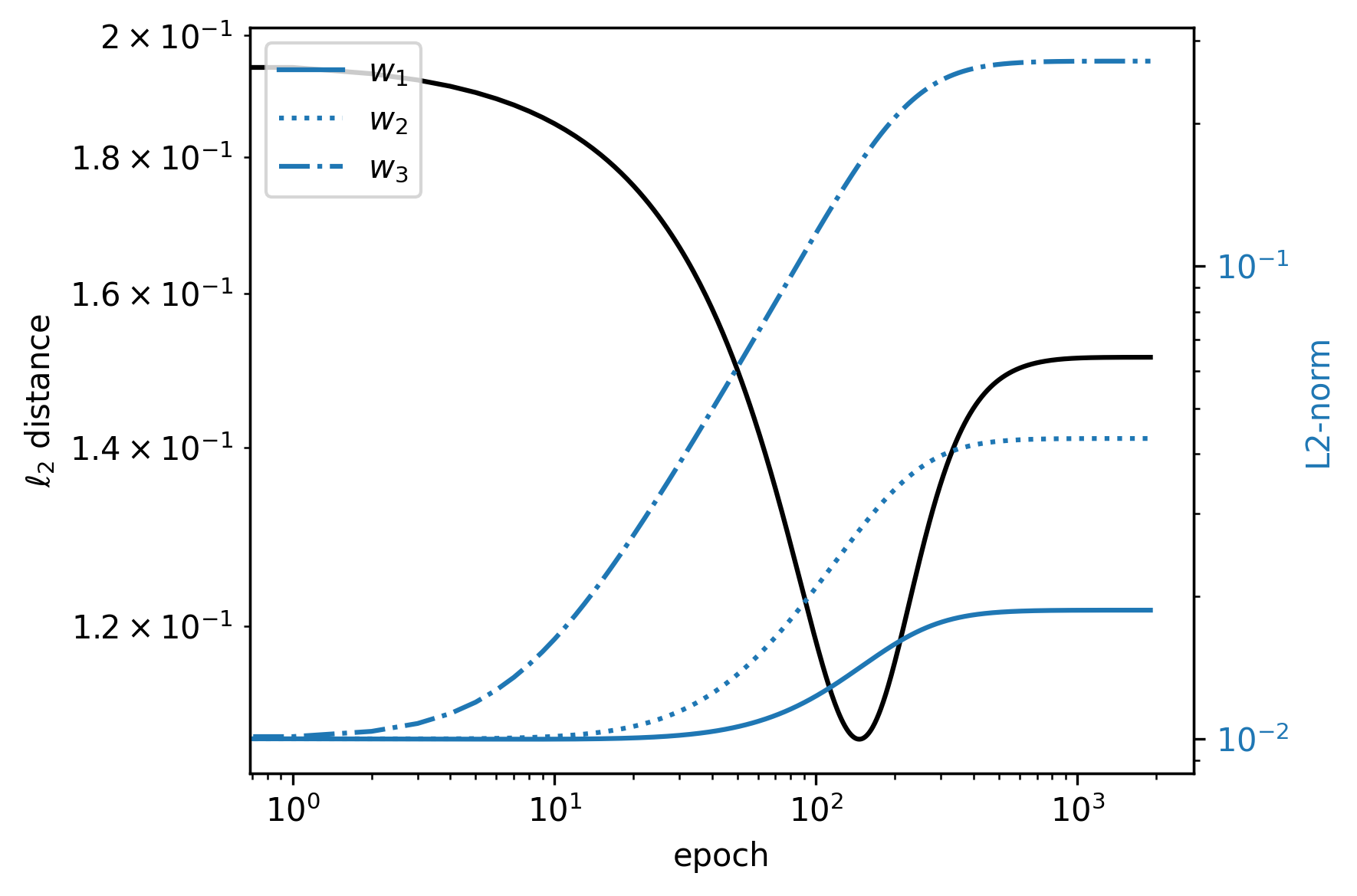}}
\caption{Training results for a multi-neuron linear network over the sparse regression described in Section~\ref{section: numerical simuations}. In blue, the norms $\tilde{\mathbf{w}}_i$ for the different neurons initialized in the shapes $s=0$, $s=0.5$, $s=0.9$ respectively.  In black the $\ell_2$-norm distance between the gradient flow solution and implicit bias described in Theorem~\ref{theorem: Q for muli-neuron linear network under norm evolution assumption}. Setting $m = 3$; $\alpha = 0.01$.}\label{figure: different initialization scales setting for fully connected}
\end{center}
\vskip -0.2in
\end{figure}
}

\remove{
\section{From Gradient Flow to Gradient Descent}
In this section, we extend our analysis beyond gradient flow dynamics, i.e., into small yet finite learning rates\mnote{we should choose one terminology for step size/ learning rate}. The finite nature of the learning rate causes gradient descent iterates to stray \mnote{deviate?} from the continues steepest descent (gradient flow) path. However, \citet{barrett2021implicit} showed that gradient descent trajectory closely follows the steepest descent \edward{gradient flow?} path of a modified loss function $\tilde{\mathcal{L}}$ defined as:
\[
    \tilde{\mathcal{L}} = \mathcal{L} + \frac{\eta}{4}\norm{\nabla \cL}^2\,,
\]
where $\eta>0$ is the learning rate.
We will now demonstrate how using the modified loss enables us to integrate the learning rate effect into our results. For simplicity, we will focus on the two-layer diagonal linear network discussed in Section~\ref{section: two-layer diagonal linear networks}. The modified loss for this model is
\[
    \tilde{\mathcal{L}} = \mathcal{L}
    +\frac{\eta}{4}\sum_{j=1}^{d}p_j\rb{t}\left(\sum_{n}x_{j}^{\left(n\right)}r^{(n)}\left(t\right)\right)^{2}\,,
\]
where we defined \[p_j\rb{t} = u_{+,j}^2\rb{t}+v_{+,j}^2\rb{t}+u_{-,j}^2\rb{t}+v_{-,j}^2\rb{t}\,.\]
Thus, defining
\[
    z^{(n)}\rb{t}=r^{\left(n\right)}\rb{t}+\frac{\eta}{2}\sum_{j=1}^{d}p_j\rb{t}\left(\sum_{m}x_{j}^{\left(n\right)}x_{j}^{\left(m\right)}r^{(m)}\left(t\right)\right)
\] we obtain the modified gradient flow
\edward{not sure we need these equations here, depends how much space we have. Maybe better give a high level proof sketch. The important point is that we identify modified variables $\tilde{u}=\mu(\eta) u$ and $\tilde{v}=\mu(\eta) v$ for which we get the same Q function over $\tilde{w}=\tilde{u}\tilde{v}$. Then from $\tilde{w}=\mu^2(\eta)w$ we get the result.}
\begin{align*}
\dot{u}_{+,i}\left(t\right) & =-\frac{\partial\mathcal{\tilde{\mathcal{L}}}\left(\vect{u},\vect{v}\right)}{\partial u_{+,i}}\nonumber \\
 & =v_{+,i}\sum_{n=1}^{N}x_{i}^{\left(n\right)}z^{(n)}\rb{t}
 -\frac{\eta}{2}u_{+,i}\left(\sum_{n}x_{i}^{\left(n\right)}r^{(n)}\left(t\right)\right)^{2}\,,
% \label{eq:d/dt(u_plus)}
\end{align*}

\begin{align*}
\dot{v}_{+,i}\left(t\right) & =-\frac{\partial\mathcal{\tilde{\mathcal{L}}}\left(\vect{u},\vect{v}\right)}{\partial v_{+,i}}\nonumber \\
 & =u_{+,i}\sum_{n=1}^{N}x_{i}^{\left(n\right)}z^{(n)}\rb{t}-\frac{\eta}{2}v_{+,i}\left(\sum_{n}x_{i}^{\left(n\right)}r^{\left(n\right)}\left(t\right)\right)^{2}\,,
%  \label{eq: d/dt(v_plus)}
\end{align*}

\begin{align*}
\dot{u}_{-,i}\left(t\right) & =-\frac{\partial\mathcal{\tilde{\mathcal{L}}}\left(\vect{u},\vect{v}\right)}{\partial u_{-,i}}\nonumber \\
 & =-v_{-,i}\sum_{n=1}^{N}x_{i}^{\left(n\right)}z^{(n)}\rb{t}-\frac{\eta}{2}u_{-,i}\left(\sum_{n}x_{i}^{\left(n\right)}r^{\left(n\right)}\left(t\right)\right)^{2}\,,
%  \label{eq: d/dt(u_minus)}
\end{align*}

\begin{align*}
\dot{v}_{-,i}\left(t\right) & =-\frac{\partial\mathcal{\tilde{\mathcal{L}}}\left(\vect{u},\vect{v}\right)}{\partial v_{-,i}}\nonumber \\
 & =-u_{-,i}\sum_{n=1}^{N}x_{i}^{\left(n\right)}z^{(n)}\rb{t}-\frac{\eta}{2}v_{-,i}\left(\sum_{n}x_{i}^{\left(n\right)}r^{\left(n\right)}\left(t\right)\right)^{2}\,.
%  \label{eq: d/dt(v_minus)}
\end{align*}
Analyzing these dynamics, we obtain the following result
\begin{theorem}
For any $s_i$, $\alpha_i$ if the gradient descent solution $\mathbf{w}^\infty$  satisfies $\mathbf{X}^\top\mathbf{w}^\infty = \mathbf{y}$, then:

\[
\mathbf{w}^\infty = \argmin_\mathbf{w} Q_{\alpha/c_i^2, s}(\mathbf{w}) \quad \mathrm{s.t.\,\,}\forall n:\mathbf{{w}}^{\top}\mathbf{x}^{\left(n\right)}=y^{\left(n\right)}\,,
\]
where $Q$ was defined in Eq. TBD and
\[
 c_i = \,.
\]
This theorem is proved in appendix section. \mnote{Maybe add something on the idea}
\textbf{Remark:} Note that this result relies on the modified loss approximation, which only holds for small learning rates \edward{Can we say something about what is "small" ?}. However, \citet{barrett2021implicit} showed empirically that this is a good approximation for realistic learning rates and architectures. \mnote{Need to reprhase this.}
\end{theorem}
}

\section{Acknowledgements}
This research is supported by the European Research Council
(ERC) under the European Unions Horizon 2020 research
and innovation programme (grant ERC HOLI 819080) and by the Yandex Initiative in Machine Learning. 
The research of DS was supported by the Israel Science Foundation (grant No. 31/1031), by the Israel Inovation Authority (the Avatar Consortium), and by the Taub Foundation.
BW is supported by a Google Research PhD Fellowship.

\bibliography{arxiv}

%%%%%%%%%%%%%%%%%%%%%%%%%%%%%%%%%%%%%%%%%%%%%%%%%%%%%%%%%%%%%%%%%%%%%%%%%%%%%%%
%%%%%%%%%%%%%%%%%%%%%%%%%%%%%%%%%%%%%%%%%%%%%%%%%%%%%%%%%%%%%%%%%%%%%%%%%%%%%%%
% DELETE THIS PART. DO NOT PLACE CONTENT AFTER THE REFERENCES!
%%%%%%%%%%%%%%%%%%%%%%%%%%%%%%%%%%%%%%%%%%%%%%%%%%%%%%%%%%%%%%%%%%%%%%%%%%%%%%%
%%%%%%%%%%%%%%%%%%%%%%%%%%%%%%%%%%%%%%%%%%%%%%%%%%%%%%%%%%%%%%%%%%%%%%%%%%%%%%%
\onecolumn
\appendixpage
\appendix

\section{Proof of Theorem ~\ref{theorem: Q for unbiased u-v model}}
\label{appendix: proof of Q for unbiased u-v model}
\begin{proof}
We examine a two-layer ``diagonal linear network'' with untied weights
\begin{align*}
f(\mathbf{x};\mathbf{u}_{+},\mathbf{u}_{-},\mathbf{v}_{+},\mathbf{v}_{-})&=\left(\mathbf{u}_{+}\circ \mathbf{v}_{+}-\mathbf{u}_{-}\circ \mathbf{v}_{-}\right)^\top\mathbf{x} = \tilde{\mathbf{w}}^{\top}\mathbf{x}~,
\end{align*}
where
\begin{equation}
\label{eq: def of linear w for unbiased u-v }
\tilde{\mathbf{w}} = \mathbf{u}_{+}\circ \mathbf{v}_{+}-\mathbf{u}_{-}\circ \mathbf{v}_{-}~.
\end{equation}

The gradient flow dynamics of the parameters is given by:
\[
\frac{du_{+,i}}{dt} = -\frac{\partial \mathcal{L}}{\partial u_{+, i}} = v_{+, i}(t)\left(\sum_{n=1}^{N}{x}_i^{\left(n\right)}r^{\left(n\right)}(t)\right)
\]
\[
\frac{du_{-,i}}{dt} = -\frac{\partial \mathcal{L}}{\partial u_{-, i}} = - v_{-, i}(t)\left(\sum_{n=1}^{N}{x}_i^{\left(n\right)}r^{\left(n\right)}(t)\right)
\]
\[
\frac{dv_{+,i}}{dt} = -\frac{\partial \mathcal{L}}{\partial v_{+, i}} = u_{+, i}(t)\left(\sum_{n=1}^{N}{x}_i^{\left(n\right)}r^{\left(n\right)}(t)\right)
\]
\[
\frac{dv_{-,i}}{dt} = -\frac{\partial \mathcal{L}}{\partial v_{-, i}} = - u_{-, i}(t)\left(\sum_{n=1}^{N}{x}_i^{\left(n\right)}r^{\left(n\right)}(t)\right)
\]
where we denote the residual
\[
r^{\left(n\right)}(t) \triangleq y^{(n)} - \tilde{\mathbf{w}}^{\top}(t)\mathbf{x}^{(n)}~.
\]

From Eq.~\ref{eq: def of linear w for unbiased u-v } we can write:
\begin{align*}
\frac{d\tilde{w}_{i}}{dt}&=\frac{du_{+,i}}{dt}v_{+,i}+u_{+,i}\frac{dv_{+,i}}{dt}-\frac{du_{-,i}}{dt}v_{-,i}-u_{-,i}\frac{dv_{-,i}}{dt}\\&=v_{+,i}^{2}\sum_{n=1}^{N}x_{i}^{\left(n\right)}r^{\left(n\right)}+u_{+,i}^{2}\sum_{n=1}^{N}x_{i}^{\left(n\right)}r^{\left(n\right)}+v_{-,i}^{2}\sum_{n=1}^{N}x_{i}^{\left(n\right)}r^{\left(n\right)}+u_{-,i}^{2}\sum_{n=1}^{N}x_{i}^{\left(n\right)}r^{\left(n\right)}\\&=\left(u_{+,i}^{2}+v_{+,i}^{2}+u_{-,i}^{2}+v_{-,i}^{2}\right)\sum_{n=1}^{N}x_{i}^{\left(n\right)}r^{\left(n\right)}~.
\end{align*}

Thus,
\[
\frac{1}{u_{+,i}^{2}+v_{+,i}^{2}+u_{-,i}^{2}+v_{-,i}^{2}}\frac{d\tilde{w}_{i}}{dt}=\sum_{n=1}^{N}x_{i}^{\left(n\right)}r^{\left(n\right)}~.
\]

We note that the quantity $u_{+,i}u_{-,i}+v_{+,i}v_{-,i}$ is conserved during training, since
\begin{align*}
\frac{d}{dt}\left(u_{+,i}u_{-,i}+v_{+,i}v_{-,i}\right)&=\frac{du_{+,i}}{dt}u_{-,i}+u_{+,i}\frac{du_{-,i}}{dt}+\frac{dv_{+,i}}{dt}v_{-,i}+v_{+,i}\frac{dv_{-,i}}{dt}\\&=u_{-,i}v_{+,i}\sum_{n=1}^{N}x_{i}^{\left(n\right)}r^{\left(n\right)}-u_{+,i}v_{-,i}\sum_{n=1}^{N}x_{i}^{\left(n\right)}r^{\left(n\right)}+u_{+,i}v_{-,i}\sum_{n=1}^{N}x_{i}^{\left(n\right)}r^{\left(n\right)}-v_{+,i}u_{-,i}\sum_{n=1}^{N}x_{i}^{\left(n\right)}r^{\left(n\right)}\\&=0~.
\end{align*}
So
\begin{equation}
\label{eq: training invariant unbiased u-v model}
u_{+,i}u_{-,i}+v_{+,i}v_{-,i}=u_{+,i}\left(0\right)u_{-,i}\left(0\right)+v_{+,i}\left(0\right)v_{-,i}\left(0\right)\triangleq c_i~.
\end{equation}

Combining Eq.~\ref{eq: def of linear w for unbiased u-v } and Eq.~\ref{eq: training invariant unbiased u-v model} we can write:
\[
\begin{cases}
\tilde{w}_{i}=u_{+,i}v_{+,i}-u_{-,i}v_{-,i}\\
u_{+,i}u_{-,i}+v_{+,i}v_{-,i}=c_i
\end{cases}\Rightarrow\begin{cases}
\tilde{w}_{i}^{2}=u_{+,i}^{2}v_{+,i}^{2}+u_{-,i}^{2}v_{-,i}^{2}-2u_{+,i}v_{+,i}u_{-,i}v_{-,i}\\
u_{+,i}^{2}u_{-,i}^{2}+v_{+,i}^{2}v_{-,i}^{2}+2u_{+,i}u_{-,i}v_{+,i}v_{-,i}=c_i^{2}
\end{cases}
\]
\begin{equation}
\label{eq: squared training invariant unbiased u-v model}
\Rightarrow u_{+,i}^{2}u_{-,i}^{2}+v_{+,i}^{2}v_{-,i}^{2}+u_{+,i}^{2}v_{+,i}^{2}+u_{-,i}^{2}v_{-,i}^{2}-\tilde{w}_{i}^{2}=c_i^{2}~.
\end{equation}

We also know that:
\[
v_{+,i}^{2}-u_{+,i}^{2}=v_{+,i}^{2}\left(0\right)-u_{+,i}^{2}\left(0\right)\triangleq\delta_{+,i}
\]
\[
v_{-,i}^{2}-u_{-,i}^{2}=v_{-,i}^{2}\left(0\right)-u_{-,i}^{2}\left(0\right)\triangleq\delta_{-,i}
\]
which can be easily shown since $\frac{d}{dt}\left(v_{+,i}^{2}-u_{+,i}^{2}\right)=0$ and $\frac{d}{dt}\left(v_{-,i}^{2}-u_{-,i}^{2}\right)=0$.
So using Eq.~\ref{eq: squared training invariant unbiased u-v model} we can write:
\[
u_{+,i}^{2}u_{-,i}^{2}+\left(\delta_{+,i}+u_{+,i}^{2}\right)\left(\delta_{-,i}+u_{-,i}^{2}\right)+u_{+,i}^{2}\left(\delta_{+,i}+u_{+,i}^{2}\right)+u_{-,i}^{2}\left(\delta_{-,i}+u_{-,i}^{2}\right)-\tilde{w}_{i}^{2}=c_i^{2}\\
\]
\[
\Rightarrow\left(u_{+,i}^{2}+u_{-,i}^{2}\right)^{2}+\left(\delta_{+,i}+\delta_{-,i}\right)\left(u_{+,i}^{2}+u_{-,i}^{2}\right)+\delta_{+,i}\delta_{-,i}-\tilde{w}_{i}^{2}-c_i^{2}=0
\]
\begin{align}
\label{eq: solution to u^2 + v^2 unbiased u-v model}
\Rightarrow u_{+,i}^{2}+u_{-,i}^{2}	&=\frac{-\left(\delta_{+,i}+\delta_{-,i}\right)+\sqrt{\left(\delta_{+,i}+\delta_{-,i}\right)^{2}-4\left(\delta_{+,i}\delta_{-,i}-\tilde{w}_{i}^{2}-c_i^{2}\right)}}{2} \nonumber\\
	&=\frac{-\left(\delta_{+,i}+\delta_{-,i}\right)+\sqrt{\left(\delta_{+,i}-\delta_{-,i}\right)^{2}+4c_i^{2}+4\tilde{w}_{i}^{2}}}{2}~.
\end{align}

Coming back to $u_{+,i}^{2}+v_{+,i}^{2}+u_{-,i}^{2}+v_{-,i}^{2}$ we have using Eq.~\ref{eq: solution to u^2 + v^2 unbiased u-v model} that:
\begin{align*}
u_{+,i}^{2}+v_{+,i}^{2}+u_{-,i}^{2}+v_{-,i}^{2}&=2\left(u_{+,i}^{2}+u_{-,i}^{2}\right)+\delta_{+,i}+\delta_{-,i}\\&=\sqrt{\left(\delta_{+,i}-\delta_{-,i}\right)^{2}+4c_i^{2}+4\tilde{w}_{i}^{2}}~.
\end{align*}
Therefore,
\[
\frac{1}{\sqrt{\left(\delta_{+,i}-\delta_{-,i}\right)^{2}+4c_i^{2}+4\tilde{w}_{i}^{2}}}\frac{d\tilde{w}_{i}}{dt}=\sum_{n=1}^{N}x_{i}^{\left(n\right)}r^{\left(n\right)}~.
\]

We follow the IMD approach for deriving the implicit bias (presented in detail in Section~\ref{section: general approach for deriving implicit bias} of the main paper) and try and find a function $q({\tilde w}_i)$ such that:
\begin{equation}
\nabla^{2}q\left(\tilde{w}_i(t)\right) = \frac{1}{\sqrt{\left(\delta_{+,i}-\delta_{-,i}\right)^{2}+4c_i^{2}+4\tilde{w}_{i}^{2}}}~,
\label{eq:secord_cond_uv}
\end{equation}
which will then give us that
\[
\nabla^{2}q\left(\tilde{w}_i(t)\right)\frac{d}{dt}\tilde{w}_i(t) = \sum_{n=1}^{N}x_{i}^{\left(n\right)}r^{\left(n\right)}
\]
or
\[
\frac{d}{dt}\left(\nabla q\left(\tilde{w}_i(t)\right)\right) = \sum_{n=1}^{N}x_{i}^{\left(n\right)}r^{\left(n\right)}~.
\]
Integrating the above, we get
\[
\nabla q\left(\tilde{w}_i(t)\right)-\nabla q\left(\tilde{w}_i(0)\right)=\sum_{n=1}^{N}x_{i}^{\left(n\right)}\int_{0}^{t}r^{\left(n\right)}(t')dt'~.
\]
Denoting $\nu^{(n)} = \int_{0}^{\infty}r^{\left(n\right)}(t')dt'$, and assuming $q$ also satisfies $\nabla q\left(\tilde{w}_i(0)\right) = 0$, will in turn give us the KKT stationarity condition
\[
\nabla q\left(\tilde{w}_i(\infty)\right)=\sum_{n=1}^{N}{x}_i^{\left(n\right)}\nu^{\left(n\right)}~.
\]
Namely, if we find a $q$ that satisfies the conditions above we will have that gradient flow (for each weight $\tilde{w}_i$) satisfies the KKT conditions for minimizing this $q$.

We next turn to solving for this $q$, beginning with Eq.~\ref{eq:secord_cond_uv}:
\[
q''\left(\tilde{w}_{i}\right)=\frac{1}{\sqrt{\left(\delta_{+,i}-\delta_{-,i}\right)^{2}+4c_i^{2}+4\tilde{w}_{i}^{2}}}=\frac{1}{\sqrt{k_i+4\tilde{w}_{i}^{2}}}~,
\]
where $k_i\triangleq\left(\delta_{+,i}-\delta_{-,i}\right)^{2}+4c_i^{2}$.

%We assume $u_{+,i}\left(0\right)=u_{-,i}\left(0\right),v_{+,i}\left(0\right)=v_{-,i}\left(0\right)$, which ensures unbiased initialization $(\tilde{w}_{i}\left(0\right)=0)$.

Integrating the above, and using the constraint $q'\left(0\right)=0$ we get:
\[
q'\left(\tilde{w}_{i}\right)=\frac{\log\left(\sqrt{4\tilde{w}_{i}^{2}+k}+2\tilde{w}_{i}\right)-\log\left(\sqrt{k}\right)}{2}~.
\]
Simplifying the above we obtain:
\[
q'\left(\tilde{w}_{i}\right)=\frac{1}{2}\log\left(\frac{\sqrt{4\tilde{w}_{i}^{2}+k_i}+2\tilde{w}_{i}}{\sqrt{k_i}}\right)=\frac{1}{2}\log\left(\sqrt{1+\frac{4\tilde{w}_{i}^{2}}{k_i}}+\frac{2\tilde{w}_{i}}{\sqrt{k_i}}\right)=\frac{1}{2}\mathrm{arcsinh}\left(\frac{2\tilde{w}_{i}}{\sqrt{k_i}}\right)~.
\]
Finally, we integrate again to obtain the desired $q$:
\[
q_{k_i}\left(\tilde{w}_{i}\right)=\frac{1}{2}\int_{0}^{\tilde{w}_{i}}\mathrm{arcsinh}\left(\frac{2z}{\sqrt{k_i}}\right)dz=\frac{\sqrt{k_i}}{4}\left[1-\sqrt{1+\frac{4\tilde{w}_{i}^{2}}{k_i}}+\frac{2\tilde{w}_{i}}{\sqrt{k_i}}\mathrm{arcsinh}\left(\frac{2\tilde{w}_{i}}{\sqrt{k_i}}\right)\right]~,
\]
where
\[
k_i	=\left(\delta_{+,i}-\delta_{-,i}\right)^{2}+4c_i^{2}
	=\left(v_{+,i}^{2}\left(0\right)-u_{+,i}^{2}\left(0\right)-v_{-,i}^{2}\left(0\right)+u_{-,i}^{2}\left(0\right)\right)^{2}+4\left(u_{+,i}\left(0\right)u_{-,i}\left(0\right)+v_{+,i}\left(0\right)v_{-,i}\left(0\right)\right)^{2}~.
\]
For the case $u_{+,i}\left(0\right)=u_{-,i}\left(0\right),v_{+,i}\left(0\right)=v_{-,i}\left(0\right)$ (unbiased initialization of $\tilde{w}_{i}\left(0\right)=0$) we get
\[
k_i=4\left(u_{+,i}^{2}\left(0\right)+v_{+,i}^{2}\left(0\right)\right)^{2}
\]
\[
\Rightarrow \sqrt{k_i}=2\left(u_{+,i}^{2}\left(0\right)+v_{+,i}^{2}\left(0\right)\right)=\frac{4\alpha_{i}\left(1+s_{i}^{2}\right)}{1-s_{i}^{2}}~.
\]

Next, if we denote $Q_{\mathbf{k}}(\tilde{\mathbf{w}}) = \sum_{i=1}^d q_{k_i}\left(\tilde{w}_{i}\right)$,
we can write
\[
\nabla Q_{\mathbf{k}}(\tilde{\mathbf{w}}(\infty)) = \left(\nabla q\left(\tilde{w}_1(\infty)\right), ..., \nabla q\left(\tilde{w}_d(\infty)\right)\right)^\top = \sum_{n=1}^{N}\mathbf{x}^{\left(n\right)}\nu^{\left(n\right)}~.
\]
Therefore, we get that gradient flow satisfies the KKT conditions for minimizing this $Q$, which completes the proof.
\end{proof}

\section{Proof of Theorem ~\ref{theorem: Q for single-neuron linear network}}
\label{appendix: proof of Q for single-neuron linear network}

\begin{proof}
%\edward{do we need the index $i$ in this proof? it is a single neuron.} \sazulay{I think it is more powerful to show that the dynamics are general per-neuron and then claim we can actually use it as implicit bias for single-neuron.}

We start by examining a general multi-neuron fully connected linear network of depth $2$, reducing our claim at the end to the case of a network with a single hidden neuron ($m = 1$).

The fully connected linear network of depth $2$ is defined as
\[
f(\mathbf{x};\{a_i\},\{\vw_i\})	=\sum_{i=1}^m a_{i}\mathbf{w}_{i}^{\top}\mathbf{x} = \tilde{\mathbf{w}}^\top\mathbf{x}~,
\]
where $\tilde{\mathbf{w}}\triangleq \sum_{i=1}^m\tilde{\mathbf{w}}_{i}$, and $\tilde{\mathbf{w}}_{i}\triangleq a_{i}\mathbf{w}_{i}$.

The parameter gradient flow dynamics are given by:
\[\dot{a}_{i}=-\partial_{a_{i}}\mathcal{L}=\mathbf{w}_{i}^{\top}\left(\sum_{n=1}^{N}\mathbf{x}^{\left(n\right)}r^{\left(n\right)}\right)
\]
\begin{align*}
\dot{\mathbf{w}}_{i}=-\partial_{\mathbf{w}_{i}}\mathcal{L}=a_{i}\left(\sum_{n=1}^{N}\mathbf{x}^{\left(n\right)}r^{\left(n\right)}\right)
\label{eq: paramter dynamic fully connected}
\end{align*}

\[
\frac{d}{dt}\tilde{\mathbf{w}_{i}}=\dot{a}_{i}\mathbf{w}_{i}+a_{i}\dot{\mathbf{w}}_{i}=\left(a_{i}^{2}\mathbf{I}+\mathbf{w}_{i}\mathbf{w}_{i}^{\top}\right)\left(\sum_{n=1}^{N}\mathbf{x}^{\left(n\right)}r^{\left(n\right)}\right)~,
\]

where we denote the residual
\[
r^{\left(n\right)}(t) \triangleq y^{(n)} - \tilde{\mathbf{w}}^{\top}(t)\mathbf{x}^{(n)}~.
\]

Using Theorem 2.1 of \citet{Du2018AlgorithmicRI} (stated in Section~\ref{sec:multi_neuron}), we can write
\begin{align}
\label{eq: w dynamics for fully connected linear networks}
&\frac{d}{dt}\tilde{\mathbf{w}}_{i}(t)
=\left(\left(\delta_{i}+\left\Vert \mathbf{w}_{i}(t)\right\Vert ^{2}\right)\mathbf{I}+\mathbf{w}_{i}(t)\mathbf{w}_{i}^{\top}(t)\right)\left(\sum_{n=1}^{N}\mathbf{x}^{\left(n\right)}r^{\left(n\right)}\right)~,
\end{align}
or also
\[
\left(\left(\delta_i+\left\Vert \mathbf{w}_i(t)\right\Vert ^{2}\right)\mathbf{I}+\mathbf{w}_i(t)\mathbf{w}_i^{\top}(t)\right)^{-1}\frac{d}{dt}\tilde{\mathbf{w}}_i(t)=\left(\sum_{n=1}^{N}\mathbf{x}^{\left(n\right)}r^{\left(n\right)}\right)
\]
where assuming $\delta_i \geq 0$, a non-zero initialization $\tilde{\vw}(0)=a(0)\vw(0)\neq\mathbf{0}$ and that we converge to zero-loss solution, gives us that the expression $\left(\left(\delta_i+\left\Vert \mathbf{w}_i(t)\right\Vert ^{2}\right)\mathbf{I}+\mathbf{w}_i(t)\mathbf{w}_i^{\top}(t)\right)^{-1}$ exists. 
\\
\\
Using the Sherman-Morisson Lemma, we have
\[
\left(\delta_i+\left\Vert \mathbf{w}_i(t)\right\Vert ^{2}\right)^{-1}\left(\mathbf{I}-\frac{\mathbf{w}_i(t)\mathbf{w}_i^{\top}(t)}{\left(\delta_i+2\left\Vert \mathbf{w}_i(t)\right\Vert ^{2}\right)}\right)\frac{d}{dt}\tilde{\mathbf{w}}_i=\left(\sum_{n=1}^{N}\mathbf{x}^{\left(n\right)}r^{\left(n\right)}\right)~,
\]  
or
\begin{align}
\label{eq: Hessian equation for fully connected linear networks}
\left(\delta_i+\left\Vert \mathbf{w}_i(t)\right\Vert ^{2}\right)^{-1}\left(\mathbf{I}-\frac{\mathbf{\tilde{w}}_i(t)\mathbf{\tilde{w}}_i^{\top}(t)}{\left(\delta_{i}+\left\Vert \mathbf{w}_i(t)\right\Vert ^{2}\right)\left(\delta_{i}+2\left\Vert \mathbf{w}_i(t)\right\Vert ^{2}\right)}\right)\frac{d}{dt}\tilde{\mathbf{w}}_i=\left(\sum_{n=1}^{N}\mathbf{x}^{\left(n\right)}r^{\left(n\right)}\right)
\end{align}
where we again employed Theorem 2.1 of \citet{Du2018AlgorithmicRI}.
\\
Also, since
\[
\left\Vert \mathbf{\tilde{w}}_i(t)\right\Vert ^{2}=a_i^{2}(t)\left\Vert \mathbf{w}_i(t)\right\Vert ^{2}=\left\Vert \mathbf{w}_i(t)\right\Vert ^{2}\left(\delta_i+\left\Vert \mathbf{w}_i(t)\right\Vert ^{2}\right)~,
\]
we can express $\mathbf{w}$ as a function of  $\mathbf{\tilde{w}}$:
% \[
% \left\Vert \mathbf{w}_i(t)\right\Vert ^{4}+\delta_i\left\Vert \mathbf{w}_i(t)\right\Vert ^{2}-\left\Vert \mathbf{\tilde{w}}_i(t)\right\Vert ^{2}=0
% \]
\[
\left\Vert \mathbf{w}_i(t)\right\Vert ^{2}=\frac{-\delta_i}{2}\pm\sqrt{\frac{\delta_i^{2}}{4}+\left\Vert \mathbf{\tilde{w}}_i(t)\right\Vert ^{2}}~.
\]
Since $\left\Vert \mathbf{w}_i(t)\right\Vert ^{2}\geq0$ we choose the (+) sign and obtain
\[
\left\Vert \mathbf{w}_i(t)\right\Vert =\sqrt{\frac{-\delta_i}{2}+\sqrt{\frac{\delta_i^{2}}{4}+\left\Vert \mathbf{\tilde{w}}_i(t)\right\Vert ^{2}}}~.
\]
Therefore, we can write Eq.~\ref{eq: Hessian equation for fully connected linear networks} as:
\begin{align}
\left(\frac{\delta_i}{2}+\sqrt{\frac{\delta_i^{2}}{4}+\left\Vert \mathbf{\tilde{w}}_i(t)\right\Vert ^{2}}\right)^{-1}\left(\mathbf{I}-\frac{\mathbf{\tilde{w}}_i(t)\mathbf{\tilde{w}}_i^{\top}(t)}{2\left(\frac{\delta_i}{2}+\sqrt{\frac{\delta_i^{2}}{4}+\left\Vert \mathbf{\tilde{w}}_i(t)\right\Vert ^{2}}\right)\sqrt{\frac{\delta_i^{2}}{4}+\left\Vert \mathbf{\tilde{w}}_i(t)\right\Vert ^{2}}}\right)\frac{d}{dt}\tilde{\mathbf{w}}_i(t)=\sum_{n=1}^{N}\mathbf{x}^{\left(n\right)}r^{\left(n\right)}(t)~.
\label{eq: final Hessian equation for fully connected linear networks}
\end{align}

We follow the "warped IMD" technique for deriving the implicit bias (presented in detail in Section~\ref{sec: A new technique for deriving the implicit bias} of the main text)
and multiply Eq. \ref{eq: final Hessian equation for fully connected linear networks} by some function $g\left(\mathbf{\tilde{w}}_i(t)\right)$ 
\begin{align*}
g\left(\mathbf{\tilde{w}}_i(t)\right)&\left(\frac{\delta_i}{2}+\sqrt{\frac{\delta_i^{2}}{4}+\left\Vert \mathbf{\tilde{w}}_i(t)\right\Vert ^{2}}\right)^{-1}\left(\mathbf{I}-\frac{\mathbf{\tilde{w}}_i(t)\mathbf{\tilde{w}}_i^{\top}(t)}{2\left(\frac{\delta_i}{2}+\sqrt{\frac{\delta_i^{2}}{4}+\left\Vert \mathbf{\tilde{w}}_i(t)\right\Vert ^{2}}\right)\sqrt{\frac{\delta_i^{2}}{4}+\left\Vert \mathbf{\tilde{w}}_i(t)\right\Vert ^{2}}}\right)\frac{d}{dt}\tilde{\mathbf{w}}_i(t)\\
&=\sum_{n=1}^{N}\mathbf{x}^{\left(n\right)}g\left(\mathbf{\tilde{w}}_i(t)\right)r^{\left(n\right)}(t)~.
\end{align*}

Following the approach in Section~\ref{sec: A new technique for deriving the implicit bias}, we then try and find $q\left(\mathbf{\tilde{w}}_i(t)\right)=\hat{q}\left(\left\Vert \mathbf{\tilde{w}}_i(t)\right\Vert \right)+\mathbf{z}^{\top}\mathbf{\tilde{w}}_i(t)$ and $g\left(\mathbf{\tilde{w}}_i(t)\right)$ such that 
\begin{align}
\nabla^{2} q\left(\mathbf{\tilde{w}}_i(t)\right)=g\left(\mathbf{\tilde{w}}_i(t)\right)\left(\frac{\delta_i}{2}+\sqrt{\frac{\delta_i^{2}}{4}+\left\Vert \mathbf{\tilde{w}}_i(t)\right\Vert ^{2}}\right)^{-1}
\left( \mathbf{I} -\frac{\mathbf{\tilde{w}}_i(t)\mathbf{\tilde{w}}_i^{\top}(t)}
{2\left(\frac{\delta_i}{2}+\sqrt{\frac{\delta_{i}^{2}}{4}+\left\Vert \mathbf{\tilde{w}}_i(t)\right\Vert ^{2}}\right)\sqrt{\frac{\delta_i^{2}}{4}+\left\Vert \mathbf{\tilde{w}}_i(t)\right\Vert^{2}}}\right) ~,
\label{eq: warped Hessian equation for fully connected linear networks}
\end{align}
so that then we'll have,
\[
\nabla^{2}q\left(\mathbf{\tilde{w}}_i(t)\right)\frac{d}{dt}\tilde{\mathbf{w}}_i(t)=\sum_{n=1}^{N}\mathbf{x}^{\left(n\right)}g\left(\mathbf{\tilde{w}}_i(t)\right)r^{\left(n\right)}(t)
\]
\[
\frac{d}{dt}\left(\nabla q\left(\mathbf{\tilde{w}}_i(t)\right)\right)=\sum_{n=1}^{N}\mathbf{x}^{\left(n\right)}g\left(\mathbf{\tilde{w}}_i(t)\right)r^{\left(n\right)}(t)
\]
\[
\nabla q\left(\mathbf{\tilde{w}}_i(t)\right)-\nabla q\left(\mathbf{\tilde{w}}_i(0)\right)=\sum_{n=1}^{N}\mathbf{x}^{\left(n\right)}\int_{0}^{t}g\left(\mathbf{\tilde{w}}_i(t')\right)r^{\left(n\right)}(t')dt'~.
\]
Requiring $\nabla q\left(\mathbf{\tilde{w}}_i(0)\right) = 0$, and denoting $\nu_i^{(n)} = \int_{0}^{\infty}g\left(\mathbf{\tilde{w}}_i(t')\right)r^{\left(n\right)}(t')dt'$, we get the condition:
\[
\nabla q\left(\mathbf{\tilde{w}}_i(\infty)\right)=\sum_{n=1}^{N}\mathbf{x}^{\left(n\right)}\nu_i^{\left(n\right)}~.
\]
To find $q$ we note that:
\[
\nabla q\left(\mathbf{\tilde{w}}_i(t)\right)=\hat{q}'\left(\left\Vert \mathbf{\tilde{w}}_i(t)\right\Vert \right)\frac{\mathbf{\tilde{w}}_i(t)}{\left\Vert \mathbf{\tilde{w}}_i(t)\right\Vert }+\mathbf{z}
\]
and
\begin{align*}
\nabla^{2}q\left(\mathbf{\tilde{w}}_i(t)\right)&=\left[\hat{q}''\left(\left\Vert \mathbf{\tilde{w}}_i(t)\right\Vert \right)-\hat{q}'\left(\left\Vert \mathbf{\tilde{w}}_i(t)\right\Vert \right)\frac{1}{\left\Vert \mathbf{\tilde{w}}_i(t)\right\Vert }\right]\frac{\mathbf{\tilde{w}}_i(t)\mathbf{\tilde{w}}_i^{\top}(t)}{\left\Vert \mathbf{\tilde{w}}_i(t)\right\Vert ^{2}}+\hat{q}'\left(\left\Vert \mathbf{\tilde{w}}_i(t)\right\Vert \right)\frac{1}{\left\Vert \mathbf{\tilde{w}}_i(t)\right\Vert }\mathbf{I}\\&=\frac{\hat{q}'\left(\left\Vert \mathbf{\tilde{w}}_i(t)\right\Vert \right)}{\left\Vert \mathbf{\tilde{w}}_i(t)\right\Vert }\left[\mathbf{I}-\left[1-\left\Vert \mathbf{\tilde{w}}_i(t)\right\Vert \frac{\hat{q}''\left(\left\Vert \mathbf{\tilde{w}}_i(t)\right\Vert \right)}{\hat{q}'\left(\left\Vert \mathbf{\tilde{w}}_i(t)\right\Vert \right)}\right]\frac{\mathbf{\tilde{w}}_i(t)\mathbf{\tilde{w}}_i^{\top}(t)}{\left\Vert \mathbf{\tilde{w}}_i(t)\right\Vert ^{2}}\right]~.
\end{align*}

Comparing the form above with the Hessian in Eq.~\ref{eq: warped Hessian equation for fully connected linear networks} we require
\[
g\left(\mathbf{\tilde{w}}_i(t)\right)=\frac{\hat{q}'\left(\left\Vert \mathbf{\tilde{w}}_i(t)\right\Vert \right)}{\left\Vert \mathbf{\tilde{w}}_i(t)\right\Vert }\left(\frac{\delta_i}{2}+\sqrt{\frac{\delta_i^{2}}{4}+\left\Vert \mathbf{\tilde{w}}_i(t)\right\Vert ^{2}}\right)
\]
and
\[
\frac{1}{2\left(\frac{\delta_i}{2}+\sqrt{\frac{\delta_i^{2}}{4}+\left\Vert \mathbf{\tilde{w}}_i(t)\right\Vert ^{2}}\right)\sqrt{\frac{\delta_i^{2}}{4}+\left\Vert \mathbf{\tilde{w}}_i(t)\right\Vert ^{2}}}=\frac{1-\left\Vert \mathbf{\tilde{w}}_i(t)\right\Vert \frac{\hat{q}''\left(\left\Vert \mathbf{\tilde{w}}_i(t)\right\Vert \right)}{\hat{q}'\left(\left\Vert \mathbf{\tilde{w}}_i(t)\right\Vert \right)}}{\left\Vert \mathbf{\tilde{w}}_i(t)\right\Vert ^{2}}
\]
\[
\Rightarrow\frac{\hat{q}''\left(\left\Vert \mathbf{\tilde{w}}_i(t)\right\Vert \right)}{\hat{q}'\left(\left\Vert \mathbf{\tilde{w}}_i(t)\right\Vert \right)}=\frac{1-\frac{\left\Vert \mathbf{\tilde{w}}_i(t)\right\Vert ^{2}}{\left(\frac{\delta_i}{2}+\sqrt{\frac{\delta_i^{2}}{4}+\left\Vert \mathbf{\tilde{w}}_i(t)\right\Vert ^{2}}\right)\sqrt{\delta_i^{2}+4\left\Vert \mathbf{\tilde{w}}_i(t)\right\Vert ^{2}}}}{\left\Vert \mathbf{\tilde{w}}_i(t)\right\Vert }
\]
\[
\Rightarrow\frac{\hat{q}''\left(x\right)}{\hat{q}'\left(x\right)}=\frac{1-\frac{x^{2}}{\left(\frac{\delta_i}{2}+\sqrt{\frac{\delta_i{2}}{4}+x^{2}}\right)\sqrt{\delta_i^{2}+4x^{2}}}}{x}~.
\]

Integrating that we get
\[
\log\hat{q}'\left(x\right)=\frac{1}{2}\log\left(\sqrt{x^{2}+\frac{\delta_i^{2}}{4}}-\frac{\delta_i}{2}\right)+C
\]
\[
\Rightarrow\hat{q}'\left(x\right)=C\sqrt{\sqrt{x^{2}+\frac{\delta_i^{2}}{4}}-\frac{\delta_i}{2}}
\]
\[
\Rightarrow\hat{q}\left(x\right)=C\frac{\left(x^{2}-\frac{\delta_i}{2}\left(\frac{\delta_i}{2}+\sqrt{x^{2}+\frac{\delta_i^{2}}{4}}\right)\right)\sqrt{\sqrt{x^{2}+\frac{\delta_i^{2}}{4}}-\frac{\delta_i}{2}}}{x}+C'~.
\]
Therefore,
\[
q\left(\mathbf{\tilde{w}}_i(t)\right)=C\frac{\left(\left\Vert \mathbf{\tilde{w}}_i(t)\right\Vert ^{2}-\frac{\delta_i}{2}\left(\frac{\delta_i}{2}+\sqrt{\left\Vert \mathbf{\tilde{w}}_i(t)\right\Vert ^{2}+\frac{\delta_i^{2}}{4}}\right)\right)\sqrt{\sqrt{\left\Vert \mathbf{\tilde{w}}_i(t)\right\Vert ^{2}+\frac{\delta_i^{2}}{4}}-\frac{\delta_i}{2}}}{\left\Vert \mathbf{\tilde{w}}_i(t)\right\Vert }+\mathbf{z}^{\top}\mathbf{\tilde{w}}_i(t)+C'~.
\]
Now, from the condition $\nabla q\left(\mathbf{\tilde{w}}_i(0)\right)=0$ we have
\[
\nabla q\left(\mathbf{\tilde{w}}_i(0)\right)=\frac{3}{2}C\frac{\mathbf{\tilde{w}}_i(0)}{\left\Vert \mathbf{\tilde{w}}_i(0)\right\Vert }\sqrt{\sqrt{\left\Vert \mathbf{\tilde{w}}_i(0)\right\Vert ^{2}+\frac{\delta_i^{2}}{4}}-\frac{\delta_i}{2}}+\mathbf{z}=0
\]
\[
\Rightarrow \mathbf{z}=-\frac{3}{2}C\frac{\mathbf{\tilde{w}}_i(0)}{\left\Vert \mathbf{\tilde{w}}_i(0)\right\Vert }\sqrt{\sqrt{\left\Vert \mathbf{\tilde{w}}_i(0)\right\Vert ^{2}+\frac{\delta_i^{2}}{4}}-\frac{\delta_i}{2}}~.
\]
We can set $C=1$, $C'=0$ and get
\begin{align*}
q\left(\mathbf{\tilde{w}}_i(t)\right)=&\frac{\left(\left\Vert \mathbf{\tilde{w}}_i(t)\right\Vert ^{2}-\frac{\delta_i}{2}\left(\frac{\delta_i}{2}+\sqrt{\left\Vert \mathbf{\tilde{w}}_i(t)\right\Vert ^{2}+\frac{\delta_i^{2}}{4}}\right)\right)\sqrt{\sqrt{\left\Vert \mathbf{\tilde{w}}_i(t)\right\Vert ^{2}+\frac{\delta_i^{2}}{4}}-\frac{\delta_i}{2}}}{\left\Vert \mathbf{\tilde{w}}_i(t)\right\Vert } 
 \\
 &-\frac{3}{2}\sqrt{\sqrt{\left\Vert \mathbf{\tilde{w}}_i(0)\right\Vert ^{2}+\frac{\delta_i^{2}}{4}}-\frac{\delta_i}{2}}\frac{\mathbf{\tilde{w}}_i^{\top}(0)}{\left\Vert \mathbf{\tilde{w}}_i(0)\right\Vert }\mathbf{\tilde{w}}_i(t)~.
\end{align*}

Finally, for the case of a fully connected network with a single hidden neuron ($m = 1$), the condition 
\[
\nabla q\left(\mathbf{\tilde{w}}_i(\infty)\right)=\sum_{n=1}^{N}\mathbf{x}^{\left(n\right)}\nu_i^{\left(n\right)}
\]
can be written as
\[
\nabla q\left(\mathbf{\tilde{w}}(\infty)\right)=\sum_{n=1}^{N}\mathbf{x}^{\left(n\right)}\nu^{\left(n\right)}
\]
which since $\nu^{(n)}$ has no dependency on the index $i$ is a valid KKT stationarity condition for the $q$ we found above.
Therefore, the gradient flow satisfies the KKT conditions for minimizing the $q$ we have found.
\end{proof}

\subsection{Validation of the use of the function $g$ as a ``Time-Warping''}
First, we show that Eq.~\ref{eq: final Hessian equation for fully connected linear networks} cannot take the form suggested by Eq.~\ref{Hw} (as in the standard IMD approach described in Section~\ref{section: general approach for deriving implicit bias}):
\[
\mathbf{H}(\tilde{\mathbf{w}}(t))\frac{d\tilde{\mathbf{w}}(t)}{dt}=\mathbf{X}\mathbf{r}(t)
\]
where $\mathbf{H}(\tilde{\mathbf{w}}(t))=\nabla^2 Q(\tilde{\mathbf{w}}(t))$ for some $Q$.

From Eq.~\ref{eq: final Hessian equation for fully connected linear networks} we get that $\mathbf{H}(\mathbf{w})$ takes the form

\begin{align*}
\mathbf{H}(\mathbf{w})=\left(\frac{\delta}{2}+\sqrt{\frac{\delta^{2}}{4}+\left\Vert \mathbf{w}\right\Vert ^{2}}\right)^{-1}
\left( \mathbf{I} -\frac{\mathbf{w}\mathbf{w}^{\top}}
{2\left(\frac{\delta}{2}+\sqrt{\frac{\delta^{2}}{4}+\left\Vert \mathbf{w}\right\Vert ^{2}}\right)\sqrt{\frac{\delta^{2}}{4}+\left\Vert \mathbf{w}\right\Vert^{2}}}\right) ~.
\end{align*}

Suppose $\mathbf{H}(\mathbf{w})$ is indeed the Hessian of some $Q(\mathbf{w})$, then is must respect the Hessian-map condition (see Eq.~\ref{eq: Hessian-map condition}) for any $\delta \geq 0$.
Specifically, for $\delta = 0$ we get
\begin{align*}
\mathbf{H}(\mathbf{w})=\frac{1}{\|\mathbf{w}\|}
\left( \mathbf{I} -\frac{\mathbf{w}\mathbf{w}^{\top}}
{2\|\mathbf{w}\|^2}\right) ~,
\end{align*}
which does not satisfy the Hessian-map condition
\[
\frac{\partial\mathbf{H}_{i,i}(\mathbf{w})}{\partial \mathbf{w}_j} = -\frac{w_j}{\|\mathbf{w}\|^{3}}+\frac{3}{2}\frac{w_i^2w_j}{\|\mathbf{w}\|^{5}} \neq -\frac{w_j}{2\|\mathbf{w}\|^{3}}+\frac{3}{2}\frac{w_i^2w_j}{\|\mathbf{w}\|^{5}} = \frac{\partial\mathbf{H}_{i,j}(\mathbf{w})}{\partial \mathbf{w}_i}~.
\]
Therefore, Eq.~\ref{eq: final Hessian equation for fully connected linear networks} cannot be solved using the standard IMD approach, and requires our suggested  “warped IMD” technique (see Section~\ref{sec: A new technique for deriving the implicit bias}).% which facilitates the use of the function $g$.

Second, we write $g\left(\mathbf{\tilde{w}}_i(t)\right)$ explicitly and show it is positive, monotone and bounded.

From Eq.~\ref{eq: final Hessian equation for fully connected linear networks} we have
\[
g\left(\mathbf{\tilde{w}}_i(t)\right)=\frac{\hat{q}'\left(\left\Vert \mathbf{\tilde{w}}_i(t)\right\Vert \right)}{\left\Vert \mathbf{\tilde{w}}_i(t)\right\Vert }\left(\frac{\delta_i}{2}+\sqrt{\frac{\delta_i^{2}}{4}+\left\Vert \mathbf{\tilde{w}}_i(t)\right\Vert ^{2}}\right) = \frac{1}{\left\Vert \mathbf{\tilde{w}}_i(t)\right\Vert }\sqrt{\sqrt{\left\Vert \mathbf{\tilde{w}}_i(t)\right\Vert ^{2}+\frac{\delta_i^{2}}{4}}-\frac{\delta_i}{2}}\left(\frac{\delta_i}{2}+\sqrt{\frac{\delta_i^{2}}{4}+\left\Vert \mathbf{\tilde{w}}_i(t)\right\Vert ^{2}}\right)~.
\]
We can see that $g\left(\mathbf{\tilde{w}}_i(t)\right) = \hat{g}\left(\|\mathbf{\tilde{w}}_i(t)\|\right)$ where 
\[
\hat{g}(x) = \frac{\sqrt{\sqrt{x^{2}+\frac{\delta_{i}^{2}}{4}}-\frac{\delta_{i}}{2}}}{x }\left(\frac{\delta_{i}}{2}+\sqrt{\frac{\delta_{i}^{2}}{4}+x^{2}}\right)~.
\]
We notice that $\hat{g}(x)$ is smooth and positive for $\forall x > 0$, and since $\lim_{x \rightarrow 0^+} \hat{g}(x) = \sqrt{\delta_i}$ (see Lemma~\ref{lemma: lim g_hat at x=0}) it is also bounded for any finite $x$.

Also, using
\[
\hat{g}'(x) = \dfrac{2\sqrt{x^2+\frac{\delta_i^2}{4}}-\delta_i}{4\sqrt{x^2+\frac{\delta_i^2}{4}}\sqrt{\sqrt{x^2+\frac{\delta_i^2}{4}}-\frac{\delta_i}{2}}}
\]
we see that $\hat{g}'(x) > 0$, $\forall x > 0$ and so $\hat{g}(x)$ is monotonically increasing.

Further, we show that the KKT condition we got using the function $g$ is valid by showing that $\nu^{(n)} = \int_{0}^{\infty}g\left(\mathbf{\tilde{w}}(t')\right)r^{\left(n\right)}(t')dt'$ is finite.

Since we constructed $q(\tilde{\mathbf{w}}(t))$ s.t. $\nabla q(\tilde{\mathbf{w}}(t))-\nabla q(\tilde{\mathbf{w}}(0))=\int_0^tg(\tilde{\mathbf{w}}(t'))\mathbf{X}\mathbf{r}(t')dt'$, we get that if the RHS is infinite at $t \rightarrow \infty$ then $\nabla q(\tilde{\mathbf{w}}(\infty))$ is infinite. However, assuming we converge to a finite weight vector $\tilde{\mathbf{w}}(\infty)$, which is correct for the square loss, we get a contradiction since $\nabla q(\tilde{\mathbf{w}})$ is bounded for any finite input.

Finally, we show that $g(\tilde{\mathbf{w}}(t))\mathbf{H}(\tilde{\mathbf{w}}(t))$ does satisfy the Hessian-map condition.
We note that this is immediate from the construction of $q$, but provide it here for completeness.
%\amirg{wouldn't it satisfy it just via the fact that we solved for $g$ so that we obtain a Hessian} \sazulay{Right but we figured it would be good to exlicitly show. added a comment at the end of the sentense. better?}\amirg{so I suggest moving to another section. It's not part of the proof, and we don't want to over-complicate the proofs.}

\begin{align*}
g({\mathbf{w}})\mathbf{H}(\mathbf{w})=\frac{1}{\left\Vert \mathbf{w}\right\Vert }\sqrt{\sqrt{\left\Vert \mathbf{w}\right\Vert ^{2}+\frac{\delta^{2}}{4}}-\frac{\delta}{2}}
\left( \mathbf{I} -\frac{\mathbf{w}\mathbf{w}^{\top}}
{2\left(\frac{\delta}{2}+\sqrt{\frac{\delta^{2}}{4}+\left\Vert \mathbf{w}\right\Vert ^{2}}\right)\sqrt{\frac{\delta^{2}}{4}+\left\Vert \mathbf{w}\right\Vert^{2}}}\right) ~.
\end{align*}

We denote $f(x) = \frac{1}{x}\sqrt{\sqrt{x^{2}+\frac{\delta^{2}}{4}}-\frac{\delta}{2}}$ and $h(x) = \frac{f(x)}
{2\left(\frac{\delta}{2}+\sqrt{\frac{\delta^{2}}{4}+x ^{2}}\right)\sqrt{\frac{\delta^{2}}{4}+x^{2}}} $.

Without loss of generality it is enough to observe the following settings:

\textbf{$i \neq j \neq k$:}
\[
\frac{\partial\mathbf{H}_{i,j}(\mathbf{w})}{\partial \mathbf{w}_k} =  - w_i w_j h'(\|\mathbf{w}\|)\frac{w_k}{\|\mathbf{w}\|} = - w_i w_k h'(\|\mathbf{w}\|)\frac{w_j}{\|\mathbf{w}\|} = \frac{\partial\mathbf{H}_{i,k}(\mathbf{w})}{\partial \mathbf{w}_j}
\]
\textbf{$i = j \neq k$:}
\[
\frac{\partial\mathbf{H}_{i,i}(\mathbf{w})}{\partial \mathbf{w}_k} = f'(\|\mathbf{w}\|)\frac{w_k}{\|\mathbf{w}\|} - w_i^2 h'(\|\mathbf{w}\|)\frac{w_k}{\|\mathbf{w}\|} 
\]
\[
\frac{\partial\mathbf{H}_{i,k}(\mathbf{w})}{\partial \mathbf{w}_i} = - w_k  h(\|\mathbf{w}\|) - w_i w_k h'(\|\mathbf{w}\|)\frac{w_i}{\|\mathbf{w}\|}  = - w_k  h(\|\mathbf{w}\|)- w_i^2 h'(\|\mathbf{w}\|)\frac{w_k}{\|\mathbf{w}\|}
\]
Therefore, if $\forall x\,\,, \frac{f'(x)}{x} = -h(x)$ we get that $\frac{\partial\mathbf{H}_{i,i}(\mathbf{w})}{\partial \mathbf{w}_k} = \frac{\partial\mathbf{H}_{i,k}(\mathbf{w})}{\partial \mathbf{w}_i}$.

Using the derivative of $f(x)$ we can write:
\begin{align*}
f'(x) &= \dfrac{1}{2\sqrt{x^2+\frac{\delta^2}{4}}\sqrt{\sqrt{x^2+\frac{\delta^2}{4}}-\frac{\delta}{2}}}-\dfrac{\sqrt{\sqrt{x^2+\frac{\delta^2}{4}}-\frac{\delta}{2}}}{x^2} \\
&= \dfrac{1}{2\sqrt{x^2+\frac{\delta^2}{4}}\sqrt{\sqrt{x^2+\frac{\delta^2}{4}}-\frac{\delta}{2}}}-\dfrac{\sqrt{\sqrt{x^2+\frac{\delta^2}{4}}-\frac{\delta}{2}}}{\left(\sqrt{x^2+\frac{\delta^2}{4}}-\frac{\delta}{2}\right)\left(\sqrt{x^2+\frac{\delta^2}{4}}+\frac{\delta}{2}\right)}\\
&= \dfrac{1}{2\sqrt{x^2+\frac{\delta^2}{4}}\sqrt{\sqrt{x^2+\frac{\delta^2}{4}}-\frac{\delta}{2}}}-\dfrac{1}{\sqrt{\sqrt{x^2+\frac{\delta^2}{4}}-\frac{\delta}{2}}\left(\sqrt{x^2+\frac{\delta^2}{4}}+\frac{\delta}{2}\right)}\\
&= \dfrac{\left(\sqrt{x^2+\frac{\delta^2}{4}}+\frac{\delta}{2}\right) -2\sqrt{x^2+\frac{\delta^2}{4}}}{2\sqrt{x^2+\frac{\delta^2}{4}}\sqrt{\sqrt{x^2+\frac{\delta^2}{4}}-\frac{\delta}{2}}\left(\sqrt{x^2+\frac{\delta^2}{4}}+\frac{\delta}{2}\right)}\\
&= -\dfrac{\sqrt{\sqrt{x^2+\frac{\delta^2}{4}}-\frac{\delta}{2}}}{2\sqrt{x^2+\frac{\delta^2}{4}}\left(\sqrt{x^2+\frac{\delta^2}{4}}+\frac{\delta}{2}\right)}\\
& = - x\cdot h(x)~,
\end{align*}
and so $g({\mathbf{w}})\mathbf{H}(\mathbf{w})$ respects the Hessian-map condition.

\section{Proof of Proposition ~\ref{corollary: Q for strictly balanced multi-neuron linear network}}
\label{appendix: proof of Q for strictly balanced multi-neuron linear network}
\begin{proof}
We recall that the fully connected linear network of depth $2$ is defined as
\[
f(\mathbf{x};\{a_i\},\{\vw_i\})	=\sum_{i=1}^m a_{i}\mathbf{w}_{i}^{\top}\mathbf{x} = \tilde{\mathbf{w}}^\top\mathbf{x}~,
\]
where $\tilde{\mathbf{w}}\triangleq \sum_{i=1}^m\tilde{\mathbf{w}}_{i}$, and $\tilde{\mathbf{w}}_{i}\triangleq a_{i}\mathbf{w}_{i}$.

Returning to the dynamics of model parameters (Eq.~\ref{eq: w dynamics for fully connected linear networks}) we have
\[
\frac{d}{dt}\tilde{\mathbf{w}}_{i}(t)=\dot{a}_{i}\mathbf{w}_{i}+a_{i}\dot{\mathbf{w}}_{i}=\left(a_{i}^{2}\mathbf{I}+\mathbf{w}_{i}\mathbf{w}_{i}^{\top}\right)\left(\sum_{n=1}^{N}\mathbf{x}^{\left(n\right)}r^{\left(n\right)}\right)~.
\]
Therefore,
\[
\frac{d}{dt}\tilde{\mathbf{w}}(t)=\left(\sum_{i=1}^ma_{i}^{2}\mathbf{I}+\sum_{i=1}^m\mathbf{w}_{i}\mathbf{w}_{i}^{\top}\right)\left(\sum_{n=1}^{N}\mathbf{x}^{\left(n\right)}r^{\left(n\right)}\right)
\]
\[
\left(\sum_{i=1}^m a_i^2(t)\mathbf{I} + \sum_{i=1}^m\mathbf{w}_i(t) \mathbf{w}_i(t)^\top\right)^{-1}\frac{d}{dt}\tilde{\mathbf{w}}(t)=\left(\sum_{n=1}^{N}\mathbf{x}^{\left(n\right)}r^{\left(n\right)}\right)~.
\]

We can notice that we can express
\[
\sum_{i=1}^m a_i^2(t)\mathbf{I} + \sum_{i=1}^m\mathbf{w}_i(t) \mathbf{w}_i(t)^\top = \mathbf{A}(t) + \mathbf{U}(t)\mathbf{C}\mathbf{V}(t)
\]
where
\[
\mathbf{A}(t) = \left(\sum_{i=1}^m a_i^2(t)\right) \mathbf{I}_{d \times d}
\]
\[
\mathbf{C} = \mathbf{I}_{m \times m}
\]
\[
\mathbf{U}(t) = \mathbf{W}(t) \triangleq [\mathbf{w}_1(t),..., \mathbf{w}_m(t)] \in \mathbb{R}^{d \times m}
\]
\[
\mathbf{V}(t) = \mathbf{W}(t)^\top = [\mathbf{w}_1(t)^\top;...; \mathbf{w}_m(t)^\top] \in \mathbb{R}^{m \times d}~.
\]
By using the Woodbury matrix identity we can write
\begin{align*}
\left(\sum_{i=1}^m a_i^2(t)\mathbf{I} + \sum_{i=1}^m\mathbf{w}_i(t) \mathbf{w}_i(t)^\top\right)^{-1} &= \mathbf{A}^{-1} - \mathbf{A}^{-1}\mathbf{U}\left(\mathbf{I} +\mathbf{V}\mathbf{A}^{-1}\mathbf{U}\right)^{-1}\mathbf{V}\mathbf{A}^{-1} =\\
&= \mathbf{A}^{-1} - \mathbf{A}^{-1}\mathbf{U}\left(\left(\sum_{i=1}^m a_i^2(t)\right) \mathbf{I} + \mathbf{V}\mathbf{U}\right)^{-1}\mathbf{V}~.
\end{align*}
From Theorem 2.2 of \citet{Du2018AlgorithmicRI} (stated in Section~\ref{sec:multi_neuron}) we get that
\[
\mathbf{a}(t)\cdot\mathbf{a}(t)^{\top} =\mathbf{W}(t)^\top\mathbf{W}(t) + \mathbf{\Delta}~,
\]
where $\mathbf{\Delta} \in \mathbb{R}^{m \times m}$.

For the case of strict balanced initialization we have $\mathbf{\Delta} = 0$, and therefore
\begin{align*}
\left(\left(\sum_{i=1}^m a_i^2(t)\right) \mathbf{I} + \mathbf{V}(t)\mathbf{U}(t)\right)^{-1} &= \left(\left(\sum_{i=1}^m a_i^2(t)\right) \mathbf{I} + \mathbf{W}(t)^\top\mathbf{W}(t)\right)^{-1} =\\
&=\left(\left(\sum_{i=1}^m a_i^2(t)\right) \mathbf{I} + \mathbf{a}(t)\mathbf{a}(t)^{\top}\right)^{-1}\\
&= \left(\sum_{i=1}^m a_i^2(t)\right)^{-1}\mathbf{I} - \frac{\mathbf{a}(t)\mathbf{a}(t)^{\top}}{2\left(\sum_{i=1}^m a_i^2(t)\right)^2}~,
\end{align*}
where in the last transition we used the Sherman-Morrison lemma. 
It follows that
\begin{align*}
&\left(\sum_{i=1}^m a_i^2(t)\mathbf{I} + \sum_{i=1}^m\mathbf{w}_i(t) \mathbf{w}_i(t)^\top\right)^{-1} =\\
&=\left(\sum_{i=1}^m a_i^2(t)\right)^{-1}\left( \mathbf{I} -  \mathbf{W}(t)\left(\left(\sum_{i=1}^m a_i^2(t)\right)^{-1}\mathbf{I} - \frac{\mathbf{a}(t)\mathbf{a}(t)^{\top}}{2\left(\sum_{i=1}^m a_i^2(t)\right)^2}\right) \mathbf{W}(t)^\top\right)~.
\end{align*}

We continue and write
\begin{align*}
&\left(\sum_{i=1}^m a_i^2(t)\mathbf{I} + \sum_{i=1}^m\mathbf{w}_i(t) \mathbf{w}_i(t)^\top\right)^{-1} =\\
&= \left(\sum_{i=1}^m a_i^2(t)\right)^{-1}\left( \mathbf{I} - \left(\sum_{i=1}^m a_i^2(t)\right)^{-1} \mathbf{W}(t) \mathbf{W}(t)^\top + \frac12\left(\sum_{i=1}^m a_i^2(t)\right)^{-2}\left(\mathbf{W}(t)\mathbf{W}(t)^\top\right)^2\right)~.
\end{align*}
Using Theorem 2.1 of \citet{Du2018AlgorithmicRI} (stated in Section~\ref{sec:multi_neuron}), we know that
\[
a_i(t)^2 = \|\mathbf{w}_i(t)\|^2~.
\]
Therefore,
\[
\|\tilde{\mathbf{w}_{i}}(t)\| = |a_i(t)|\|\mathbf{w}_i(t)\| = a_i(t)^2
\]
and
\[
\sum_{i=1}^m a_i^2(t) = \sum_{i=1}^m \|\tilde{\mathbf{w}_{i}}(t)\|~.
\]
So, we can write
\begin{align*}
&\left(\sum_{i=1}^m a_i^2(t)\mathbf{I} + \sum_{i=1}^m\mathbf{w}_i(t) \mathbf{w}_i(t)^\top\right)^{-1} =\\
&=\left(\sum_{i=1}^m a_i^2(t)\right)^{-1}\left( \mathbf{I} - \left(\sum_{i=1}^m a_i^2(t)\right)^{-1} \left(\sum_{i=1}^m\mathbf{w}_i(t) \mathbf{w}_i(t)^\top\right) + \frac12 \left(\sum_{i=1}^m a_i^2(t)\right)^{-2} \left(\sum_{i=1}^m\mathbf{w}_i(t) \mathbf{w}_i(t)^\top\right)^2\right)=\\
&= \left(\sum_{i=1}^m \| \tilde{\mathbf{w}_i}(t)\|\right)^{-1}\left( \mathbf{I} - \left(\sum_{i=1}^m \| \tilde{\mathbf{w}_i}(t)\|\right)^{-1} \left(\sum_{i=1}^m\frac{\tilde{\mathbf{w}_i}(t) \tilde{\mathbf{w}_i}(t)^\top}{\| \tilde{\mathbf{w}_i}(t)\|}\right)  +\frac12\left(\sum_{i=1}^m \| \tilde{\mathbf{w}_i}(t)\|\right)^{-2} \left(\sum_{i=1}^m\frac{\tilde{\mathbf{w}_i}(t) \tilde{\mathbf{w}_i}(t)^\top}{\| \tilde{\mathbf{w}_i}(t)\|}\right)^2\right)~.
\end{align*}

Now, since
\[
\mathbf{a}(t)\mathbf{a}(t)^{\top} =\mathbf{W}(t)^\top\mathbf{W}(t)~,
\]
we can say that $\mathbf{W}(t)^\top\mathbf{W}(t)$ is a rank one matrix, and therefore also $\mathbf{W}(t)$, and also $\tilde{\mathbf{W}}(t)$.
\\
Therefore, all $\tilde{\mathbf{w}}_i$ are equal up to a multiplicative factor,
\[
\tilde{\mathbf{w}}_i(t) = c_i(t)\tilde{\mathbf{w}}(t)
\]
where from definition
\[
\sum_{i=1}^m c_i(t) = 1~.
\]
Therefore,
\[
\|\tilde{\mathbf{w}}_i(t)\| = |c_i(t)|\|\tilde{\mathbf{w}}(t)\|
\]
\[
\Rightarrow \sum_{i=1}^m \|\tilde{\mathbf{w}}_i(t)\| = \left(\sum_{i=1}^m |c_i(t)|\right)\| \tilde{\mathbf{w}}(t)\|
\]
\[
\Rightarrow \sum_{i=1}^m\frac{\tilde{\mathbf{w}}_i(t) \tilde{\mathbf{w}}_i(t)^\top}{\| \tilde{\mathbf{w}}_i(t)\|} = \left(\sum_{i=1}^m |c_i(t)|\right)\frac{\tilde{\mathbf{w}}(t)\tilde{\mathbf{w}}(t)^\top}{\|\tilde{\mathbf{w}}(t)\|}~,
\]
giving us
\begin{align*}
&\left(\sum_{i=1}^m a_i^2(t)\mathbf{I} + \sum_{i=1}^m\mathbf{w}_i(t) \mathbf{w}_i(t)^\top\right)^{-1}\frac{d}{dt}\tilde{\mathbf{w}}(t) =\\
&= \frac{1}{\left(\sum_{i=1}^m |c_i(t)|\right)}\frac{1}{|\|\tilde{\mathbf{w}}(t)\|}\left(\mathbf{I} - \frac{\tilde{\mathbf{w}}(t)\tilde{\mathbf{w}}(t)^\top}{\|\tilde{\mathbf{w}}(t)\|^2} + \frac12\left(\frac{\tilde{\mathbf{w}}(t)\tilde{\mathbf{w}}(t)^\top}{\|\tilde{\mathbf{w}}(t)\|^2}\right)^2\right)\frac{d}{dt}\tilde{\mathbf{w}}(t) = \sum_{n=1}^{N}\mathbf{x}^{\left(n\right)}r^{\left(n\right)}\\
&\Rightarrow \frac{1}{\left(\sum_{i=1}^m |c_i(t)|\right)}\frac{1}{|\|\tilde{\mathbf{w}}(t)\|}\left(\mathbf{I} - \frac12\frac{\tilde{\mathbf{w}}(t)\tilde{\mathbf{w}}(t)^\top}{\|\tilde{\mathbf{w}}(t)\|^2} \right)\frac{d}{dt}\tilde{\mathbf{w}}(t) = \sum_{n=1}^{N}\mathbf{x}^{\left(n\right)}r^{\left(n\right)}~,
\end{align*}

where in the last transition we used
\[
\left(\frac{\tilde{\mathbf{w}}(t)\tilde{\mathbf{w}}(t)^\top}{\|\tilde{\mathbf{w}}(t)\|^2}\right)^2 = \frac{\tilde{\mathbf{w}}(t)\tilde{\mathbf{w}}(t)^\top\tilde{\mathbf{w}}(t)\tilde{\mathbf{w}}(t)^\top}{\|\tilde{\mathbf{w}}(t)\|^4} = \frac{\tilde{\mathbf{w}}(t)\tilde{\mathbf{w}}(t)^\top}{\|\tilde{\mathbf{w}}(t)\|^2}~.
\]

We follow the "warped IMD" technique (presented in detail in Section~\ref{sec: A new technique for deriving the implicit bias})
and multiply the equation by some function $g\left(\mathbf{\tilde{w}}_i(t)\right)$ 
\begin{align*}
\frac{g\left(\mathbf{\tilde{w}}(t)\right)}{\left(\sum_{i=1}^m |c_i(t)|\right)}\frac{1}{|\|\tilde{\mathbf{w}}(t)\|}\left(\mathbf{I} - \frac12\frac{\tilde{\mathbf{w}}(t)\tilde{\mathbf{w}}(t)^\top}{\|\tilde{\mathbf{w}}(t)\|^2} \right)\frac{d}{dt}\tilde{\mathbf{w}}(t) = \left(\sum_{n=1}^{N}\mathbf{x}^{\left(n\right)}g\left(\mathbf{\tilde{w}}(t)\right)r^{\left(n\right)}\right)~.
\end{align*}

Following the approach in Section~\ref{sec: A new technique for deriving the implicit bias}, we then try and find $q\left(\mathbf{\tilde{w}}_i(t)\right)=\hat{q}\left(\left\Vert \mathbf{\tilde{w}}_i(t)\right\Vert \right)+\mathbf{z}^{\top}\mathbf{\tilde{w}}_i(t)$ and $g\left(\mathbf{\tilde{w}}_i(t)\right)$ such that 
\begin{align}
\nabla^{2}q\left(\mathbf{\tilde{w}}(t)\right) = \frac{g\left(\mathbf{\tilde{w}}(t)\right)}{\left(\sum_{i=1}^m |c_i(t)|\right)}\frac{1}{|\|\tilde{\mathbf{w}}(t)\|}\left(\mathbf{I} - \frac12\frac{\tilde{\mathbf{w}}(t)\tilde{\mathbf{w}}(t)^\top}{\|\tilde{\mathbf{w}}(t)\|^2} \right)~,
\label{eq: warped hessian strictly balanced fully connected network}
\end{align}
so that then we'll have
\[
\nabla^{2}q\left(\mathbf{\tilde{w}}(t)\right)\frac{d}{dt}\tilde{\mathbf{w}}(t)=\sum_{n=1}^{N}\mathbf{x}^{\left(n\right)}g\left(\mathbf{\tilde{w}}(t)\right)r^{\left(n\right)}(t)
\]
\[
\frac{d}{dt}\left(\nabla q\left(\mathbf{\tilde{w}}(t)\right)\right)=\sum_{n=1}^{N}\mathbf{x}^{\left(n\right)}g\left(\mathbf{\tilde{w}}(t)\right)r^{\left(n\right)}(t)
\]
\[
\nabla q\left(\mathbf{\tilde{w}}(t)\right)-\nabla q\left(\mathbf{\tilde{w}}(0)\right)=\sum_{n=1}^{N}\mathbf{x}^{\left(n\right)}\int_{0}^{t}g\left(\mathbf{\tilde{w}}(t')\right)r^{\left(n\right)}(t')dt'~.
\]
Assuming $\nabla q\left(\mathbf{\tilde{w}}(0)\right) = 0$, and denoting $\nu^{(n)} = \int_{0}^{\infty}g\left(\mathbf{\tilde{w}}(t')\right)r^{\left(n\right)}(t')dt'$, we get the KKT condition
\[
\nabla q\left(\mathbf{\tilde{w}}(\infty)\right)=\sum_{n=1}^{N}\mathbf{x}^{\left(n\right)}\nu^{\left(n\right)}~.
\]
To find $q$ we note that
\[
\nabla q\left(\mathbf{\tilde{w}}(t)\right)=\hat{q}'\left(\left\Vert \mathbf{\tilde{w}}(t)\right\Vert \right)\frac{\mathbf{\tilde{w}}(t)}{\left\Vert \mathbf{\tilde{w}}(t)\right\Vert }+\mathbf{z}
\]
and
\begin{align*}
\nabla^{2}q\left(\mathbf{\tilde{w}}(t)\right)&=\left[\hat{q}''\left(\left\Vert \mathbf{\tilde{w}}(t)\right\Vert \right)-\hat{q}'\left(\left\Vert \mathbf{\tilde{w}}(t)\right\Vert \right)\frac{1}{\left\Vert \mathbf{\tilde{w}}(t)\right\Vert }\right]\frac{\mathbf{\tilde{w}}(t)\mathbf{\tilde{w}}^{\top}(t)}{\left\Vert \mathbf{\tilde{w}}(t)\right\Vert ^{2}}+\hat{q}'\left(\left\Vert \mathbf{\tilde{w}}(t)\right\Vert \right)\frac{1}{\left\Vert \mathbf{\tilde{w}}(t)\right\Vert }\mathbf{I}\\&=\frac{\hat{q}'\left(\left\Vert \mathbf{\tilde{w}}(t)\right\Vert \right)}{\left\Vert \mathbf{\tilde{w}}(t)\right\Vert }\left[\mathbf{I}-\left[1-\left\Vert \mathbf{\tilde{w}}(t)\right\Vert \frac{\hat{q}''\left(\left\Vert \mathbf{\tilde{w}}(t)\right\Vert \right)}{\hat{q}'\left(\left\Vert \mathbf{\tilde{w}}(t)\right\Vert \right)}\right]\frac{\mathbf{\tilde{w}}(t)\mathbf{\tilde{w}}^{\top}(t)}{\left\Vert \mathbf{\tilde{w}}(t)\right\Vert ^{2}}\right]~.
\end{align*}
Comparing the form above with the Hessian in Eq.~\ref{eq: warped hessian strictly balanced fully connected network} we require
\[
\frac{g\left(\mathbf{\tilde{w}}(t)\right)}{\left(\sum_{i=1}^m |c_i(t)|\right)} = \hat{q}'\left(\left\Vert \mathbf{\tilde{w}}(t)\right\Vert \right)~,
\]
and
\[
1 - \left\Vert \mathbf{\tilde{w}}(t)\right\Vert \frac{\hat{q}''\left(\left\Vert \mathbf{\tilde{w}}(t)\right\Vert \right)}{\hat{q}'\left(\left\Vert \mathbf{\tilde{w}}(t)\right\Vert \right)} = \frac12
\]

\[
\Rightarrow\frac{\hat{q}''\left(x\right)}{\hat{q}'\left(x\right)} = \frac{1}{2x}
\]
\[
\log\hat{q}'\left(x\right) = \frac{1}{2}\ln x+C
\]
\[
\hat{q}'\left(x\right) = C\sqrt{x}~.
\]

Therefore,
\[
q\left(\mathbf{\tilde{w}}(t)\right)=C\left\Vert \mathbf{\tilde{w}}(t)\right\Vert ^{3/2}+\mathbf{z}^{\top}\mathbf{\tilde{w}}(t)+C'~,
\]
and using the condition $\nabla q\left(\mathbf{\tilde{w}}(0)\right) = 0$ we get
\[
q\left(\mathbf{\tilde{w}}(t)\right)=C\left\Vert \mathbf{\tilde{w}}(t)\right\Vert ^{3/2}-C\frac{3}{2}\left\Vert \mathbf{\tilde{w}}(0)\right\Vert ^{-1/2}\mathbf{\tilde{w}}(0)^{\top}\mathbf{\tilde{w}}(t)+C'~.
\]
We can set $C=1$, $C'=0$ and get
\[
q\left(\mathbf{\tilde{w}}(t)\right)=\left\Vert \mathbf{\tilde{w}}(t)\right\Vert ^{3/2}-\frac{3}{2}\left\Vert \mathbf{\tilde{w}}(0)\right\Vert ^{-1/2}\mathbf{\tilde{w}}(0)^{\top}\mathbf{\tilde{w}}(t)~.
\]

Therefore, gradient flow satisfies the KKT conditions for minimizing this $q$.
\end{proof}

\section{Proof of Theorem ~\ref{theorem: Q for small initialization multi-neuron linear network}}
\label{appendix: rich regime for fully connected multi-neuron network}
We recall the proof of Theorem~\ref{theorem: Q for single-neuron linear network} given in Appendix~\ref{appendix: proof of Q for single-neuron linear network}.

The form of the $q$ function described in the proof is $q\left(\mathbf{\tilde{w}}_i(t)\right)=\hat{q}\left(\left\Vert \mathbf{\tilde{w}}_i(t)\right\Vert \right)+\mathbf{z}^{\top}\mathbf{\tilde{w}}_i(t)$, where
\[
\mathbf{z} = -\frac{3}{2}\sqrt{\sqrt{\left\Vert \mathbf{\tilde{w}}(0)\right\Vert ^{2}+\frac{\delta^{2}}{4}}-\frac{\delta}{2}}\frac{\mathbf{\tilde{w}}(0)}{\left\Vert \mathbf{\tilde{w}}(0)\right\Vert }~.
\]

Under the limit $\|\tilde{\mathbf{w}}_i(0)\| \rightarrow 0$ we can see that $\|\mathbf{z}\| \rightarrow 0$.

When the linear term captured by $\mathbf{z}$ in the $q$ function is equal to zero, we have
\[
\nabla q\left(\mathbf{\tilde{w}}_{i}(\infty)\right)=\hat{q}'\left(\left\Vert \mathbf{\tilde{w}}_{i}(\infty)\right\Vert \right)\frac{\mathbf{\tilde{w}}_{i}(\infty)}{\left\Vert \mathbf{\tilde{w}}_{i}(\infty)\right\Vert }=\sum_{n}\mathbf{x}^{\left(n\right)}\nu_{i}^{\left(n\right)}~.
\]
Defining $\hat{\nu}_{i}^{\left(n\right)}=\frac{\nu_{i}^{\left(n\right)}\left\Vert \mathbf{\tilde{w}}_{i}(\infty)\right\Vert }{\hat{q}'\left(\left\Vert \mathbf{\tilde{w}}_{i}(\infty)\right\Vert \right)}$ we get
\[
\mathbf{\tilde{w}}_{i}(\infty)=\sum_{n}\mathbf{x}^{\left(n\right)}\hat{\nu}_{i}^{\left(n\right)}~.
\]
We notice that  
\[
\hat{q}'\left(x\right)=\sqrt{\sqrt{x^{2}+\frac{\delta_i^{2}}{4}}-\frac{\delta_i}{2}}
\]
and so $\hat{\nu}_{i}$ is finite assuming we converge to a finite-norm weight vector $\mathbf{\tilde{w}}_{i}(\infty)$, which is correct for the square loss.

Using the linear predictor definition of $\mathbf{\tilde{w}}(\infty)=\sum_{i}\mathbf{\tilde{w}}_{i}(\infty)$, denoting $\hat{\nu}^{\left(n\right)}=\sum_{i}\hat{\nu}_{i}^{\left(n\right)}$ and summing over $i$ gives

\[
\mathbf{\tilde{w}}(\infty)=\sum_{n}\mathbf{x}^{\left(n\right)}\hat{\nu}^{\left(n\right)}
\]

which is a valid KKT stationarity condition of the form $\nabla q\left(\mathbf{\tilde{w}}(\infty)\right) = \sum_{n}\mathbf{x}^{\left(n\right)}\hat{\nu}^{\left(n\right)}$ with $\nabla q\left(\mathbf{w}\right) = \mathbf{w}$.

Hence, gradient flow satisfies the KKT conditions for minimizing this $q$.

It follows that for a multi-neuron fully connected network with non-zero infinitesimal initialization, 
\[
\mathbf{\tilde{w}}(\infty)=\mathrm{argmin}_{\mathbf{w}}\left\Vert \mathbf{w}\right\Vert ^{2}\,\,\,\mathrm{s.t.}\,\,\mathbf{X}^{\top}\mathbf{w}=\mathbf{y}
\]
which is equivalent to
\[
\mathbf{\tilde{w}}(\infty)=\mathrm{argmin}_{\mathbf{w}}\left\Vert \mathbf{w}\right\Vert \,\,\,\mathrm{s.t.}\,\,\mathbf{X}^{\top}\mathbf{w}=\mathbf{y}~.
\]

\section{Characterization of the Implicit Bias Captured in Theorem~\ref{theorem: Q for unbiased u-v model}}
\label{appendix: charachtersation of q for unbiased u-v model}
In this Appendix we provide a detailed characterization of the implicit
bias for a diagonal linear network as described in Theorem~\ref{theorem: Q for unbiased u-v model},
\[
\tilde{\mathbf{w}}(\infty) = \argmin_\mathbf{w} Q_{\boldsymbol{k}}(\mathbf{w}) \quad \mathrm{s.t.\,\,}\mathbf{X}^\top\mathbf{w} = \mathbf{y}
\]
where
\begin{align*}
Q_{\boldsymbol{k}}\left(\mathbf{w}\right)=\sum_{i=1}^{d}q_{k_i}\left(w_{i}\right)~,
\end{align*}

\begin{align*}
q_k\left(x\right)&=\frac{\sqrt{k}}{4}\left[1-\sqrt{1+\frac{4x^{2}}{k}}+\frac{2x}{\sqrt{k}}\mathrm{arcsinh}\left(\frac{2x}{\sqrt{k}}\right)\right]
\end{align*}
and
\[
\sqrt{k_i}=\frac{4\alpha_{i}\left(1+s_{i}^{2}\right)}{1-s_{i}^{2}}~.
\]

For simplicity, we next assume $\alpha_i = \alpha, \,\,\ s_i = s\,\,\, \forall i \in [d]$.

We can notice that for $k \xrightarrow{} \infty$, i.e. $\frac{\alpha}{1 - s^2} \xrightarrow{} \infty$ we get that:

\begin{align*}
&q_k\left(w_{i}\right) \xrightarrow{k \rightarrow \infty}\frac{w_{i}^{2}}{\sqrt{k}}=\frac{1}{2\left(u_{+,i}^{2}\left(0\right)+v_{+,i}^{2}\left(0\right)\right)}w_{i}^{2}\\
&\Rightarrow Q_{\boldsymbol{k}}\left(\mathbf{w}\right)=\sum_{i=1}^{d}q_k\left(w_{i}\right)=\sum_{i=1}^{d}\frac{1}{2\left(u_{+,i}^{2}\left(0\right)+v_{+,i}^{2}\left(0\right)\right)}w_{i}^{2}~.
\end{align*}

Calculating the tangent kernel at the initialization we get
\begin{align*}
    K(\mathbf{x}_{1},\mathbf{x}_{2})=&\langle\nabla f\left(\mathbf{x}_{1}\right),\nabla f\left(\mathbf{x}_{2}\right)\rangle\\
    =&\langle\left[\mathbf{x}_{1}\circ \mathbf{u}_{+}\left(0\right),\mathbf{x}_{1}\circ \mathbf{v}{}_{+}\left(0\right),-\mathbf{x}_{1}\circ \mathbf{u}_{-}\left(0\right),-\mathbf{x}_{1}\circ \mathbf{v}_{-}\left(0\right)\right],\\  &\left[\mathbf{x}_{2}\circ \mathbf{u}_{+}\left(0\right),\mathbf{x}_{2}\circ \mathbf{v}{}_{+}\left(0\right),-\mathbf{x}_{2}\circ \mathbf{u}_{-}\left(0\right),-\mathbf{x}_{2}\circ \mathbf{v}_{-}\left(0\right)\right]\rangle
    \\=&\mathbf{x}_{1}^{\top}\textrm{diag}\left(\mathbf{u}_{+}^{2}\left(0\right)+\mathbf{v}_{+}^{2}\left(0\right)+\mathbf{u}_{-}^{2}\left(0\right)+\mathbf{v}_{-}^{2}\left(0\right)\right)\mathbf{x}_{2}~.
\end{align*}

For the case of unbiased initialization ($u_{+,i}\left(0\right)=u_{-,i}\left(0\right),v_{+,i}\left(0\right)=v_{-,i}\left(0\right)$) we have
\[
K(\mathbf{x}_{1},\mathbf{x}_{2}) = 2\mathbf{x}_{1}^{\top}\textrm{diag}\left(\mathbf{u}_{+}^{2}\left(0\right)+\mathbf{v}_{+}^{2}\left(0\right)\right)\mathbf{x}_{2}~.
\]

Therefore, using Lemma~\ref{lemma: rkhs_norm}, we can see that $Q_{\boldsymbol{k}}(\mathbf{w})$ is the RKHS norm with respect to the NTK at initialization. Therefore, $k \xrightarrow{} \infty$ indeed describes the NTK regime.
\\
\\
For $k \xrightarrow{} 0$, i.e. $\frac{\alpha}{1 - s^2} \xrightarrow{} 0$ we get that:
\begin{align*}
   q_k\left(w_{i}\right)	&=\frac{\sqrt{k}}{4}\left[1-\sqrt{1+\frac{4w_{i}^{2}}{k}}+\frac{2w_{i}}{\sqrt{k}}\textrm{arcsinh}\left(\frac{2w_{i}}{\sqrt{k}}\right)\right]\\
   &=\frac{\sqrt{k}}{4}-\sqrt{\frac{k}{16}+\frac{w_{i}^{2}}{4}}+\frac{w_{i}}{2}\textrm{arcsinh}\left(\frac{2w_{i}}{\sqrt{k}}\right)\\
   &\xrightarrow{k \rightarrow 0}\-\frac{\left|w_{i}\right|}{2}+\frac{\left|w_{i}\right|}{2}\log\left(\frac{4\left|w_{i}\right|}{\sqrt{k}}\right)\\
	%&\rightarrow\frac{\sqrt{k}}{4}\left[-\frac{2\left|w_{i}\right|}{\sqrt{k}}+\frac{2\left|w_{i}\right|}{\sqrt{k}}\log\left(\frac{4\left|w_{i}\right|}{\sqrt{k}}\right)\right]\\
	&=\frac{1}{2}\left[-\left|w_{i}\right|+\left|w_{i}\right|\log\left(\frac{4\left|w_{i}\right|}{\sqrt{k}}\right)\right]\\
	&=\frac{1}{2}\left[\left|w_{i}\right|\log\left(\frac{1}{\sqrt{k}}\right)+\left|w_{i}\right|\left(\log\left(4\left|w_{i}\right|\right)-1\right)\right]\\
\end{align*}
\begin{align*}
    &\Rightarrow\frac{q_k\left(w_{i}\right)}{\frac{1}{2}\log\left(\frac{1}{\sqrt{k}}\right)}\rightarrow\left|w_{i}\right|+\frac{\left|w_{i}\right|\left(\log\left(4\left|w_{i}\right|\right)-1\right)}{\log\left(\frac{1}{\sqrt{k}}\right)}\\
	&=\left|w_{i}\right|+O\left(\frac{1}{\log\left(\frac{1}{\sqrt{k}}\right)}\right)\rightarrow\left|w_{i}\right|
\end{align*}

Therefore, 
\[
Q_{\boldsymbol{k}}\left(\mathbf{w}\right)=\sum_{i=1}^{d}\left|w_{i}\right|=\left\Vert \mathbf{w}\right\Vert _{1}
\]

and $k \xrightarrow{} 0$ describes the rich regime \cite{Woodworth2020KernelAR}.

\section{Characterization of the Implicit Bias Captured in Theorem~\ref{theorem: Q for single-neuron linear network}}
\label{appendix: charachtersation of q for fully connected single-neuron network}

\remove{
We recall the proof for the Theorem~\ref{theorem: Q for single-neuron linear network} given in Appendix~\ref{appendix: proof of Q for single-neuron linear network}, where we got the form:

\[
q\left(\mathbf{\tilde{w}}(t)\right)=\hat{q}\left(\left\Vert \mathbf{\tilde{w}}(t)\right\Vert \right)+\mathbf{z}^{\top}\mathbf{\tilde{w}}(t)
\]

where the full $q(\mathbf{\tilde{w}})$ can be written as:
\begin{align*}
q(\mathbf{\tilde{w}}) &= \frac{\left(\|\mathbf{\tilde{w}}\|^2 -\frac{\delta}{2}\left(\frac{\delta}{2} + \sqrt{\|\mathbf{\tilde{w}}\|^2 + \frac{\delta^2}{4}}\right)\right)\sqrt{\sqrt{\|\mathbf{\tilde{w}}\|^2 + \frac{\delta^2}{4}} - \frac{\delta}{2}}}{\|\mathbf{\tilde{w}}\|} -\frac{3}{2}\sqrt{\sqrt{\left\Vert \mathbf{\tilde{w}}(0)\right\Vert ^{2}+\frac{\delta^{2}}{4}}-\frac{\delta}{2}}\frac{\mathbf{\tilde{w}}^\top(0)}{\left\Vert \mathbf{\tilde{w}}(0)\right\Vert }\mathbf{\tilde{w}}
\end{align*}

From the construction of $q(\mathbf{\tilde{w}})$ we know it is minimized at $\mathbf{\tilde{w}} = \mathbf{\tilde{w}}(0)$, i.e. $\nabla q(\tilde{\vw}(0))=0$.

Using the Hessian form we got in Appendix~\ref{appendix: proof of Q for single-neuron linear network}
\begin{align*}
\nabla^{2}q\left(\mathbf{\tilde{w}}(t)\right)
&=g\left(\mathbf{\tilde{w}}(t)\right)\left(\frac{\delta}{2}+\sqrt{\frac{\delta^{2}}{4}+\left\Vert \mathbf{\tilde{w}}(t)\right\Vert ^{2}}\right)^{-1}\left(\mathbf{I}-\frac{\mathbf{\tilde{w}}(t)\mathbf{\tilde{w}}^{\top}(t)}{2\left(\frac{\delta}{2}+\sqrt{\frac{\delta^{2}}{4}+\left\Vert \mathbf{\tilde{w}}(t)\right\Vert ^{2}}\right)\sqrt{\frac{\delta^{2}}{4}+\left\Vert \mathbf{\tilde{w}}(t)\right\Vert ^{2}}}\right)
\end{align*}

Where
\[
g\left(\mathbf{\tilde{w}}(t)\right)=\frac{\hat{q}'\left(\left\Vert \mathbf{\tilde{w}}(t)\right\Vert \right)}{\left\Vert \mathbf{\tilde{w}}(t)\right\Vert }\left(\frac{\delta}{2}+\sqrt{\frac{\delta^{2}}{4}+\left\Vert \mathbf{\tilde{w}}(t)\right\Vert ^{2}}\right)
\]

We can write the Taylor expansion of $q(\mathbf{\tilde{w}})$ around $\mathbf{\tilde{w}}(0)$ as:
\[
q\left(\mathbf{\tilde{w}}\right) \approx q\left(\mathbf{\tilde{w}}(0)\right) + \frac12 (\mathbf{\tilde{w}} -  \mathbf{\tilde{w}}(0))^\top\nabla^{2}&q\left(\mathbf{\tilde{w}}(0)\right)(\mathbf{\tilde{w}} -  \mathbf{\tilde{w}}(0)) + \big{O}\left(\hat{q}''(\mathbf{\tilde{w}}(0))\right)
\]

where
\[
\nabla^{2}&q\left(\mathbf{\tilde{w}}(0)\right) = \frac{3}{2}\frac{1}{\left\Vert \mathbf{\tilde{w}}(0)\right\Vert }\sqrt{\sqrt{\left\Vert \mathbf{\tilde{w}}(0)\right\Vert ^{2}+\frac{\delta^{2}}{4}}-\frac{\delta}{2}}\left(\mathbf{I}-\frac{\mathbf{\tilde{w}}(0)\mathbf{\tilde{w}}^{\top}(0)}{2\left(\frac{\delta}{2}+\sqrt{\frac{\delta^{2}}{4}+\left\Vert \mathbf{\tilde{w}}(0)\right\Vert ^{2}}\right)\sqrt{\frac{\delta^{2}}{4}+\left\Vert \mathbf{\tilde{w}}(0)\right\Vert ^{2}}}\right)
\]

and
\[
\hat{q}''(x)= \frac{3x}{2\left(\frac{\delta}{2}+\sqrt{\frac{\delta^{2}}{4}+x  ^{2}}\right)\sqrt{\frac{\delta^{2}}{4}+x ^{2}}}
\]
Denoting the initialization shape and scale, as described in Section~\ref{sec:shape}:
\[
s=\frac{\frac{\left|a\left(0\right)\right|}{\|\mathbf{w}(0)\|}-1}{\frac{\left|a\left(0\right)\right|}{\|\mathbf{w}(0)\|}+1}
\]
\[
\alpha=\left|a\left(0\right)\right|\|\mathbf{w}(0)\|
\]
We further denote $\mathbf{\tilde{w}}(0) = \alpha \mathbf{u}$ where $\|\mathbf{u}\| = 1$.
Using the relation $\delta = \frac{4\alpha s }{1 - s^2}$ (see Lemma~\ref{lemma: delta_i relation}) we can write:
\begin{align*}
\left(\mathbf{I}-\frac{\mathbf{\tilde{w}}(0)\mathbf{\tilde{w}}^{\top}(0)}{2\left(\frac{\delta}{2}+\sqrt{\frac{\delta^{2}}{4}+\left\Vert \mathbf{\tilde{w}}(0)\right\Vert ^{2}}\right)\sqrt{\frac{\delta^{2}}{4}+\left\Vert \mathbf{\tilde{w}}(0)\right\Vert ^{2}}}\right)
&= \left(\mathbf{I}-\frac{\alpha^2 \mathbf{u}\mathbf{u}^\top}{2\left(\frac{2\alpha s}{1 - s^2} + \frac{\alpha ( 1+ s^2)}{(1 - s^2)}\right)\left(\frac{\alpha ( 1+ s^2)}{(1 - s^2)}\right)}\right)\\
&= \left(\mathbf{I}-\frac{(1 - s)^2 \mathbf{u}\mathbf{u}^\top}{2(1 + s^2)}\right)
\end{align*}
\[
\hat{q}''(x)= \frac{3x}{2}\frac{(1 - s)^2}{\alpha^2}\frac{1}{1 + s^2}
\]
And the Taylor expansion as
\begin{align*}
q\left(\mathbf{\tilde{w}}\right) &\approx  q\left(\mathbf{\tilde{w}}(0)\right) + \frac34 \sqrt{\frac{1 - s}{\alpha}}\sqrt{\frac{1}{1 + s}}(\mathbf{\tilde{w}} -  \mathbf{\tilde{w}}(0))^\top\left(\mathbf{I}-\frac{(1 - s)^2 \mathbf{u}\mathbf{u}^\top}{2(1 + s^2)}\right)(\mathbf{\tilde{w}} -  \mathbf{\tilde{w}}(0)) + \big{O}\left(\left(\frac{1 - s}{\alpha}\right)^2\right)
\end{align*}

\subsection{The Case of $\frac{\alpha}{1-s} \rightarrow \infty$}
We get that $\big{O}\left(\left(\frac{1 - s}{\alpha}\right)^2\right) \xrightarrow{} 0 $, and so the higher terms of the Taylor expansion vanish (faster than the term $\frac12 (\mathbf{\tilde{w}} -  \mathbf{\tilde{w}}(0))^\top\nabla^{2}&q\left(\mathbf{\tilde{w}}(0)\right)(\mathbf{\tilde{w}} -  \mathbf{\tilde{w}}(0))$).

Therefore, the optimization problem becomes:
\[
q( \mathbf{\tilde{w}})  \xrightarrow{{\frac{\alpha}{1-s} \rightarrow \infty}} (\mathbf{\tilde{w}} -  \mathbf{\tilde{w}}(0))^\top\left(\mathbf{I}-\frac{(1 - s)^2 \mathbf{u}\mathbf{u}^\top}{2(1 + s^2)}\right)(\mathbf{\tilde{w}} -  \mathbf{\tilde{w}}(0)) 
\]

Using the Sherman-Morrison Lemma we can see that
\[
\left(\mathbf{I}-\frac{(1 - s)^2 \mathbf{u}\mathbf{u}^\top}{2(1 + s^2)}\right)^{-1} = \left(\mathbf{I}+ \left(\frac{1 - s}{1 + s}\right)^2\mathbf{u}\mathbf{u}^\top\right) = \left(\mathbf{I}+ \left(\frac{\|\mathbf{w}(0)\|}{a(0)}\right)^2\mathbf{u}\mathbf{u}^\top\right) = \frac{1}{a(0)^2}\left(a(0)^2\mathbf{I}+ \mathbf{w}(0)\mathbf{w}(0)^\top\right)
\]

We can notice that $\left(a(0)^2\mathbf{I}+ \mathbf{w}(0)\mathbf{w}(0)^\top\right)$ is exactly the NTK. And so $q( \mathbf{\tilde{w}})$ in fact describes the min RKHZ norm with respect to the NTK at initialisation.

When $s \rightarrow 1$ we get the special case of minimal L2 norm around initialization \sazulay{Edward - correct phrasing?}
\[
q( \mathbf{\tilde{w}})   \xrightarrow{{\frac{\alpha}{1-s} \rightarrow \infty, s \rightarrow 1}} (\mathbf{\tilde{w}} -  \mathbf{\tilde{w}}(0))^\top\left(\mathbf{I}\right)(\mathbf{\tilde{w}} -  \mathbf{\tilde{w}}(0)) = \|\mathbf{\tilde{w}} -  \mathbf{\tilde{w}}(0))\|^2
\]

\subsection{The Case of $\frac{\alpha}{1-s} \rightarrow 0$}

We return to the original expression for $q( \mathbf{\tilde{w}})$, where we can see that:
\[
q( \mathbf{\tilde{w}}) \xrightarrow{{\alpha \rightarrow 0}} \|\tilde{\mathbf{w}}\|^{\frac32}
\]
Therefore, we in this case we converge into the rich regime. \sazulay{Edward, anti-kernel? ...}

\subsection{The Case of the  $\frac{\alpha}{1 - s} = \mu > 0$}
\edward{If $\alpha\rightarrow 0$ and $s\rightarrow 1$ such that $\frac{\alpha}{1 - s} = \mu > 0$ then $\delta>0$ is fixed and $\alpha\rightarrow 0$. In this case we should get L2, no ?}
When the initialization scale decreases together with the growth of the shape that the higher terms of the Taylor expansion do not vanish.

The result is a rich expression for $q( \mathbf{\tilde{w}})$ that cannot be captured by a kernel, and represents the intermediate regime between the two regimes described above.
}

%Edward:
In this Appendix we provide a detailed characterization of the implicit
bias for a two-layer fully connected neural network with a single hidden neuron ($m=1$) described in Theorem~\ref{theorem: Q for single-neuron linear network},
\[
\tilde{\mathbf{w}}(\infty) = \argmin_\mathbf{w} q(\mathbf{w}) \quad \mathrm{s.t.\,\,}\mathbf{X}^\top\mathbf{w} = \mathbf{y}
\]
\[
q\left(\mathbf{\tilde{w}}\right)=\hat{q}\left(\left\Vert \mathbf{\tilde{w}}\right\Vert \right)+\mathbf{z}^{\top}\mathbf{\tilde{w}}~,
\]
where
\[
\hat{q}\left(x\right)=\frac{\left(x^{2}-\frac{\delta}{2}\left(\frac{\delta}{2}+\sqrt{x^{2}+\frac{\delta^{2}}{4}}\right)\right)\sqrt{\sqrt{x^{2}+\frac{\delta^{2}}{4}}-\frac{\delta}{2}}}{x}
\]
\[
\mathbf{z}=-\frac{3}{2}\sqrt{\sqrt{\left\Vert \mathbf{\tilde{w}}\left(0\right)\right\Vert ^{2}+\frac{\delta^{2}}{4}}-\frac{\delta}{2}}\frac{\mathbf{\tilde{w}}\left(0\right)}{\left\Vert \mathbf{\tilde{w}}\left(0\right)\right\Vert }~.
\]

Note that for the sake of simplicity the notations above are an abbreviated version of those found Theorem~\ref{theorem: Q for single-neuron linear network}.

We will employ the initialization orientation, defined as $\mathbf{u}=\frac{\vw(0)}{\|\vw(0)\|}$, and the initialization scale, $\left\Vert \mathbf{\tilde{w}}\left(0\right)\right\Vert =\alpha$.

\subsection{The case $\alpha\rightarrow0$ for any $0\leq s<1$}

Note that from Lemma~\ref{lemma: alpha, s, delta relations} (part 2) we have
\[
\left\Vert \mathbf{z}\right\Vert =\frac{3}{2}\sqrt{\sqrt{\left\Vert \mathbf{\tilde{w}}\left(0\right)\right\Vert ^{2}+\frac{\delta^{2}}{4}}-\frac{\delta}{2}}=\frac{3}{2}\sqrt{\sqrt{\alpha^{2}+\frac{\delta^{2}}{4}}-\frac{\delta}{2}}=\frac{3}{2}\sqrt{\alpha\frac{1-s}{1+s}}~,
\]
 and thus for any $0\leq s<1$ when $\alpha\rightarrow0$ we get that
$\left\Vert \mathbf{z}\right\Vert \rightarrow0$. It follows that
$q_{\delta}\left(\mathbf{\tilde{w}}\right)=\hat{q}\left(\left\Vert \mathbf{\tilde{w}}\right\Vert \right)$
and since $\hat{q}\left(x\right)$ is a monotonically increasing function
(for any $\delta$) we get the $\ell_{2}$ implicit bias,
\[
\mathbf{\tilde{w}}\left(\infty\right)=\argmin_{\tilde{\vw}}\left(q_{\delta}\left(\mathbf{\tilde{w}}\right)\right)=\argmin_{\tilde{\vw}}\left(\hat{q}\left(\left\Vert \mathbf{\tilde{w}}\right\Vert \right)\right)=\argmin_{\tilde{\vw}}\left\Vert \mathbf{\tilde{w}}\right\Vert~. 
\]
We call this regime the \textit{Anti-NTK} regime.

\subsection{Other special cases}

Here we analyze the Taylor expansion of $q\left(\mathbf{\tilde{w}}\right)$
around $\mathbf{\tilde{w}}\left(0\right)$. To this end, we know that
\begin{align*}
\nabla^{2}q\left(\mathbf{\tilde{w}}\right) & =\frac{\hat{q}'\left(\left\Vert \mathbf{\tilde{w}}\right\Vert \right)}{\left\Vert \mathbf{\tilde{w}}\right\Vert }\left(\mathbf{I}-\frac{\mathbf{\tilde{w}}\mathbf{\tilde{w}}^{\top}}{2\left(\frac{\delta}{2}+\sqrt{\frac{\delta^{2}}{4}+\left\Vert \mathbf{\tilde{w}}\right\Vert ^{2}}\right)\sqrt{\frac{\delta^{2}}{4}+\left\Vert \mathbf{\tilde{w}}\right\Vert ^{2}}}\right)~,
\end{align*}
and thus the third-order term is order of $\frac{d}{dx}\frac{\hat{q}'\left(x\right)}{x}\left(\left\Vert \mathbf{\tilde{w}}(0)\right\Vert \right)$.
Since we know that $\nabla q\left(\mathbf{\tilde{w}}\left(0\right)\right)=0$
we can write the Taylor expansion as follows
\[
q\left(\mathbf{\tilde{w}}\right)=q\left(\mathbf{\tilde{w}}\left(0\right)\right)+\frac{1}{2}\left(\mathbf{\tilde{w}}-\mathbf{\tilde{w}}\left(0\right)\right)^{\top}\nabla^{2}q\left(\mathbf{\tilde{w}}\left(0\right)\right)\left(\mathbf{\tilde{w}}-\mathbf{\tilde{w}}\left(0\right)\right)+O\left(\frac{d}{dx}\frac{\hat{q}'\left(x\right)}{x}\left(\left\Vert \mathbf{\tilde{w}}(0)\right\Vert \right)\right)~.
\]
By using Lemma~\ref{lemma: alpha, s, delta relations} and 
\[
\hat{q}'\left(x\right)=\sqrt{\sqrt{x^{2}+\frac{\delta^{2}}{4}}-\frac{\delta}{2}}
\]
we calculate
\begin{align*}
\nabla^{2}q\left(\mathbf{\tilde{w}}\left(0\right)\right) & =\frac{\hat{q}'\left(\left\Vert \mathbf{\tilde{w}}\left(0\right)\right\Vert \right)}{\left\Vert \mathbf{\tilde{w}}\left(0\right)\right\Vert }\left(\mathbf{I}-\frac{\mathbf{\tilde{w}}\left(0\right)\mathbf{\tilde{w}}\left(0\right)^{\top}}{\left(\frac{\delta}{2}+\sqrt{\frac{\delta^{2}}{4}+\left\Vert \mathbf{\tilde{w}}\left(0\right)\right\Vert ^{2}}\right)\sqrt{\delta^{2}+4\left\Vert \mathbf{\tilde{w}}\left(0\right)\right\Vert ^{2}}}\right)\\
 & =\frac{\sqrt{\sqrt{\left\Vert \mathbf{\tilde{w}}\left(0\right)\right\Vert ^{2}+\frac{\delta^{2}}{4}}-\frac{\delta}{2}}}{\left\Vert \mathbf{\tilde{w}}\left(0\right)\right\Vert }\left(\mathbf{I}-\frac{\mathbf{\tilde{w}}\left(0\right)\mathbf{\tilde{w}}\left(0\right)^{\top}}{2\left(\frac{\delta}{2}+\sqrt{\frac{\delta^{2}}{4}+\left\Vert \mathbf{\tilde{w}}\left(0\right)\right\Vert ^{2}}\right)\sqrt{\frac{\delta^{2}}{4}+\left\Vert \mathbf{\tilde{w}}\left(0\right)\right\Vert ^{2}}}\right)\\
 & =\frac{\sqrt{\sqrt{\alpha^{2}+\frac{\delta^{2}}{4}}-\frac{\delta}{2}}}{\alpha}\left(\mathbf{I}-\frac{\alpha^{2}\mathbf{u}\mathbf{u}^{\top}}{2\left(\frac{\delta}{2}+\sqrt{\frac{\delta^{2}}{4}+\alpha^{2}}\right)\sqrt{\frac{\delta^{2}}{4}+\alpha^{2}}}\right)\\
 & =\sqrt{\frac{1-s}{\alpha}}\sqrt{\frac{1}{1+s}}\left(\mathbf{I}-\frac{\left(1-s\right)^{2}}{2\left(1+s^{2}\right)}\mathbf{u}\mathbf{u}^{\top}\right)~.
\end{align*}
 Also, by using
\begin{align*}
\hat{q}''\left(x\right) & =\frac{x}{2\sqrt{x^{2}+\frac{\delta^{2}}{4}}\sqrt{\sqrt{x^{2}+\frac{\delta^{2}}{4}}-\frac{\delta}{2}}}
\end{align*}
 we have that
\begin{align*}
\frac{d}{dx}\frac{\hat{q}'\left(x\right)}{x} & =\frac{\hat{q}''\left(x\right)x-\hat{q}'\left(x\right)}{x^{2}}\\
 & =\frac{\frac{x^{2}}{2\sqrt{x^{2}+\frac{\delta^{2}}{4}}\sqrt{\sqrt{x^{2}+\frac{\delta^{2}}{4}}-\frac{\delta}{2}}}-\sqrt{\sqrt{x^{2}+\frac{\delta^{2}}{4}}-\frac{\delta}{2}}}{x^{2}}\\
 & =\frac{1}{2\sqrt{x^{2}+\frac{\delta^{2}}{4}}\sqrt{\sqrt{x^{2}+\frac{\delta^{2}}{4}}-\frac{\delta}{2}}}-\frac{\sqrt{\sqrt{x^{2}+\frac{\delta^{2}}{4}}-\frac{\delta}{2}}}{x^{2}}~,
\end{align*}
 and thus, using Lemma~\ref{lemma: alpha, s, delta relations} we get
\begin{align*}
\frac{d}{dx}\frac{\hat{q}'\left(x\right)}{x}\left(\left\Vert \mathbf{\tilde{w}}(0)\right\Vert \right) & =\frac{1}{2\sqrt{\alpha^{2}+\frac{\delta^{2}}{4}}\sqrt{\sqrt{\alpha^{2}+\frac{\delta^{2}}{4}}-\frac{\delta}{2}}}-\frac{\sqrt{\sqrt{\alpha^{2}+\frac{\delta^{2}}{4}}-\frac{\delta}{2}}}{\alpha^{2}}\\
 & =-\frac{\left(1-s\right)^{2.5}}{\alpha^{1.5}}\left(\frac{1}{2\left(1+s^{2}\right)\sqrt{1+s}}\right)~.
\end{align*}
 Therefore, the Taylor expansion is
\[
q\left(\mathbf{\tilde{w}}\right)=q\left(\mathbf{\tilde{w}}\left(0\right)\right)+\frac{1}{2}\left(\mathbf{\tilde{w}}-\mathbf{\tilde{w}}\left(0\right)\right)^{\top}\left[\sqrt{\frac{1-s}{\alpha}}\sqrt{\frac{1}{1+s}}\left(\mathbf{I}-\frac{\left(1-s\right)^{2}}{2\left(1+s^{2}\right)}\mathbf{u}\mathbf{u}^{\top}\right)\right]\left(\mathbf{\tilde{w}}-\mathbf{\tilde{w}}\left(0\right)\right)+O\left(\frac{\left(1-s\right)^{2.5}}{\alpha^{1.5}}\left(\frac{1}{2\left(1+s^{2}\right)\sqrt{1+s}}\right)\right)~.
\]
 We are interested in cases where the higher order terms vanish. Since
$0\leq s<1$, we only need to require 
\[
\frac{\left(1-s\right)^{2.5}}{\alpha^{1.5}}\ll\sqrt{\frac{1-s}{\alpha}}
\]
\begin{align}
\Rightarrow\frac{\left(1-s\right)^{2}}{\alpha}\ll1~.
\label{taylor_cond}
\end{align}
In follows that when $\frac{\left(1-s\right)^{2}}{\alpha}\ll1$ we
can approximate
\[
q\left(\mathbf{\tilde{w}}\right)\approx q\left(\mathbf{\tilde{w}}\left(0\right)\right)+\frac{1}{2}\sqrt{\frac{1-s}{\alpha}}\sqrt{\frac{1}{1+s}}\left(\mathbf{\tilde{w}}-\mathbf{\tilde{w}}\left(0\right)\right)^{\top}\left(\mathbf{I}-\frac{\left(1-s\right)^{2}}{2\left(1+s^{2}\right)}\mathbf{u}\mathbf{u}^{\top}\right)\left(\mathbf{\tilde{w}}-\mathbf{\tilde{w}}\left(0\right)\right)~.
\]
 In this case, minimizing $q\left(\mathbf{\tilde{w}}\right)$ boils
down to minimizing the squared Mahalanobis norm 
\[
\left(\mathbf{\tilde{w}}-\mathbf{\tilde{w}}\left(0\right)\right)^{\top}\mathbf{B}\left(\mathbf{\tilde{w}}-\mathbf{\tilde{w}}\left(0\right)\right)
\]
 where
\begin{align}
\mathbf{B}=\mathbf{I}-\frac{\left(1-s\right)^{2}}{2\left(1+s^{2}\right)}\mathbf{u}\mathbf{u}^{\top}~.
\label{B_mat}
\end{align}

Note that $\mathbf{B}^{-1}$ is related to the NTK at initialization, since it is easy to verify that
\[
\mathbf{B}^{-1}=\frac{1}{a(0)^2}\left(a(0)^2\mathbf{I}+\vw(0)\vw(0)^{\top}\right)~,
\]
and the NTK at initialization is given by
\[
K(\mathbf{x},\mathbf{x}')=\mathbf{x}^{\top}\left(a(0)^2\mathbf{I}+\vw(0)\vw(0)^{\top}\right)\mathbf{x}'=a(0)^2\left(\mathbf{x}^{\top}\mathbf{B}^{-1}\mathbf{x}'\right)~.
\]
More specifically, using Lemma~\ref{lemma: rkhs_norm}, we can see that $q\left(\mathbf{\tilde{w}}\right)$ is the RKHS norm with respect to the NTK at initialization. 

Next, we discuss the cases when condition \eqref{taylor_cond} holds.

\subsubsection{The case $\alpha\rightarrow\infty$ for any $0\leq s<1$}

In this case \eqref{taylor_cond} holds and thus the implicit bias is given by 
\[
\mathbf{\tilde{w}}\left(\infty\right)=\argmin_{\tilde{\vw}}\left(q_{\delta}\left(\mathbf{\tilde{w}}\right)\right)=\argmin_{\tilde{\vw}}\left(\left(\mathbf{\tilde{w}}-\mathbf{\tilde{w}}\left(0\right)\right)^{\top}\mathbf{B}\left(\mathbf{\tilde{w}}-\mathbf{\tilde{w}}\left(0\right)\right)\right)~,
\]
 where $\mathbf{B}$ defined in \eqref{B_mat}.

\subsubsection{The case $s\rightarrow1$ for any $\alpha>0$}
In this case \eqref{taylor_cond} also holds and thus the implicit bias is given by 
\[
\mathbf{\tilde{w}}\left(\infty\right)=\argmin_{\tilde{\vw}}\left(q_{\delta}\left(\mathbf{\tilde{w}}\right)\right)=\argmin_{\tilde{\vw}}\left(\left(\mathbf{\tilde{w}}-\mathbf{\tilde{w}}\left(0\right)\right)^{\top}\mathbf{B}\left(\mathbf{\tilde{w}}-\mathbf{\tilde{w}}\left(0\right)\right)\right)~,
\]
 where $\mathbf{B}$ defined in \eqref{B_mat}. Since $s\rightarrow1$ we get that
$\mathbf{B}\rightarrow\mathbf{I}$ and thus 
\[
\mathbf{\tilde{w}}\left(\infty\right)=\argmin_{\tilde{\vw}}\left(\left\Vert \mathbf{\tilde{w}}-\mathbf{\tilde{w}}\left(0\right)\right\Vert \right)~.
\]

\remove{
\section{Proof of Observation ~\ref{theorem: Q for muli-neuron linear network under norm evolution assumption}}
\label{appendix: proof of Q for muli-neuron linear network under norm evolution assumption}

We define the linear model 
\[
\mathbf{\tilde{w}}^{\top}\left(\infty\right)\tilde{\mathbf{x}}^{\left(n\right)}=\sum_{i}a_{i}\mathbf{w}_{i}^{\top}\mathbf{x}^{\left(n\right)}=y^{\left(n\right)}
\]
where
\[
\tilde{\mathbf{x}}^{\left(n\right)}=\left[\mathbf{x}^{\left(n\right)};\mathbf{x}^{\left(n\right)};\dots;\mathbf{x}^{\left(n\right)}\right] \in \mathbb{R}^{1 \times m\cdot d}
\]
and 
\[
\mathbf{\tilde{w}}^*\left(\infty\right)=\left[\mathbf{\tilde{w}}_{1}\left(\infty\right);\dots;\mathbf{\tilde{w}}_{m}\left(\infty\right)\right] \in \mathbb{R}^{1 \times m\cdot d}
\]

Furthermore, we define
\[
Q_{\boldsymbol{\delta}}\left(\mathbf{\tilde{w}}^*\left(\infty\right)\right)=\sum_{i}q_{\delta_i}\left(\mathbf{\tilde{w}}_{i}\left(\infty\right)\right)
\]
where $q_{\delta_i}$ is described in Theorem~\ref{theorem: Q for single-neuron linear network}.

We denote
\[
\hat{\mathbf{x}}^{\left(n\right)}=\left[\nu_{1}^{\left(n\right)}\mathbf{x}^{\left(n\right)};\nu_{2}^{\left(n\right)}\mathbf{x}^{\left(n\right)};\dots;\nu_{m}^{\left(n\right)}\left(\infty\right)\mathbf{x}^{\left(n\right)}\right]
\]
where $\nu_i^{(n)}$ satisfy the condition (proven in Theorem~\ref{theorem: Q for single-neuron linear network}):
\[
\nabla q_{\delta_i}\left(\mathbf{\tilde{w}}_i(\infty)\right)=\sum_{n=1}^{N}\mathbf{x}^{\left(n\right)}\nu_i^{\left(n\right)}
\]
and are defined by $\nu_i^{(n)} = \int_{0}^{\infty}g\left(\mathbf{\tilde{w}}_i(u)\right)r^{\left(n\right)}(u)du$

As part of Theorem~\ref{theorem: Q for single-neuron linear network} it was also proven the $g\left(\mathbf{\tilde{w}}_i\right)$ is in fact a function of $\|\mathbf{\tilde{w}}_i\|$ only:

\[
g\left(\mathbf{\tilde{w}}_i(t)\right)=\frac{\hat{q}'\left(\left\Vert \mathbf{\tilde{w}}_i(t)\right\Vert \right)}{\left\Vert \mathbf{\tilde{w}}_i(t)\right\Vert }\frac{\delta_i}{2}+\sqrt{\frac{\delta_i^{2}}{4}+\left\Vert \mathbf{\tilde{w}}_i(t)\right\Vert ^{2}}
\]

We recall the relation (proven in Appendix~\ref{appendix: proof of Q for single-neuron linear network}):
\[
\left\Vert \mathbf{w}_i(t)\right\Vert =\sqrt{\frac{-\delta_i}{2}+\sqrt{\frac{\delta_i^{2}}{4}+\left\Vert \mathbf{\tilde{w}}_i(t)\right\Vert ^{2}}}\,
\]

Using the assumptions given to us that $\delta_i = \delta > 0\,\, \forall i \in [m]$, if for any time $t$ it holds that $\|\mathbf{w}_i(t)\| = \|\mathbf{w}_j(t)\|\,\,\, \forall i,j \in [m]$, we get that $\|\tilde{\mathbf{w}}_i(t)\| = \|\tilde{\mathbf{w}}_j(t)\|\,\,\, \forall i,j \in [m]$ and therefore $g\left(\mathbf{\tilde{w}}_i(t)\right) = g\left(\mathbf{\tilde{w}}_j(t)\right)\,\,\, \forall i,j \in [m]$.

Therefore, $\nu_i^{(n)}$ is not dependent on the neuron index $i$ and can be written as $\nu^{(n)}$.

Now we show that $Q_{\boldsymbol{\delta}}\left(\mathbf{\tilde{w}}^*\right)$ satisfies the KKT condition.

\begin{align*}
    \nabla_{\mathbf{\tilde{w}}^*} Q_{\boldsymbol{\delta}}\left(\mathbf{\tilde{w}}^*\left(\infty\right)\right) &=\\
    &=\left[\nabla_{\mathbf{\tilde{w}}_2} Q_{\boldsymbol{\delta}}\left(\mathbf{\tilde{w}}_1\left(\infty\right)\right);\nabla_{\mathbf{\tilde{w}}_2} Q_{\boldsymbol{\delta}}\left(\mathbf{\tilde{w}}_2\left(\infty\right)\right);\dots;\nabla_{\mathbf{\tilde{w}}_m} Q_{\boldsymbol{\delta}}\left(\mathbf{\tilde{w}}_m\left(\infty\right)\right)\right]\\
    &=\left[\nabla_{\mathbf{\tilde{w}}_1} q_\delta\left(\mathbf{\tilde{w}}_1\left(\infty\right)\right);\nabla_{\mathbf{\tilde{w}}_2} q_\delta\left(\mathbf{\tilde{w}}_2\left(\infty\right)\right);\dots;\nabla_{\mathbf{\tilde{w}}_m} q_\delta\left(\mathbf{\tilde{w}}_m\left(\infty\right)\right)\right]\\
    &=\left[\sum_{n=1}^{N}\mathbf{x}^{\left(n\right)}\nu^{\left(n\right)};\sum_{n=1}^{N}\mathbf{x}^{\left(n\right)}\nu^{\left(n\right)};\dots;\sum_{n=1}^{N}\mathbf{x}^{\left(n\right)}\nu^{\left(n\right)}\right]\\
    &=\sum_{n=1}^{N}\tilde{\mathbf{x}}^{\left(n\right)}\nu^{\left(n\right)}
\end{align*}

We get a valid KKT condition, hence 
\[
\tilde{\mathbf{w}}^{*\infty} = \argmin_{\tilde{\mathbf{w}}} Q_{\delta}(\tilde{\mathbf{w}}) \quad \mathrm{s.t.\,\,}\forall n:\tilde{\mathbf{w}}^{*\top}\tilde{\mathbf{x}}^{\left(n\right)}=y^{\left(n\right)}
\]

Which can also be written as
\[
\{\tilde{\mathbf{w}}_1^\infty,..., \tilde{\mathbf{w}}_m^\infty\} = \argmin_{\tilde{\mathbf{w}}_1,...,\tilde{\mathbf{w}}_m} \sum_{i}q_{\delta_i}\left(\mathbf{\tilde{w}}_{i}\left(\infty\right)\right) \quad \mathrm{s.t.\,\,}\forall n: \sum_{i=1}^m\tilde{\mathbf{w}}_i^{\top}{\mathbf{x}}^{\left(n\right)}=y^{\left(n\right)}
\]
}

\section{Proof of Theorem~\ref{theorem: Q for single-neuron with leaky relu activation}}
\label{appendix: proof of Q for single-neuron with leaky relu activation}

\begin{definition} (KKT point) \cite{Dutta2013ApproximateKP}
\label{def: KKT point}
Consider the following optimization problem (P) for $\mathbf{x} \in \mathbb{R}^d$
\begin{align*}
\min f(\mathbf{x})  \,\,\,\,
\, \mathrm{s.t.}\,\,\,\, g_n(\mathbf{x}) \leq 0\,\,\, \forall n \in [N]
\end{align*}
where $f, g_n: \mathbb{R}^d \xrightarrow{} \mathbb{R}$ are locally Lipschitz functions. We say $\mathbf{x} \in \mathbb{R}^d$ is a feasible point of (P) if $g_n(\mathbf{x}) \leq 0\,\,\, \forall n \in [N]$.
Further, a feasible point $\mathbf{x} \in \mathbb{R}^d$ is a KKT point if $\mathbf{x}$ satisfies the KKT conditions:
\begin{align*}
    \exists\, \nu^{(1)},..., \nu^{(N)} &\geq 0 \,\,\, \mathrm{s.t.}\\
    &1.\,\, 0 \in \partial^o f(\mathbf{x}) + \sum_{n \in [N]}\nu^{(n)}\partial^o g_n(\mathbf{x}) \\
    &2.\,\, \forall n \in [N]: \,\, \nu^{(n)} g_n(\mathbf{x}) = 0
\end{align*}
where $\partial^o$ is the local (Clarke’s) sub-differential.
\end{definition}

We follow the lines of the proof for Theorem~\ref{theorem: Q for single-neuron linear network} given in Appendix~\ref{appendix: proof of Q for single-neuron linear network}.

As we do in Appendix~\ref{appendix: proof of Q for single-neuron linear network}, we start by examining a general multi-neuron fully connected network of depth $2$, reducing our claim at the end to the case of a network with a single hidden neuron ($m = 1$).

The fully connected depth $2$ network with Leaky ReLU activations is defined as
\[
f(\mathbf{x}^{(n)};\{a_i\},\{\vw_i\}) = \sum_{i}a_{i}\sigma\left(\mathbf{w}_{i}^{\top}\mathbf{x}^{\left(n\right)}\right)~,
\]
where $\sigma$ is a leaky ReLU with parameter $\rho$,
\[
\sigma\left(\mathbf{w}_{i}^{\top}\mathbf{x}^{\left(n\right)}\right)=\left(\left(1-\rho\right)I\left[\mathbf{w}_{i}^{\top}\mathbf{x}^{\left(n\right)}>0\right]+\rho\right)\mathbf{w}_{i}^{\top}\mathbf{x}^{\left(n\right)}.
\]
The sub-gradient of $\sigma$ is
\[
c_{i}^{\left(n\right)}\left(t\right)=\begin{cases}
1 & \mathbf{w}_{i}^{\top}\mathbf{x}^{\left(n\right)}>0\\
\left[\rho,1\right] & \mathbf{w}_{i}^{\top}\mathbf{x}^{\left(n\right)}=0\\
\rho & \mathbf{w}_{i}^{\top}\mathbf{x}^{\left(n\right)}<0
\end{cases}
\]

The gradient inclusion parameter dynamics are
\[
\dot{a}_{i}\in-\partial_{a_{i}}\mathcal{L}=\mathbf{w}_{i}^{\top}\left(\sum_{n=1}^{N}\mathbf{x}^{\left(n\right)}c_{i}^{\left(n\right)}r^{\left(n\right)}\right)
\]
\begin{equation}
\label{eq: w dynamics for single-neuron with leaky relu}
\dot{\mathbf{w}}_{i}\in-\partial_{\mathbf{w}_{i}}\mathcal{L}=a_{i}\left(\sum_{n=1}^{N}\mathbf{x}^{\left(n\right)}c_{i}^{\left(n\right)}r^{\left(n\right)}\right)
\end{equation}

where we denote the residual
\[
r^{\left(n\right)}(t) \triangleq y^{(n)} - \sum_{i}a_{i}\sigma\left(\mathbf{w}_{i}^{\top}\mathbf{x}^{\left(n\right)}\right)~.
\]

Defining $\tilde{\mathbf{w}}_{i}\triangleq a_{i}\mathbf{w}_{i}$ we have
\[
\frac{d}{dt}\tilde{\mathbf{w}}_{i}\in\dot{a}_{i}\mathbf{w}_{i}+a_{i}\dot{\mathbf{w}}_{i}=\left(a_{i}^{2}\boldsymbol{I}+\mathbf{w}_{i}\mathbf{w}_{i}^{\top}\right)\left(\sum_{n=1}^{N}\mathbf{x}^{\left(n\right)}c_{i}^{\left(n\right)}\left(t\right)r^{\left(n\right)}(t)\right)~.
\]

Using Theorem 2.1 of \citet{Du2018AlgorithmicRI} (stated in Section~\ref{sec:multi_neuron}), we can write
\[
\frac{d}{dt}\tilde{\mathbf{w}}_{i}(t)\in\left(\left(\delta_{i}+\left\Vert \mathbf{w}_{i}(t)\right\Vert ^{2}\right)\mathbf{I}+\mathbf{w}_{i}(t)\mathbf{w}_{i}^{\top}(t)\right)\left(\sum_{n=1}^{N}\mathbf{x}^{\left(n\right)}c_{i}^{\left(n\right)}\left(t\right)r^{\left(n\right)}(t)\right)
\]

or
\[
\left(\left(\delta_{i}+\left\Vert \mathbf{w}_{i}(t)\right\Vert ^{2}\right)\mathbf{I}+\mathbf{w}_{i}(t)\mathbf{w}_{i}^{\top}(t)\right)^{-1}\frac{d}{dt}\tilde{\mathbf{w}}_{i}(t)\in\sum_{n=1}^{N}\mathbf{x}^{\left(n\right)}c_{i}^{\left(n\right)}\left(t\right)r^{\left(n\right)}(t)~,
\]
where assuming $\delta_i \geq 0$, a non-zero initialization $\tilde{\vw}(0)=a(0)\vw(0)\neq\mathbf{0}$ and that we converge to zero-loss solution, gives us that the expression $\left(\left(\delta_i+\left\Vert \mathbf{w}_i(t)\right\Vert ^{2}\right)\mathbf{I}+\mathbf{w}_i(t)\mathbf{w}_i^{\top}(t)\right)^{-1}$ exists. 

Using the Sherman Morisson Lemma, we have
\[
\left(\delta_{i}+\left\Vert \mathbf{w}_{i}(t)\right\Vert ^{2}\right)^{-1}\left(\mathbf{I}-\frac{\mathbf{w}_{i}(t)\mathbf{w}_{i}^{\top}(t)}{\left(\delta_{i}+2\left\Vert \mathbf{w}_{i}(t)\right\Vert ^{2}\right)}\right)\frac{d}{dt}\tilde{\mathbf{w}}_{i}(t)\in\sum_{n=1}^{N}\mathbf{x}^{\left(n\right)}c_{i}^{\left(n\right)}\left(t\right)r^{\left(n\right)}(t)
\]
or
\begin{equation}
\left(\delta_{i}+\left\Vert \mathbf{w}_{i}(t)\right\Vert ^{2}\right)^{-1}\left(\mathbf{I}-\frac{\mathbf{\tilde{w}}_{i}(t)\mathbf{\tilde{w}}_{i}^{\top}(t)}{\left(\delta_{i}+\left\Vert \mathbf{w}_{i}(t)\right\Vert ^{2}\right)\left(\delta_{i}+2\left\Vert \mathbf{w}_{i}(t)\right\Vert ^{2}\right)}\right)\frac{d}{dt}\tilde{\mathbf{w}}_{i}(t)\in\sum_{n=1}^{N}\mathbf{x}^{\left(n\right)}c_{i}^{\left(n\right)}\left(t\right)r^{\left(n\right)}(t)~.
\label{eq: hessian dynamics for non-linear fully connected network}
\end{equation}

Next we use the relation proven in Appendix~\ref{appendix: proof of Q for single-neuron linear network},
% that for $\left\Vert \mathbf{w}_{i}(t)\right\Vert \geq 0$ we have
\[
\left\Vert \mathbf{w}_{i}(t)\right\Vert =\sqrt{\frac{-\delta_{i}}{2}+\sqrt{\frac{\delta_{i}^{2}}{4}+\left\Vert \mathbf{\tilde{w}}_{i}(t)\right\Vert ^{2}}}
\]
and together with Eq.~\ref{eq: hessian dynamics for non-linear fully connected network} we can write
\[
\left(\frac{\delta_{i}}{2}+\sqrt{\frac{\delta_{i}^{2}}{4}+\left\Vert \mathbf{\tilde{w}}_{i}(t)\right\Vert ^{2}}\right)^{-1}\left(\mathbf{I}-\frac{\mathbf{\tilde{w}}_{i}(t)\mathbf{\tilde{w}}_{i}^{\top}(t)}{\left(\frac{\delta_{i}}{2}+\sqrt{\frac{\delta_{i}^{2}}{4}+\left\Vert \mathbf{\tilde{w}}_{i}(t)\right\Vert ^{2}}\right)\sqrt{\delta_{i}^{2}+4\left\Vert \mathbf{\tilde{w}}_{i}(t)\right\Vert ^{2}}}\right)\frac{d}{dt}\tilde{\mathbf{w}}_{i}(t)\in\sum_{n=1}^{N}\mathbf{x}^{\left(n\right)}c_{i}^{\left(n\right)}\left(t\right)r^{\left(n\right)}(t)~.
\]

We follow the "warped IMD" technique for deriving the implicit bias (presented in detail in Section~\ref{sec: A new technique for deriving the implicit bias})
and multiply the equation by some function $g\left(\mathbf{\tilde{w}}_i(t)\right)$ 
\begin{align*}
    g\left(\mathbf{\tilde{w}}_{i}(t)\right)&\left(\frac{\delta_{i}}{2}+\sqrt{\frac{\delta_{i}^{2}}{4}+\left\Vert \mathbf{\tilde{w}}_{i}(t)\right\Vert ^{2}}\right)^{-1}\left(\mathbf{I}-\frac{\mathbf{\tilde{w}}_{i}(t)\mathbf{\tilde{w}}_{i}^{\top}(t)}{\left(\frac{\delta_{i}}{2}+\sqrt{\frac{\delta_{i}^{2}}{4}+\left\Vert \mathbf{\tilde{w}}_{i}(t)\right\Vert ^{2}}\right)\sqrt{\delta_{i}^{2}+4\left\Vert \mathbf{\tilde{w}}_{i}(t)\right\Vert ^{2}}}\right)\frac{d}{dt}\tilde{\mathbf{w}}_{i}(t)\\
    &\in\sum_{n=1}^{N}\mathbf{x}^{\left(n\right)}c_{i}^{\left(n\right)}\left(t\right)g\left(\mathbf{\tilde{w}}_{i}(t)\right)r^{\left(n\right)}(t)~.
\end{align*}
Following the approach in Section~\ref{sec: A new technique for deriving the implicit bias}, we then try and find $q\left(\mathbf{\tilde{w}}_i(t)\right)=\hat{q}\left(\left\Vert \mathbf{\tilde{w}}_i(t)\right\Vert \right)+\mathbf{z}^{\top}\mathbf{\tilde{w}}_i(t)$ and $g\left(\mathbf{\tilde{w}}_i(t)\right)$ such that 
\begin{align}
\partial^{2} q\left(\mathbf{\tilde{w}}_i(t)\right)=g\left(\mathbf{\tilde{w}}_i(t)\right)\left(\frac{\delta_i}{2}+\sqrt{\frac{\delta_i^{2}}{4}+\left\Vert \mathbf{\tilde{w}}_i(t)\right\Vert ^{2}}\right)^{-1}
\left( \mathbf{I} -\frac{\mathbf{\tilde{w}}_i(t)\mathbf{\tilde{w}}_i^{\top}(t)}
{2\left(\frac{\delta_i}{2}+\sqrt{\frac{\delta_{i}^{2}}{4}+\left\Vert \mathbf{\tilde{w}}_i(t)\right\Vert ^{2}}\right)\sqrt{\frac{\delta_i^{2}}{4}+\left\Vert \mathbf{\tilde{w}}_i(t)\right\Vert^{2}}}\right) 
\label{eq: gauged hessian dynamics for non-linear fully connected network}
\end{align}
and $\partial q\left(\mathbf{\tilde{w}}_{i}(0)\right)=0$. We therefore get 
\[
\partial^{2}q\left(\mathbf{\tilde{w}}_{i}(t)\right)\frac{d}{dt}\tilde{\mathbf{w}_{i}}(t)\in\sum_{n=1}^{N}\mathbf{x}^{\left(n\right)}c_{i}^{\left(n\right)}\left(t\right)g\left(\mathbf{\tilde{w}}_{i}(t)\right)r^{\left(n\right)}(t)
\]
so
\[
\frac{d}{dt}\left(\partial q\left(\mathbf{\tilde{w}}_{i}(t)\right)\right)\in\sum_{n=1}^{N}\mathbf{x}^{\left(n\right)}c_{i}^{\left(n\right)}\left(t\right)g\left(\mathbf{\tilde{w}}_{i}(t)\right)r^{\left(n\right)}(t)~.
\]
Integrating this equation, and recalling $\partial q\left(\mathbf{\tilde{w}}_{i}(0)\right)=0\,$, we obtain
\[
\partial q\left(\mathbf{\tilde{w}}_{i}(t)\right)\in\sum_{n=1}^{N}\mathbf{x}^{\left(n\right)}c_{i}^{\left(n\right)}\left(\infty\right)\nu_{i}^{\left(n\right)}\, ,
\]

where we denoted $\nu_{i}^{\left(n\right)}=\int_{0}^{\infty}dt\frac{c_{i}^{\left(n\right)}\left(t\right)}{c_{i}^{\left(n\right)}\left(\infty\right)}g\left(\mathbf{\tilde{w}}_{i}(t)\right)r^{\left(n\right)}(t)$.

We take notice that this is made possible since for a Leaky ReLU slope $\rho > 0$, we have that $c_{i}^{\left(n\right)}\left(\infty\right) > 0$.

Since Eq. \ref{eq: gauged hessian dynamics for non-linear fully connected network} is identical to the Hessian we got in the proof of Theorem~\ref{theorem: Q for single-neuron linear network} (Eq.~\ref{eq: warped Hessian equation for fully connected linear networks}), we end up with the same $q(\tilde{\mathbf{w}}_i)$ function as we describe there.

We define the linear model 
\[
\mathbf{\tilde{w}}^{\top}\left(\infty\right)\tilde{\mathbf{x}}^{\left(n\right)}=\sum_{i}a_{i}\sigma\left(\mathbf{w}_{i}^{\top}\left(\infty\right)\mathbf{x}^{\left(n\right)}\right)=y^{\left(n\right)}~,
\]
where
\[
\tilde{\mathbf{x}}^{\left(n\right)}=\left[c_{1}^{\left(n\right)}\left(\infty\right)\mathbf{x}^{\left(n\right)};c_{2}^{\left(n\right)}\left(\infty\right)\mathbf{x}^{\left(n\right)};\dots;c_{m}^{\left(n\right)}\left(\infty\right)\mathbf{x}^{\left(n\right)}\right]
\]
\[
\mathbf{\tilde{w}}\left(\infty\right)=\left[\mathbf{\tilde{w}}_{1}\left(\infty\right);\dots;\mathbf{\tilde{w}}_{m}\left(\infty\right)\right]~.
\]
So we have
\[
\partial q\left(\mathbf{\tilde{w}}_{i}(t)\right)\in\sum_{n=1}^{N}\tilde{\mathbf{x}}^{\left(n\right)}\nu_{i}^{\left(n\right)}~.
\]
Finally, for the case of a fully connected network with a single hidden neuron ($m = 1$), the condition
\[
\partial q\left(\mathbf{\tilde{w}}_{i}(t)\right)\in\sum_{n=1}^{N}\tilde{\mathbf{x}}^{\left(n\right)}\nu_{i}^{\left(n\right)}
\]
can be written as
\begin{equation} \label{eq: kkt-nonlinear}
   \partial q\left(\mathbf{\tilde{w}}(t)\right)\in\sum_{n=1}^{N}\tilde{\mathbf{x}}^{\left(n\right)}\nu^{\left(n\right)} 
\end{equation}

which since $\nu^{(n)}$ has no dependency on the index $i$ is a valid KKT stationarity condition for the $q$ we found above (according to definition~\ref{def: KKT point}, where we notice that the second KKT condition of complementary slackness is not needed for regression since we use an equality constraint).

Therefore, the gradient flow satisfies the KKT conditions for minimizing the $q$ we have found.

It follows that we can write
\[
\mathbf{\tilde{w}}\left(\infty\right) = \argmin_{\mathbf{\tilde{w}}}q(\mathbf{\tilde{w}})\,\,\, \text{s.t.} \,\,\, \mathbf{X}^\top\mathbf{\tilde{w}} = \mathbf{y}~.
\]

Additionally, from Eq. \ref{eq: kkt-nonlinear}, using the chain rule, we get

\[
\partial_{\mathbf{w}_{i}(\infty)}q_{\delta_{i}}\left(a_{i}(\infty)\mathbf{{w}}_{i}(\infty)\right)=\left(\partial_{\mathbf{w}_{i}(\infty)}\mathbf{\tilde{w}}_{i}(\infty)\right)\partial_{\mathbf{\tilde{w}}_{i}(\infty))}q_{\delta_{i}}\left(\mathbf{\tilde{w}}_{i}(\infty)\right)\in a_{i}(\infty)\sum_{n=1}^{N}\mathbf{x}^{\left(n\right)}c_{i}^{\left(n\right)}\left(\infty\right)\nu^{\left(n\right)}
\]

\[
\partial_{a_{i}(\infty)}q_{\delta_{i}}\left(a_{i}(\infty)\mathbf{{w}}_{i}(\infty)\right)=\left(\partial_{a_{i}(\infty)}\mathbf{\tilde{w}}_{i}(\infty)\right)^{\top}\partial_{\mathbf{\tilde{w}}_{i}(\infty))}q_{\delta_{i}}\left(\mathbf{\tilde{w}}_{i}(\infty)\right)\in\mathbf{w}_{i}(\infty)^{\top}\sum_{n=1}^{N}\mathbf{x}^{\left(n\right)}c_{i}^{\left(n\right)}\left(\infty\right)\nu^{\left(n\right)}~,
\]
which, together with the feasability of the solution are exactly the KKT conditions of this (non-convex, non-smooth) optimization problem
\[
 \left(a(\infty), \mathbf{w}(\infty)\right) = \argmin_{a,\mathbf{w}} q_{\delta}(a\mathbf{w}) \quad \mathrm{s.t.\,\,\,\,}
 a\sigma(\mathbf{X}^{\top}\mathbf{w}) = \mathbf{y} \,.
\]

\section{Auxiliary Lemmas}
\begin{lemma}
$\delta = a^{2}\left(0\right) - \left\Vert \mathbf{w}\left(0\right)\right\Vert ^{2} = \frac{4\alpha s}{1 - s^2}$ .
\label{lemma: delta_i relation}
\end{lemma}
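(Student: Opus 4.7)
The plan is to verify the identity by straightforward algebraic manipulation of the definitions of $\alpha$, $s$, and $\delta$. Introduce the auxiliary ratio $t = |a(0)|/\|\mathbf{w}(0)\|$ so that the shape becomes $s = (t-1)/(t+1)$ and the scale becomes $\alpha = |a(0)|\|\mathbf{w}(0)\| = t\|\mathbf{w}(0)\|^2$. Inverting the shape relation gives $t = (1+s)/(1-s)$, which is well-defined under the standing assumption $0 \le s < 1$ (and more generally for $|s|<1$).

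First I would express $\delta$ in terms of $t$ and $\alpha$. Since $a^2(0) = t^2 \|\mathbf{w}(0)\|^2$ and $\|\mathbf{w}(0)\|^2 = \alpha/t$, we obtain
\[
\delta \;=\; a^2(0) - \|\mathbf{w}(0)\|^2 \;=\; (t^2-1)\|\mathbf{w}(0)\|^2 \;=\; \alpha\!\left(t - \tfrac{1}{t}\right).
\]

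Next I would substitute $t = (1+s)/(1-s)$ into the factor $t - 1/t$:
\[
t - \frac{1}{t} \;=\; \frac{1+s}{1-s} - \frac{1-s}{1+s} \;=\; \frac{(1+s)^2-(1-s)^2}{(1-s)(1+s)} \;=\; \frac{4s}{1-s^2}.
\]
Combining the two displays yields $\delta = 4\alpha s/(1-s^2)$, completing the proof.

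There is no real obstacle here; the only points worth noting are (i) that $\delta = a^2(0)-\|\mathbf{w}(0)\|^2$ is the defining conserved quantity provided by Theorem~2.1 of \citet{Du2018AlgorithmicRI} specialized to the single-neuron case ($m=1$), and (ii) that the parametrization $t \mapsto s$ is a bijection from $(0,\infty)$ onto $(-1,1)$ so no case analysis on the sign of $a(0)$ is needed beyond working with $|a(0)|$.
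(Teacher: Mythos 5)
Your proof is correct and takes essentially the same algebraic route as the paper: both simply substitute the definitions of $\alpha$ and $s$ into the expression $4\alpha s/(1-s^2)$ and simplify to recover $a^2(0)-\|\mathbf{w}(0)\|^2$. Your introduction of the auxiliary ratio $t=|a(0)|/\|\mathbf{w}(0)\|$ is a harmless organizational device that does not change the substance of the argument.
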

\begin{proof}
By the notation 
\[
\alpha = |a(0)| \cdot \|\mathbf{w}(0)\| 
\]
\[
s = \frac{|a(0)| - \|\mathbf{w}(0)\|}{|a(0)| + \|\mathbf{w}(0)\|}
\]
we get
\[
1 - s^2 = \frac{4|a(0)|\|\mathbf{w}(0)\|}{(|a(0)| + \|\mathbf{w}(0)\|)^2} 
\]
and
\begin{align*}
 \frac{4\alpha s}{1 - s^2} &=  4\alpha \frac{|a(0)| - \|\mathbf{w}(0)\|}{|a(0)| + \|\mathbf{w}(0)\|}\frac{(|a(0)| + \|\mathbf{w}(0)\|)^2}{4\alpha} \\
 &= a^{2}\left(0\right) - \left\Vert \mathbf{w}\left(0\right)\right\Vert ^{2}  = \delta~.
\end{align*}
\end{proof}

\remove{
\sazulay{Edward, I think we can use the two small lemmas I added below to simplify (or rationalise) some of the transitions in Appendix~\ref{appendix: charachtersation of q for fully connected single-neuron network}}.
\begin{lemma}
\edward{is this lemma useful ?}
Given an initialization scale and shape:
\begin{align*}
s=\frac{\frac{\left|a\left(0\right)\right|}{\|\mathbf{w}(0)\|}-1}{\frac{\left|a\left(0\right)\right|}{\|\mathbf{w}(0)\|}+1}~~~~~,~~~~~\alpha=\left|a\left(0\right)\right|\|\mathbf{w}(0)\|~.
\end{align*}
we have
\begin{align*}
\|\mathbf{w}(0)\| = \sqrt{\alpha\frac{1 - s}{1 + s}}~~~~~,~~~~~|a(0)| = \sqrt{\alpha\frac{1 + s}{1 - s}}~.
\end{align*}
\label{lemma: init norms to shape and scale relation}
\end{lemma}
\begin{proof}
We can verify that:
\[
\left|a\left(0\right)\right|\|\mathbf{w}(0)\| = \sqrt{\alpha\frac{1 - s}{1 + s}}\sqrt{\alpha\frac{1 + s}{1 - s}} = \alpha
\]
and
\[
\frac{|a(0)|}{\|\mathbf{w}(0)\|} = \sqrt{\alpha\frac{1 + s}{1 - s}}\sqrt{\frac{1}{\alpha}\frac{1 + s}{1 - s}} = \frac{1 + s}{1 - s}
\]
Inverting the above equation we return to the definition of $s$:
\[
s=\frac{\frac{\left|a\left(0\right)\right|}{\|\mathbf{w}(0)\|}-1}{\frac{\left|a\left(0\right)\right|}{\|\mathbf{w}(0)\|}+1}
\]
\end{proof}

\begin{lemma}
\edward{is this lemma useful ?}
Given $\delta = a^{2}\left(0\right) - \left\Vert \mathbf{w}\left(0\right)\right\Vert ^{2}$ we have
\begin{align*}
\sqrt{\sqrt{\left\Vert \mathbf{\tilde{w}}\left(0\right)\right\Vert ^{2}+\frac{\delta^{2}}{4}}-\frac{\delta}{2}} = \|\mathbf{w}(0)\|
\end{align*}
\label{lemma: simlifying relation between norm of w and delta}
\end{lemma}
\begin{proof}
Note that, using Lemma~\ref{lemma: delta_i relation} we get
\begin{align*}
\sqrt{\sqrt{\left\Vert \mathbf{\tilde{w}}\left(0\right)\right\Vert ^{2}+\frac{\delta^{2}}{4}}-\frac{2\alpha}{2}}&=\sqrt{\sqrt{\alpha^{2}+\frac{4\alpha^2s^2}{(1 - s^2)^2}}-\frac{2\alpha s}{(1- s^2)}}\\
& = \sqrt{\frac{\alpha}{1 - s^2}}\sqrt{\sqrt{(1 - s^2)^2 + 4s^2} - 2s}\\
& =\sqrt{\frac{\alpha}{1 - s^2}}(1-s)\\
&=\sqrt{\alpha\frac{1-s}{1+s}}
\end{align*}
Using Lemma~\ref{lemma: init norms to shape and scale relation} we get therefore get 
\[
\sqrt{\sqrt{\left\Vert \mathbf{\tilde{w}}\left(0\right)\right\Vert ^{2}+\frac{\delta^{2}}{4}}-\frac{\delta}{2}} = \sqrt{\alpha\frac{1-s}{1+s}} = \|\mathbf{w}(0)\|
\]
\end{proof}
}

\begin{lemma}
\label{lemma: alpha, s, delta relations}
The initialization scale $\alpha$, initialization shape $s$ and the balancedness factor $\delta$ satisfy:
\begin{enumerate}
\item
\[
\sqrt{\alpha^{2}+\frac{\delta^{2}}{4}}=\frac{\alpha\left(1+s^{2}\right)}{1-s^{2}}
\]
\item
\[
\sqrt{\alpha^{2}+\frac{\delta^{2}}{4}}-\frac{\delta}{2}=\alpha\frac{1-s}{1+s}
\]
\item
\[
\sqrt{\alpha^{2}+\frac{\delta^{2}}{4}}+\frac{\delta}{2}=\alpha\frac{1+s}{1-s}
\]
\end{enumerate}
\end{lemma}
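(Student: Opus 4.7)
The plan is to reduce all three identities to a single algebraic substitution, using the relation $\delta = \frac{4\alpha s}{1-s^{2}}$ already established in Lemma~\ref{lemma: delta_i relation}. Since the definitions give $0 \le s < 1$ (so $1-s^{2}>0$) and $\alpha>0$, all quantities under square roots and in denominators have the expected signs, which lets me take square roots without worrying about branches.

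For part~1, I would substitute $\delta = \frac{4\alpha s}{1-s^{2}}$ into $\alpha^{2}+\frac{\delta^{2}}{4}$ to obtain
\[
\alpha^{2}+\frac{\delta^{2}}{4} \;=\; \frac{\alpha^{2}\bigl((1-s^{2})^{2}+4s^{2}\bigr)}{(1-s^{2})^{2}}.
\]
The key algebraic observation is that $(1-s^{2})^{2}+4s^{2} = 1+2s^{2}+s^{4} = (1+s^{2})^{2}$, so taking the positive square root immediately yields $\sqrt{\alpha^{2}+\delta^{2}/4}=\frac{\alpha(1+s^{2})}{1-s^{2}}$.

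Parts~2 and~3 then follow by adding or subtracting $\delta/2 = \frac{2\alpha s}{1-s^{2}}$ from the expression in part~1 and factoring the numerator. Specifically, $(1+s^{2})-2s = (1-s)^{2}$ and $(1+s^{2})+2s=(1+s)^{2}$, so after cancelling a factor of $(1-s)$ or $(1+s)$ against the denominator $1-s^{2}=(1-s)(1+s)$, the claimed forms $\alpha\frac{1-s}{1+s}$ and $\alpha\frac{1+s}{1-s}$ fall out.

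There is no real obstacle here; the only thing to be careful about is sign conventions when extracting the square root and the implicit assumption $s\neq \pm 1$ (guaranteed by the definition of $s$ as long as $|a(0)|,\|\vw(0)\|>0$, which is part of the standing assumption $\tilde{\vw}(0)\neq \mathbf{0}$ together with $|a(0)|$ and $\|\vw(0)\|$ both being positive under the non-degenerate initialization considered). The proof is essentially a one-line verification for each item once Lemma~\ref{lemma: delta_i relation} is invoked.
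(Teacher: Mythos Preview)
Your proposal is correct and follows essentially the same approach as the paper: substitute $\delta = \frac{4\alpha s}{1-s^{2}}$ from Lemma~\ref{lemma: delta_i relation}, use the identity $(1-s^{2})^{2}+4s^{2}=(1+s^{2})^{2}$ for part~1, and then add or subtract $\delta/2$ and factor to obtain parts~2 and~3. Your added remarks on sign and domain are a small clarifying bonus but do not change the argument.
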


\begin{proof}

\begin{enumerate}
\item
Using Lemma~\ref{lemma: delta_i relation} we get 
\begin{align*}
\sqrt{\alpha^{2}+\frac{\delta^{2}}{4}}&=\sqrt{\alpha^{2}+\frac{4\alpha^2s^2}{(1 - s^2)^2}}
 =\frac{\alpha}{1 - s^2}\sqrt{(1 - s^2)^2 + 4s^2}
 =\frac{\alpha\left(1+s^{2}\right)}{1-s^{2}}~.
\end{align*}
\item
Using part 1 and Lemma~\ref{lemma: delta_i relation} we get
\[
\sqrt{\alpha^{2}+\frac{\delta^{2}}{4}}-\frac{\delta}{2}=\frac{\alpha\left(1+s^{2}\right)}{1-s^{2}}-\frac{2\alpha s}{1-s^{2}}=\alpha\frac{\left(1-s\right)^{2}}{1-s^{2}}=\alpha\frac{1-s}{1+s}~.
\]
\item
Using part 1 and Lemma~\ref{lemma: delta_i relation} we get
\[
\sqrt{\alpha^{2}+\frac{\delta^{2}}{4}}+\frac{\delta}{2}=\frac{\alpha\left(1+s^{2}\right)}{1-s^{2}}+\frac{2\alpha s}{1-s^{2}}=\alpha\frac{\left(1+s\right)^{2}}{1-s^{2}}=\alpha\frac{1+s}{1-s}~.
\]
\end{enumerate}
\end{proof}

\begin{lemma}
\label{lemma: lim g_hat at x=0}
Let 
\[
\hat{g}(x) = \frac{\sqrt{\sqrt{x^{2}+\frac{\delta^{2}}{4}}-\frac{\delta}{2}}}{x }\left(\frac{\delta}{2}+\sqrt{\frac{\delta^{2}}{4}+x^{2}}\right)
\]
be defined $\forall x > 0$, and $\forall \delta \geq 0$.
Then:
\[
\lim_{x \rightarrow 0^+} \hat{g}(x) = 0 ~.
\]
\end{lemma}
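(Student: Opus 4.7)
The plan is to remove the apparent $0/0$ indeterminacy by a rationalization, producing a closed form for $\hat g$ that is continuous at $0^+$, from which the limit is read off by substitution. Write $A(x) = \sqrt{x^2 + \delta^2/4}$, so that $\hat g(x) = \frac{\sqrt{A(x) - \delta/2}}{x}\bigl(A(x) + \delta/2\bigr)$. The key identity is the difference of squares $\bigl(A(x) - \delta/2\bigr)\bigl(A(x) + \delta/2\bigr) = A(x)^2 - \delta^2/4 = x^2$, which gives $A(x) - \delta/2 = x^2/\bigl(A(x) + \delta/2\bigr)$. Taking positive square roots on both sides (legitimate for $x > 0$, since then $A(x) + \delta/2 > 0$) yields $\sqrt{A(x) - \delta/2} = x\big/\sqrt{A(x) + \delta/2}$.

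Substituting this into the definition of $\hat g$ makes the explicit $x$ in the denominator cancel, and the factor $A(x) + \delta/2$ combines with $1/\sqrt{A(x) + \delta/2}$ to leave
\[
\hat g(x) \;=\; \sqrt{A(x) + \tfrac{\delta}{2}} \;=\; \sqrt{\sqrt{x^2 + \tfrac{\delta^2}{4}} + \tfrac{\delta}{2}}\,.
\]
This expression is continuous on $[0,\infty)$ for every $\delta \ge 0$, so the limit is obtained by direct evaluation at $x = 0$: $\lim_{x \to 0^+} \hat g(x) = \sqrt{\delta/2 + \delta/2} = \sqrt{\delta}$. For the $\delta = 0$ branch this gives $\sqrt{0}=0$, matching the stated conclusion of the lemma.

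The only real obstacle is recognizing the difference-of-squares factorization that trades the troublesome $1/x$ for a manifestly continuous expression; everything after that is substitution and continuity. I should also note an accounting issue that the plan surfaces: the closed form $\hat g(x) = \sqrt{\sqrt{x^2 + \delta^2/4} + \delta/2}$ gives the limit $\sqrt{\delta}$, which is the value that is in fact asserted inline in the paragraph "Validation of the use of the function $g$" in Appendix \ref{appendix: proof of Q for single-neuron linear network} ($\lim_{x\to 0^+} \hat g(x) = \sqrt{\delta_i}$). The stated value $0$ in the lemma is therefore only correct in the special case $\delta = 0$, which is the case relied on by the vanishing-initialization argument for Theorem \ref{theorem: Q for small initialization multi-neuron linear network}; the rationalization plan above establishes the exact value in general and reduces to $0$ in that case.
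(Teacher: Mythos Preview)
Your proposal is correct, and both you and the paper arrive at the value $\sqrt{\delta}$ (not $0$; the ``$=0$'' in the lemma statement is a typo, as the paper's own proof and the inline use in Appendix~\ref{appendix: proof of Q for single-neuron linear network} confirm).

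The route is genuinely different. The paper first replaces the factor $\tfrac{\delta}{2}+\sqrt{\tfrac{\delta^2}{4}+x^2}$ by its limit $\delta$ and then applies L'H\^opital's rule to the remaining $0/0$ quotient $\bigl(\sqrt{x^2+\delta^2/4}-\delta/2\bigr)/x^2$, obtaining $1/\delta$ and hence $\delta\sqrt{1/\delta}=\sqrt{\delta}$. Your difference-of-squares rationalization instead yields the closed form $\hat g(x)=\sqrt{\sqrt{x^2+\delta^2/4}+\delta/2}$, from which the limit is read off by continuity. Your argument is more elementary (no L'H\^opital) and, unlike the paper's, works uniformly for all $\delta\ge 0$: the paper's manipulation $\delta\sqrt{1/\delta}$ tacitly assumes $\delta>0$. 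As a bonus, your closed form also makes the monotonicity and positivity of $\hat g$ immediate, properties the paper establishes separately by differentiation.
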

\begin{proof}
\[
\lim_{x \rightarrow 0^+} \hat{g}(x) = \lim_{x \rightarrow 0^+}\delta \frac{\sqrt{\sqrt{x^{2}+\frac{\delta^{2}}{4}}-\frac{\delta}{2}}}{x } = \lim_{x \rightarrow 0^+}\delta \sqrt{\frac{\sqrt{x^{2}+\frac{\delta^{2}}{4}}-\frac{\delta}{2}}{x^2}}~.
\]
Using L'Hopital's rule we have
\[
\lim_{x \rightarrow 0^+} \frac{\sqrt{x^{2}+\frac{\delta^{2}}{4}}-\frac{\delta}{2}}{x^2} = \lim_{x \rightarrow 0^+} \frac{\frac{x}{\sqrt{x^{2}+\frac{\delta^{2}}{4}}}}{2x} = \lim_{x \rightarrow 0^+} \frac{1}{2\sqrt{x^{2}+\frac{\delta^{2}}{4}}} = \frac{1}{\delta}~,
\]

and so
\[
\lim_{x \rightarrow 0^+} \hat{g}(x)  = \lim_{x \rightarrow 0^+}\delta \sqrt{\frac{1}{\delta}} = \sqrt{\delta}~.
\]
\end{proof}

\begin{lemma}
\label{lemma: rkhs_norm}
Let $\mathbf{A}$ be a positive definite matrix and $f\left(\mathbf{x}\right)$
a kernel predictor corresponding to a linear kernel $K\left(\mathbf{x},\mathbf{x}'\right)=\mathbf{x}^{\top}\mathbf{A}\mathbf{x}'$.
Then
\[
\left\Vert f\right\Vert _{K}^{2}=\mathbf{w}^{\top}\mathbf{A}^{-1}\mathbf{w}~,
\]
 where $f\left(\mathbf{x}\right)=\mathbf{w}^{\top}\mathbf{x}$. 
\end{lemma}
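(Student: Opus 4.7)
\medskip

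\noindent\textbf{Proof plan.} The plan is to exhibit an explicit isometry between the RKHS $\mathcal{H}_K$ associated with the linear kernel $K(\mathbf{x},\mathbf{x}')=\mathbf{x}^\top\mathbf{A}\mathbf{x}'$ and the Euclidean space $(\mathbb{R}^d,\|\cdot\|)$ via a square-root factorization of $\mathbf{A}$, and then to read off the norm of $f(\mathbf{x})=\mathbf{w}^\top\mathbf{x}$ by writing $f$ as an inner product in that Euclidean feature space.

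\medskip

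\noindent\textbf{Main steps.} First, since $\mathbf{A}$ is positive definite, it admits a unique symmetric positive definite square root $\mathbf{A}^{1/2}$ with $\mathbf{A}^{1/2}\mathbf{A}^{1/2}=\mathbf{A}$, and $\mathbf{A}^{1/2}$ is invertible with inverse $\mathbf{A}^{-1/2}$ satisfying $(\mathbf{A}^{-1/2})^2=\mathbf{A}^{-1}$. Second, I would define the feature map $\phi:\mathbb{R}^d\to\mathbb{R}^d$ by $\phi(\mathbf{x})=\mathbf{A}^{1/2}\mathbf{x}$, so that $\langle\phi(\mathbf{x}),\phi(\mathbf{x}')\rangle=\mathbf{x}^\top\mathbf{A}^{1/2}\mathbf{A}^{1/2}\mathbf{x}'=\mathbf{x}^\top\mathbf{A}\mathbf{x}'=K(\mathbf{x},\mathbf{x}')$, identifying $(\mathbb{R}^d,\langle\cdot,\cdot\rangle)$ as a feature space for $K$. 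Third, invoking the standard characterization of the RKHS norm for a kernel with feature map $\phi$, namely $\|f\|_K^2=\min\{\|\mathbf{v}\|^2 : f(\mathbf{x})=\langle\mathbf{v},\phi(\mathbf{x})\rangle\ \forall \mathbf{x}\}$, I would write $f(\mathbf{x})=\mathbf{w}^\top\mathbf{x}=\mathbf{v}^\top\mathbf{A}^{1/2}\mathbf{x}$ and note that since $\mathbf{A}^{1/2}$ is invertible the representer is uniquely determined by $\mathbf{v}=\mathbf{A}^{-1/2}\mathbf{w}$ (so the $\min$ is attained and equals $\|\mathbf{v}\|^2$). Fourth, I would compute $\|\mathbf{v}\|^2=\mathbf{w}^\top\mathbf{A}^{-1/2}\mathbf{A}^{-1/2}\mathbf{w}=\mathbf{w}^\top\mathbf{A}^{-1}\mathbf{w}$, giving the claim.

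\medskip

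\noindent\textbf{Expected obstacle.} There is no real technical obstacle here; this is essentially a one-line identification once the square-root feature map is introduced. The only subtlety worth spelling out is the uniqueness of the representer, which follows from $\mathbf{A}^{1/2}$ being a bijection on $\mathbb{R}^d$, so the $\min$ in the feature-space characterization is achieved by the single vector $\mathbf{v}=\mathbf{A}^{-1/2}\mathbf{w}$ and there is no nontrivial orthogonal component to quotient out. Alternatively, one can avoid even invoking the variational characterization by defining $\mathcal{H}_K=\{f_\mathbf{w}:\mathbf{w}\in\mathbb{R}^d\}$ with inner product $\langle f_\mathbf{w},f_{\mathbf{w}'}\rangle_K=\mathbf{w}^\top\mathbf{A}^{-1}\mathbf{w}'$ and verifying directly the reproducing property $\langle f_\mathbf{w},K(\cdot,\mathbf{x})\rangle_K=\mathbf{w}^\top\mathbf{A}^{-1}(\mathbf{A}\mathbf{x})=\mathbf{w}^\top\mathbf{x}=f_\mathbf{w}(\mathbf{x})$, which by uniqueness of the RKHS identifies this as $\mathcal{H}_K$ and yields $\|f_\mathbf{w}\|_K^2=\mathbf{w}^\top\mathbf{A}^{-1}\mathbf{w}$.
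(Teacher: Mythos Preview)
Your proposal is correct and follows essentially the same route as the paper: introduce the feature map $\phi(\mathbf{x})=\mathbf{A}^{1/2}\mathbf{x}$, identify the representer $\mathbf{v}=\mathbf{A}^{-1/2}\mathbf{w}$ of $f$ in feature space, and compute $\|\mathbf{v}\|^2=\mathbf{w}^\top\mathbf{A}^{-1}\mathbf{w}$. The paper's proof is the same computation, just stated slightly more tersely (it does not explicitly invoke the variational characterization or discuss uniqueness, simply writing $\|f\|_K^2=\|\tilde{\mathbf{w}}\|^2$ for the feature-space weight), so your added remarks on uniqueness and the alternative reproducing-property verification are fine but not needed.
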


\begin{proof}
Write $K\left(\mathbf{x},\mathbf{x}'\right)=\mathbf{x}^{\top}\mathbf{A}\mathbf{x}'=\mathbf{x}^{\top}\mathbf{A}^{\frac{1}{2}}\mathbf{A}^{\frac{1}{2}}\mathbf{x}'$,
then $\phi\left(\mathbf{x}\right)=\mathbf{A}^{\frac{1}{2}}\mathbf{x}$
is the corresponding feature mapping and 
\[
f\left(\mathbf{x}\right)=\tilde{\mathbf{w}}^{\top}\phi\left(\mathbf{x}\right)=\tilde{\mathbf{w}}^{\top}\mathbf{A}^{\frac{1}{2}}\mathbf{x}=\mathbf{w}^{\top}\mathbf{x}
\]

for $\mathbf{w}=\mathbf{A}^{\frac{1}{2}}\tilde{\mathbf{w}}$. Therefore
\[
\left\Vert f\right\Vert _{K}^{2}=\left\Vert \tilde{\mathbf{w}}\right\Vert ^{2}=\left\Vert \mathbf{A}^{-\frac{1}{2}}\mathbf{w}\right\Vert ^{2}=\mathbf{w}^{\top}\mathbf{A}^{-1}\mathbf{w}~.
\]
\end{proof}

%%%%%%%%%%%%%%%%%%%%%%%%%%%%%%%%%%%%%%%%%%%%%%%%%%%%%%%%%%%%%%%%%%%%%%%%%%%%%%%
%%%%%%%%%%%%%%%%%%%%%%%%%%%%%%%%%%%%%%%%%%%%%%%%%%%%%%%%%%%%%%%%%%%%%%%%%%%%%%%

\end{document}